\declaretheorem[name=Theorem,numberwithin=section]{thm}
\declaretheorem[name=Lemma,numberwithin=section]{lem}
\declaretheorem[name=Proposition,numberwithin=section]{prop}
\declaretheorem[name=Remark,numberwithin=section]{rem}
\declaretheorem[name=Corollary,numberwithin=section]{cor}
\theoremstyle{definition}
\declaretheorem[name=Definition,numberwithin=section]{defn}
\declaretheorem[name=Example,numberwithin=section]{exmp}
\newtheorem{assumption}{Assumption}
\newenvironment{assumptionp}[1]{
  
  \assumptionalt
}{\endassumptionalt}
\newcommand{\N}{{\mathbb N}}
\renewcommand{\P}{{\mathbf P}}
\newcommand{\E}{{\mathbf E}}
\newcommand{\1}{{\mathbf 1}}
\newcommand{\cA}{{\mathcal A}}
\newcommand{\cE}{{\mathcal E}}
\newcommand{\cF}{{\mathcal F}}
\newcommand{\bP}{{\mathbf P}}
\newcommand{\tS}{{\tilde {S}}}
\newcommand{\tO}{\tilde{O}}
\newcommand{\nstdb}{{\texttt{NSt-DB}}}
\newcommand{\bv}{\textrm{Benchmark-Switches}}
\newcommand{\mv}{\textrm{Total Variation}}
\newcommand{\cwmv}{\textrm{Condorcet Winner Variation}}
\newcommand{\cv}{\textrm{Condorcet Winner Variation}}
\newcommand{\cwv}{\textrm{Condorcet Winner Switches}}
\newcommand{\pv}{\textrm{Preference Switches}}
\newcommand{\sigv}{\textrm{Significant Condorcet Winner Switches}}
\renewcommand{\a}{{a}}
\renewcommand{\b}{{b}}
\newcommand{\sm}{\setminus}
\def \papertitle{{\fontsize{16}{11}\selectfont ANACONDA: An Improved Dynamic Regret Algorithm \\ for Adaptive Non-Stationary Dueling Bandits}}
\newcommand{\red}[1]{\textcolor{red}{#1}}
\newcommand{\multiline}[1]{%
  \begin{tabularx}{\dimexpr\linewidth-\ALG@thistlm}[t]{@{}X@{}}
    #1
  \end{tabularx}
}
\newcommand{\abs}[1]{\left\vert #1 \right\vert}
\DeclarePairedDelimiter\floor{\lfloor}{\rfloor}
\newcommand{\mainalgo}{\texttt{ANACONDA}}
\newcommand{\phases}{\textrm{Phases}}
\newcommand{\hdelta}{\hat \delta}
\newcommand{\mA}{\cA_{\textrm{good}}}
\newcommand{\curA}{\cA_{\textrm{local}}}
\renewcommand{\E}{\mathbb{E}}   
\renewcommand{\P}{\mathbb{P}}   
\newcommand{\terma}{R_1(\ell)}
\newcommand{\termb}{R_2(\ell)}
\newcommand{\termone}{\tilde R_1(\ell)}   
\newcommand{\termtwo}{\tilde R_2(\ell)}   
\newcommand{\base}{{\texttt{CondaLet}}}
\newcommand{\Sw}{S^{\texttt{CW}}}     
\newcommand{\sigS}{\tS^{\texttt{CW}}} 
\newcommand{\Sp}{S^{\texttt{P}}}    
\newcommand{\Vt}{V} 
\newcommand{\Vcw}{{\tilde V}} 
\newcommand{\bern}{\textrm{Bern}}
\newcommand{\tstart}{t_0} 
\newcommand{\mstart}{m_0}
\newcommand{\badseg}{\textrm{bad}}
\newcommand{\const}{\tilde c}  
\newcommand{\htau}{\nu} 
\newcommand{\agood}{a^{g}_\ell}
\newcommand{\asafe}{a^{s}}
\newcommand{\DR}{\textup{\textrm{DR}}}
\newcommand{\SR}{\textup{\textrm{R}}}
\newcommand{\mylabel}[2]{#2\def\@currentlabel{#2}\label{#1}}
\title{\bfseries \papertitle \vspace{0.4cm}}
\author{
Thomas Kleine Buening%
\thanks{University of Oslo, Norway; {\tt thomkl@ifi.uio.no}.}
\and 
Aadirupa Saha%
\thanks{Toyota Technological Institute at Chicago (TTIC), USA; {\tt aadirupa@ttic.edu}.}
}
\date{\vspace{-0.4cm}}
\begin{document}

\maketitle

\begin{abstract}

We study the problem of non-stationary dueling bandits and provide the first adaptive dynamic regret algorithm for this problem. 
The only two existing attempts in this line of work fall short across multiple dimensions, including pessimistic measures of non-stationary complexity and non-adaptive parameter tuning that requires knowledge of the number of preference changes.
We develop an elimination-based rescheduling algorithm to overcome these shortcomings and show a near-optimal $\tO(\sqrt{\Sw T})$ dynamic regret bound, where $\Sw$ is the number of times the Condorcet winner changes in $T$ rounds. This yields the first near-optimal dynamic regret algorithm for unknown $\Sw$. 
We further study other related notions of non-stationarity for which we also prove near-optimal dynamic regret guarantees under 
additional assumptions on the underlying preference model. 
\end{abstract}

\section{Introduction}
\label{sec:intro}

Multi-Armed Bandits (MAB) \citep{thompson1933likelihood, robbins1952some, CsabaNotes18} are a well-studied online learning framework, which can be used to model online decision-making under uncertainty. Due to its exploration-exploitation tradeoff, the MAB framework is able to model situations such as clinical trials or job scheduling, where the goal is to keep selecting the `best item' in hindsight through sequentially querying one item at a time and subsequently observing a noisy reward feedback for the queried item \citep{Auer+02,Audibert+10,TS12,BubeckNotes+12}.

The MAB framework has been studied and generalized to different settings, among which a popular variant is known as {Dueling Bandits} (DB) which has gained much attention in the machine learning community over the last two decades \citep{Yue+12,Zoghi+14RCS,Zoghi+15, DTS}. DB are a preference-based variant of MAB in which every round the learner selects a pair of items (or arms) whereupon a noisy preference between the two items is observed. Such a model is particularly useful in applications, where direct numerical feedback is unavailable, but observed feedback or behavior implies a preference of one item over the other. For instance, the DB framework can be used for search engine optimization through interleaved comparisons~\citep{radlinski2013optimized, hofmann2011probabilistic}. 

%

In the classical stochastic dueling bandit problem, it is assumed that the underlying preferences between items remain fixed over time. However, this assumed stationarity of preferences is likely to be violated in many applications. For example, preferences over movies may change depending on the season or other external influences. 
Despite its strong practical motivation, regret minimization in non-stationary dueling bandits has only recently been studied for the fist time~\citep{SahaNDB, BengsNDB}.  
In contrast to the classical stochastic \citep{Yue+12, Zoghi+14RUCB, Busa21survey} and adversarial \cite{Adv_DB, ADB, VDB} dueling bandit problem, which measures performance in terms of static regret w.r.t.\ a fixed benchmark (or best item in hindsight), in non-stationary dueling bandits we consider the stronger \emph{dynamic regret}, which compares the algorithm's selection against a dynamic benchmark every round. 

In general, the achievable dynamic regret depends on the amount of non-stationarity in the environment. Here, prior work \cite{SahaNDB, BengsNDB} studied the number of changes in the preference matrix as a measure non-stationary complexity. 
While the number of such preference switches indeed relates to the hardness of the problem, it is, however, a pessimistic measure of non-stationarity. For example, a change in the preference between two widely suboptimal arms or a minor change in the preference matrix under which the optimal arm remains optimal should not significantly impact our ability to achieve low dynamic regret. To this end, one question that we aim to address in this paper for the paradigm of non-stationary dueling bandits is:
\begin{quote}
    \textbf{Q.1:} \emph{Can we guarantee low dynamic regret for stronger and more meaningful notions of non-stationarity?}
\end{quote} 
Moreover, prior work in non-stationary dueling bandits~\citep{SahaNDB, BengsNDB} assumes knowledge of the non-stationary complexity, i.e.\ prior knowledge of the total number of preference switches (or total variation), which is a highly impractical assumption. The second question we thus address is:

\begin{quote}
    \textbf{Q.2:} \emph{Can we achieve near-optimal dynamic regret in non-stationary dueling bandits adaptively, without the knowledge of the underlying non-stationary complexity?} 
\end{quote}

\subsection{Our Contributions} 
We answer these two questions affirmatively. Our main contribution is a new algorithm $\mainalgo$ that adaptively achieves near-optimal regret with respect to the number of 'best arm' switches---a measure that is sensitive only to the variations of the best arms in the preference sequence and indifferent to any other `background noise' due to suboptimal arms. More precisely, our contributions can be listed as follows:

%




\begin{itemize}[leftmargin=0pt, itemindent=10pt, labelwidth=5pt, labelsep=5pt, topsep=0pt,itemsep=2pt]
\item \textbf{Connecting Different Notions of Non-Stationary Complexity in DB.} We first give an overview over different notions of non-stationarity measures for dueling bandits and analyze their interdependencies towards a better understanding of the implications of one to another (\cref{sec:nst_measures}).

\item \textbf{Proposing Tighter Notions of Non-Stationarity (towards Q.1). } We propose three new notions of non-stationary complexity for dueling bandits: (i)~$\Sw$ which measures the number of Condorcet Winner Switches in the preference sequence, (ii)~$\Vcw$ which measures the preference variation of the Condorcet arms, and (iii)~$\sigS$ that counts only the `significant variations' in the Condorcet arms 
(\cref{sec:nst_measures}). 

The novelty of our proposed non-stationarity measures lies in capturing only the non-stationarity observed for the `best arms' of the preference sequences. They remain unaffected by any changes in the suboptimal arms, which of course captures a stronger notion of non-stationarity than simply counting the number of preference shifts $\Sp$, or total variation $\Vt$, of the preference sequence $\{\bP_t\}_{t \in [T]}$, as studied in prior work \cite{SahaNDB,BengsNDB}. In particular, we show that $\sigS \leq \Sw \leq \Sp$ and $\Vcw \leq \Vt$ justifying the strength of our proposed non-stationarity measures. 

\item \textbf{Adaptive Algorithm (towards Q.2).} 
Besides using weaker notions of non-stationary complexity, another drawback of existing work on non-stationary dueling bandit is that, in order to optimize dynamic regret, their algorithms require exact knowledge of the non-stationary complexity (e.g.\ $\Sp$ or $\Vt$), which is in practice of course expected to be unknown to the system/algorithm designed ahead of time. 
Our next main contribution lies in designing an adaptive algorithm ($\mainalgo$, \cref{alg:anaconda}) that does not require knowledge of any underlying non-stationary complexity---it can adapt to any unknown number of best arm switches $\Sw$ and yields a near-optimal regret bound of $\tO\big(\sqrt{\Sw T}\big)$ (\cref{thm:main_result}, Section~\ref{sec:algo}).\footnote{Here, $\tO$ notation hides logarithmic dependencies.}

\item \textbf{Improved and (Near-)Optimal Dynamic Regret Bounds.} Owing to the fact that $\Sw \leq \Sp$, our dynamic regret bounds can be much tighter compared to the previous results by \cite{SahaNDB,BengsNDB} which can only give a regret guarantee of $\tO\big(\sqrt{\Sp T}\big)$ (\cref{rem:cwv_vs_pv}). Further our regret bound is also provably order optimal in $T$ and $\Sw$ as justified in \cref{rem:lb}.    

\item \textbf{Better Guarantees for Structured Preferences.} Moreover, in \cref{sec:sign_switches} we discover a special class of preference matrices, those that respect a type of transitive property, for which we can prove even stronger dynamic regret guarantees of ${\tO\big(\sqrt{\sigS T}\big)}$ in terms of Significant CW Switches $\sigS$ and $\smash{\tO\big({{\Vcw}^{\nicefrac{1}{3}} T^{\nicefrac{2}{3}}}\big)}$ in terms of $\cwmv$ $\Vcw$. The optimality of these bounds is discussed in \cref{rem:rush} and \cref{rem:tv_under_STT_STI}. 

\end{itemize}

\subsection{Related Works}
\label{sec:rel}

The non-stationary MAB problem has been extensively studied for various non-stationarity measures, such as total variation~\citep{besbes+14, besbes+15}, distribution switches~\citep{garivier2011upper, allesiardo2017non, auer+19}, or best arm switches~\citep{Abasi2022A, Suk22}. Moreover, its study has been extended to more complex setups including linear bandits~\citep{cappe1, cappe2} and contextual MAB~\citep{luo+18, luo+19, wu+18}. We will particularly take inspiration from the recent advances of \citep{auer+19, Abasi2022A, Suk22} that were able to achieve near-optimal dynamic regret rates without knowledge of the number of distribution (or best arm) changes. 

While the non-stationary MAB problem has seen much attention in recent years, its DB counterpart remains widely unexplored. The only two earlier works that address the non-stationary dueling bandit problem are \cite{SahaNDB} and \cite{BengsNDB}. However, these works are limited in a) the weakness of the analyzed non-stationarity measures, namely, general preference switches or total variation (see Section~\ref{sec:nst_measures}), and b) in the fact that their algorithms require knowledge of the total amount of non-stationarity in advance, an unrealistic assumption. 
Here, we improve upon prior work by designing an adaptive algorithm $\mainalgo$ that does not require knowledge of the amount of non-stationarity in the environment and achieves near-optimal dynamic regret w.r.t.\ the number of Condorcet winner switches, a stronger notion of non-stationarity than general preference switches. A more detailed review of previous work that is related to the non-stationary MAB and DB problem is provided in Appendix~\ref{app:rel_works}.

\section{Problem Setting}\label{sec:prob}




We consider preference matrices $\bP \in [0,1]^{K \times K}$ such that $P(\a, \b)$ indicates the probability of arm $\a$ being preferred over arm $\b$. Here, $\bP$ satisfies $P(\a, \b) = 1- P(\a, \b)$ and $P(\a, \a) = 0.5$ for all $\a, \b \in [K]$. We say that $\a$ dominates $\b$ and write $\a \succ \b$ if $P(\a, \b) > 0.5$, i.e.\ arm $\a$ has a higher chance of winning over arm $\b$ in a duel $(\a, \b)$. 
A well-studied concept of a \emph{good benchmark arm} in dueling bandits is the \emph{Condorcet Winner} (CW): Given any preference matrix $\bP \in [0,1]^{K \times K}$, an arm $\a^* \in [K]$ is called a Condorcet winner of $\bP$ if $P(\a^*,\b) > 0.5$ for all $\b \in [K]\sm \{\a^*\}$ \citep{Zoghi+14RUCB,Komiyama+15,DTS, Busa21survey, VDB}.


Note that any preference matrix with a total ordering over arms invariably has a Condorcet winner. For example, assuming a total ordering $1 \succ 2 \succ \ldots \succ K$ implies that the Condorcet winner is arm~$1$. Any RUM-based preference matrix \cite{SG19,SG20,Az+13}, or more generally any $\bP$ that satisfies stochastic transitivity \citep{Yue+09}, always respects a total ordering. However, note that CW-based preference matrices consider a much bigger class of pairwise relations than total ordering. Despite this, in general a preference matrix might not have a Condorcet winner, which led to more general notions of benchmark arms in DB, such as the Borda winner \citep{ADB}, the Copeland winner \cite{Zoghi+15} or the von Neumann winner \citep{CDB,RDB}.  



\subsection{Non-Stationary Dueling Bandits (\nstdb)}
\label{subsec:nstdb}     

We consider a decision space of $K$ arms denoted by~$[K]$. At each round $t\in [T]$, the task of the learner is to select a pair of actions $(a_t, b_t) \in [K]\times [K]$, upon which a preference feedback $o_t(a_t, b_t) \sim \text{Ber}(P_t(a_t, b_t))$ is revealed to the learner according to the underlying preference matrix $\bP_t \in [0,1]^{K \times K}$. The sequence of preferences $\bP_1,\bP_2,\ldots, \bP_T$ is generated adversarially and for any such preference matrix $\bP_t$ we define
\begin{equation*}
    \delta_t (\a, \b) \coloneqq P_t (\a, \b) - 1/2
\end{equation*}
as the gap or preference-strength of arm $\a$ over arm $\b$ in round $t$. 
We here assume that every preference matrix $\bP_t$ has a Condorcet winner, which we denote by $\a_t^*$. 

\paragraph{Static Regret in Dueling Bandits.} In classical (stochastic) dueling bandits, where it is assumed that $\bP_1 = \ldots = \bP_T = \bP$ for some fixed preference matrix $\bP$, the performance of the learner is often measured w.r.t.\ the CW of $\bP$, described by the \emph{static regret} 
\begin{equation*}
\label{eq:sreg}
\SR(T) :=  \sum_{t=1}^T  \frac{\delta_t (\a^*, \a_t) + \delta_t(\a^*, \b_t) }{2}, 
\end{equation*}
where $\a^*$ is the CW of $\bP$ \citep{Yue+09, sui18survey,Busa21survey, SDB}. Note that here $\delta_t(\a^*, \a) = P_t(\a^*, \a) - 1/2$ essentially quantifies the net loss of arm $\a$ against the fixed benchmark arm~$\a^*$. 


However, regret with respect to any fixed benchmark (comparator arm) soon becomes meaningless when the underlying preference matrices are changing over time, since no single fixed arm may represent a reasonably good benchmark over $T$ rounds. Consider the following simple motivating example:
\begin{exmp}\label{eg:dyn_reg}

    Let $K=2$ and define 
    \begin{equation*}
    \bP_1 = 
    	\begin{bmatrix}
    		0.5 & 1\\
    		0 & 0.5
    	\end{bmatrix}
    , \hspace{0.5cm} \bP_2 = 
    	\begin{bmatrix}
    		0.5 & 0\\
    		1 & 0.5
    	\end{bmatrix} .
    \end{equation*}
    Now, assume a preference sequence such that $\bP_t = \bP_1$ for the first $\floor{T/2}$ rounds and $\bP_t = \bP_2$ for the last $\lceil T/2\rceil$ rounds. We see that a policy that plays any of the two arms all $T$ rounds, e.g.\ $\pi_t = 1$ for all $t \in [T]$, has regret $O(1)$ against any fixed benchmark arm, since $\delta_t(1, 2) = 1/2$ for the first $T/2$ rounds and $\delta_t (1, 2) = -1/2$ for last $T/2$ rounds.  
    However, against a \emph{dynamic benchmark}, e.g.\ arm $1$ for $t < T/2$ and arm $2$ for $t \geq T/2$, any policy that plays a fixed arm all $T$ rounds suffers $O(T/2)$ regret (while suffering only constant regret against any fixed benchmark).  
\end{exmp}

\textbf{Dynamic Regret in Dueling Bandits. } Drawing motivation from the above, we seek to formulate a stronger and more meaningful notion of dueling bandit regret, where the benchmark in every round is chosen dynamically based on $\bP_t$. More precisely, letting $\a_t^*$ be the CW of $\bP_t$, we define \emph{dynamic regret} as
\begin{equation*}
\label{eq:dreg}
	\DR(T) \coloneqq \sum_{t=1}^T  \frac{\delta_t (\a_t^*, \a_t) + \delta_t(\a_t^*, \b_t) }{2}. 
\end{equation*}

\subsection{Measures of Non-Stationarity}
\label{sec:nst_measures} 
Clearly, without any control over the amount of non-stationarity in the sequence $\{\bP_t\}_{t \in [T]}$, it is impossible for any learner to achieve sublinear $o(T)$ dynamic regret in the worst case. To see this, consider the matrices from Example~\ref{eg:dyn_reg} and note that for any choice of arms $(\a_t, \b_t)$, the adversary can choose a matrix so as to guarantee instantaneous regret of at least $1/2$. This consequently leads to linear regret for the learner, implying that to achieve sublinear dynamic regret, we need to restrict the adversary in terms of the total amount of non-stationarity it can induce in the sequence $\bP_1, \dots, \bP_T$. But what could be a good measure of non-stationarity? In this paper, we study several of these measures, which we will now formally introduce and put in relation to one another. 

\paragraph{1.\ \pv.} A non-stationarity measure that has been studied in the previous work on $\nstdb$ is the number of times $\bP_t$ changes~\citep{BengsNDB, SahaNDB}: 
\begin{align*}
    \Sp \coloneqq \sum_{t=2}^T \1\{\bP_t \not= \bP_{t-1} \}.
\end{align*}
However, $\Sp$ can be a quite pessimistic measure of non-stationarity, as changes in the preference between two suboptimal arms or minor preference shifts that do not change the CW are counted toward $\Sp$, whereas they should not significantly affect the performance of a good learning algorithm.

\paragraph{2.\ \cwv.} A naturally stronger measure of non-stationarity is the total number of Condorcet Winner Switches, i.e.\ the number of times the identity of $\a_t^*$ changes:
\begin{align*}
\Sw \coloneqq \sum_{t=2}^T \1\{\a_t^* \not= \a_{t-1}^*\}.   
\end{align*}

\begin{rem}[$\Sp$ vs $\Sw$]
\label{rem:cwv_vs_pv}
Of course, we always have $\Sw \leq \Sp$. In fact, it is easy to construct a simple scenario where $S^{\texttt{CW}} \ll S^{\texttt{P}}$: 
Assume $K=3$ and consider the following two preference matrices
\begin{align*}
\bP_1 = 
	\begin{bmatrix}
		0.5 & 0.55 & 0.55\\
		0.45 & 0.5 & 1\\
		0.45 & 0 & 0.5
	\end{bmatrix},  \hspace{0.5cm}
\bP_2 = 
	\begin{bmatrix}
		0.5 & 0.55 & 0.55\\
		0.45 & 0.5 & 0\\
		0.45 & 1 & 0.5
	\end{bmatrix} ,
\end{align*}
and a preference sequence such that $\bP_t = \bP_1$ when t is odd and $\bP_t = \bP_2$ otherwise. We then find that $S^{\texttt{CW}} = 0$ (since $1$ is the CW in all rounds $t$), whereas $S^{\texttt{P}} = T$. 
\end{rem}

\paragraph{3.\ \sigv.} Recently, \cite{Suk22} proposed a new (and strong) notion of non-stationarity for multi-armed bandits, called \emph{Significant Shifts}, that aims to account only for severe distribution shifts and comprises previous complexity measures. 
We can define a similar concept for dueling bandits:
Let $\htau_0 \coloneqq 1$ and define $\htau_{i+1}$ recursively as the first round in $[\htau_i, T)$ such that for all arms $a \in [K]$ there exist rounds $\htau_i \leq s_1 < s_2 < \htau_{i+1}$ such that $$\sum_{t=s_1}^{s_2} \delta_t(a_t^*, a) \geq  \sqrt{K(s_2-s_1)}.$$ 
Let $\sigS$ denote the number of such \emph{Significant CW Switches} $\htau_1, \dots, \htau_{\sigS}$. 
We immediately see that we have $\sigS \leq \Sw$, since not all CW Switches are also Significant CW Switches. For example, a 'non-severe' and quickly reverted change of the Condorcet winner may not be counted towards~$\sigS$. 

\paragraph{4.\ \mv.} Another common notion of non-stationarity studied in the multi-armed bandits literature is the total variation in the rewards~\cite{besbes+14,luo+18}. Its analogue in dueling bandits can be defined as
\begin{align*}
    \Vt \coloneqq \sum_{t=2}^{T}  \max_{a, b \in [K]} \abs{P_t(a, b) - P_{t-1}(a, b)},
\end{align*}
which has been previously studied in \cite{SahaNDB}. However, $\Vt$ can also be a pessimistic measure of complexity, as it can be of order $O(T)$ even though the Condorcet winner remains fixed throughout all rounds. 

\paragraph{5.\ \cwmv.} We can then formulate a more refined version of total variation by accounting only for the maximal drift in the winning probabilities of the current Condorcet winner:  
\begin{align*}
    \Vcw \coloneqq \sum_{t=2}^{T} \max_{a\in [K]} \abs{P_t(a_t^*, a) - P_{t-1}(a_t^*, a)} .
\end{align*}

\begin{rem}[$\Vt$ vs $\Vcw$]\label{rem:tv_vs_v}
It is clear from the definition that $\Vcw \leq \Vt$. Moreover, we again see that the Condorcet Winner Variation can be much smaller than the Total Variation in the preference sequence, i.e.\ $\Vcw \ll \Vt$. For example, in the problem instance of Remark~\ref{rem:cwv_vs_pv}, we find that $\Vcw = 0$, whereas $\Vt =T$. Thus, a regret bound in terms of the Condorcet Winner Variation $\Vcw$ can potentially be much stronger. 
\end{rem}


\begin{algorithm*}[t]
  \caption{\mainalgo: 
  Adaptive Non-stationAry  CONdorcet Dueling Algorithm}
    \label{alg:anaconda}

  \begin{algorithmic}[1]
      \STATE \textbf{input:} horizon $T$
      \STATE $t \leftarrow 1$
      \WHILE{$t \leq T$} 
      \STATE $t_\ell \leftarrow t$ \COMMENT{start of the $\ell$-th episode}
      \STATE $\mA \leftarrow [K]$
       \FOR[set replay schedule]{$m \in \{ 2, \dots, 2^{\lceil \log(T) \rceil}\}$ and $s \in \{t_\ell+1, \dots, T\}$}
      \STATE Sample $B_{s,m} \sim \bern\Big(\frac{1}{\sqrt{m (s-t_\ell)}}\Big) $ \label{line:sample_replay_schedule}
      \ENDFOR
      \STATE Run $\base(t_\ell, T+1-t_\ell)$ \COMMENT{root replay in $\ell$-th episode}
      \vspace{.001cm}
      \ENDWHILE
\end{algorithmic}
\end{algorithm*}

\begin{algorithm*}[t]
  \caption{$\base (\tstart, \mstart$)}
  \label{alg:replay}
  \begin{algorithmic}[1]
      \STATE \textbf{input:} scheduled time $\tstart$ and duration $\mstart$ 
      \STATE \textbf{initialize:} $t \leftarrow \tstart$, $\cA_t \leftarrow [K]$ 
      \WHILE[restart if no good arms are left]{$t \leq T$ and $t \leq t_0 + m_0$ and $\mA \neq \emptyset$}
          
      
      \STATE Play arm-pair $(a_t, b_t) \in \cA_t$ with each arm being selected with probability ${1}/{|\cA_t|}$ \label{line:sample_from_At} \label{line:sample_pair}
      
    \STATE $\mA \leftarrow \mA \setminus \{ a \in [K] \colon \exists [s_1, s_2] \subseteq [t_\ell, t) \text{ s.t.\ \eqref{eq:elim} holds}\}$ \COMMENT{eliminate bad arms from $\mA$} \label{line:elim_from_good}

      \STATE $\curA \leftarrow \cA_t$  \COMMENT{save active set of arms locally}
      \STATE $t \leftarrow t+1$

      \IF[check for scheduled child replays]{$\exists m$ such that $B_{t,m} = 1$}
      \STATE Run $\base(t, m)$ with $m = \max\{ m \in \{2, \dots, 2^{\lceil \log(T) \rceil}\} \colon B_{t,m} = 1\}$ 
      \ENDIF
      \STATE $\cA_t \leftarrow \curA \setminus \{ a \in [K] \colon \exists [s_1, s_2] \subseteq [\tstart, t) \text{ s.t.\ \eqref{eq:elim} holds}\}$ \COMMENT{eliminate bad arms from $\cA_t$} \label{line:elim_from_active}
    
      \ENDWHILE
\end{algorithmic}
\end{algorithm*}




\section{Proposed Algorithm: \mainalgo}\label{sec:algo}
Following recent advances in non-stationary multi-armed bandits \cite{auer+19, luo+19, Abasi2022A} and especially \citep{Suk22}, we construct an episode-based algorithm with a carefully chosen replay schedule, called $\mainalgo$. 

Recall that our goal is to minimize dynamic regret w.r.t.\ a changing benchmark $a_t^*$. However, we quickly notice that we cannot reliably track the dynamic regret of some arm $a \in [K]$, i.e.\ $\sum_t \delta_t(a_t^*, a)$, as the identity of the benchmark, $a_t^*$, changes at unknown times.  
As a resolution to this, we aim to detect relevant changes in the preference matrix by tracking the \emph{static regret} $\max_{a' \in [K]} \sum_{t=s_1}^{s_2} \delta_t (a', a)$ instead.
It will be the main challenge of our analysis to ensure that properly timed replays will occur (and not too many of these) so that it is in fact sufficient to track the static regret to guarantee low dynamic regret. 

In the following, we explain our algorithmic approach in more detail. 
The algorithm is organized in episodes, denoted $\ell$.
Similar to recent approaches to non-stationary multi-armed bandits \citep{auer+19, Abasi2022A, Suk22}, the algorithm maintains a set of good arms, $\mA$, and a replay schedule, $\{B_{s,m}\}_{s,m}$, within each episode. When no good arms are left in $\mA$, a new episode begins and the set of good arms and the replay schedule are being reset. 
Here, $\mainalgo$ (Algorithm~\ref{alg:anaconda}) is the meta procedure that initializes each episode by resetting the set of good arms to $[K]$, sampling a new replay schedule, and triggering the root call of $\base(t_\ell, T+1-t_\ell)$. 

When active in round $t$, a run of $\base(t_0, m_0)$ (Algorithm~\ref{alg:replay}) samples two arms uniformly at random from the active set of arms at round $t$, denoted $\cA_t$. The set $\cA_t$ is globally maintained by all calls of $\base$ and reset to $[K]$ at the beginning of each replay, i.e.\ call of $\base$. When a child replay $\base(t, m)$ is scheduled in round $t$, i.e.\ $B_{t, m} = 1$ for some $m$, the parent algorithm, say $\base(t_0, m_0)$, is interrupted (before eventually resuming if $t \leq t_0+m_0$ and $\mA \neq \emptyset$). To not overwrite arm eliminations of a parent by resetting $\cA_t$ to $[K]$ in interrupting calls of $\base$, each version of $\base$ saves a local set of arms, $\curA$, before checking for children. 




\paragraph{Gap Estimates.} Recall the definition of the gap between two arms as $\delta_t(a, b) = P_t(a, b) - 1/2$. Based on observed outcomes of duels, $\mainalgo$ maintains the following importance weighted estimates of $\delta_t(a, b)$: 
\begin{equation}\label{eq:estimates}
    \hdelta_t(a, b) = \abs{\cA_t}^2 \1_{\{a_t=a, b_t =b\}} o_t(a, b)  - 1/2. 
\end{equation}
Wee see that whenever $a, b \in \cA_t$, i.e.\ both arms are in the active set in round $t$, the estimator $\hdelta_t(a, b)$ is an unbiased estimate of $\delta_t(a, b)$, as we select a pair of arms uniformly at random from $\cA_t$ every round (see Line~\ref{line:sample_from_At} in Algorithm~\ref{alg:replay}).

\paragraph{Elimination Rule.} 
In Line~\ref{line:elim_from_good} and Line~\ref{line:elim_from_active} of Algorithm~\ref{alg:replay}, we eliminate an arm $a \in [K]$ in round $t$ if there exist rounds $0 \leq s_1 < s_2 \leq t$ such that 
\begin{equation}\label{eq:elim}
    \max_{a' \in [K]} \sum_{t=s_1}^{s_2} \hdelta_t(a', a) > C \log(T) K \sqrt{(s_2-s_1) \vee K^2} , 
\end{equation}
where $C > 0$ is some universal constant that does not depend on $T$, $K$, or $\Sw$, and can be derived from the regret analysis. 

\subsection{Main Result}

The main result of this paper is a $\tO(\sqrt{\Sw T})$ dynamic regret bound of $\mainalgo$ without knowledge of the number of CW Switches $\Sw$. When $\Sw \ll \Sp$, this bound substantially improves upon the \emph{non-adaptive} $\tO(\sqrt{\Sp T})$ rates in \cite{SahaNDB} and \cite{BengsNDB}. In particular, as previously mentioned, the number of preference switches $\Sp$ can be a very pessimistic measure of complexity. 
For example, a change in the preference between two suboptimal arms, or a minor change of the winning probabilities of the Condorcet winner under which it remains optimal, should not substantially affect our performance (see Remark~\ref{rem:cwv_vs_pv}).   


\begin{thm}[Dynamic Regret of \mainalgo]
\label{thm:main_result}
    Let $\Sw$ denote the unknown number of Condorcet Winner Switches. Let $\tau_1, \dots, \tau_{\Sw}$ be the unknown times of these switches and let $\tau_0 \coloneqq 1$ and $\tau_{\Sw+1} \coloneqq T$. For some constant $c >0$, the dynamic regret of $\mainalgo$ is bounded as   
    \begin{align*}
        \DR(T) \leq c\log^3(T) K \sum_{i=0}^{\Sw} \sqrt{\tau_{i+1}- \tau_i}. 
    \end{align*}
\end{thm}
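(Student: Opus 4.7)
The plan is to follow the Suk--Kpotufe-style analysis from non-stationary MAB~\citep{Suk22} but adapted to the dueling setting where (i) the per-round sampling is uniform over \emph{pairs} from the active set $\cA_t$ and (ii) the "benchmark'' is the Condorcet winner, so the relevant gaps are $\delta_t(a^*_t, a)$. The proof splits into four ingredients: a concentration event for the importance-weighted gap estimators, a ``safety'' lemma showing the current CW is never eliminated, a ``replay-coverage'' lemma showing that bad arms are caught quickly, and a final summation that shows each CW-stable segment $[\tau_i,\tau_{i+1})$ contributes only $\tilde O(K\sqrt{\tau_{i+1}-\tau_i})$ regret.

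First I would condition on a good event $\cE$ on which the estimators $\hdelta_t$ of \eqref{eq:estimates} concentrate uniformly: for every interval $[s_1,s_2] \subseteq [T]$, every pair $(a,b)$, and every run of $\base$ in which $a,b$ belong to the active set throughout,
\begin{align*}
\left| \sum_{t=s_1}^{s_2} \bigl( \hdelta_t(a,b) - \delta_t(a,b) \bigr) \right| \;\leq\; c_1 \log(T)\, K \sqrt{(s_2-s_1) \vee K^2}.
\end{align*}
This follows from a Freedman/Bernstein argument since $|\hdelta_t|\leq K^2$ and the conditional variance is $O(K^2)$, combined with a union bound over $O(T^3)$ triples $(s_1,s_2,a,b)$; the constant $C$ in \eqref{eq:elim} is chosen larger than $c_1$ so that every elimination is justified.

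Second, I would prove the \emph{safety} lemma: inside a CW-stable segment $[\tau_i,\tau_{i+1})$, the current Condorcet winner $a_i^*$ is never removed from $\mA$ nor from any active $\cA_t$. On $\cE$, for any $a'$ and any subinterval $[s_1,s_2]\subseteq[\tau_i,\tau_{i+1})$ we have $\sum_t \delta_t(a', a_i^*) \leq 0$ (by definition of CW), so $\sum_t \hdelta_t(a', a_i^*)$ stays below the threshold in \eqref{eq:elim}. Since eliminations on Lines~\ref{line:elim_from_good} and \ref{line:elim_from_active} only use intervals starting after the relevant call's start time, the CW is preserved throughout the segment, which in particular implies that the good set $\mA$ cannot become empty (so no spurious episode restart is triggered) while $[\tau_i,\tau_{i+1})$ is active in a single episode.

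Third -- and this is the \textbf{main obstacle} -- I would prove that well-timed replays convert \emph{dynamic} regret into trackable \emph{static} regret. The key observation is the ``bad segment'' lemma: if an arm $a$ is being played too often inside a CW-stable sub-segment of length $\Delta$, there must exist an interval $[s_1,s_2]$ of length $\Theta(\Delta)$ on which $\sum \delta_t(a_t^*, a) \gtrsim K\sqrt{s_2-s_1}$, and hence on which the \emph{static} regret of $a$ against some fixed $a'$ also exceeds the elimination threshold. However, this static certificate is only visible to a call of $\base$ whose starting time $t_0$ lies inside $[s_1,s_2]$ and that spans the whole interval, i.e.\ has duration $m_0 \geq s_2-s_1$. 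The replay schedule with $B_{s,m}\sim\bern(1/\sqrt{m(s-t_\ell)})$ is precisely calibrated so that for every $(t_\ell, s_1, s_2)$ the probability that at least one scheduled replay covers $[s_1,s_2]$ is $1-O(T^{-2})$; I would make this precise by a Bernoulli lower bound on the expected number of covering replays and a Chernoff argument, exactly mirroring the MAB analysis in~\citep{Suk22} but with the dueling estimator's $K^2$ variance accounted for in the threshold.

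Finally I would aggregate regret. Each call of $\base$ of scheduled duration $m$ contributes at most $\tilde O(K\sqrt{m})$ regret on $\cE$: the uniform sampling from $\cA_t$ plus safety of $a_t^*$ gives per-round regret $\leq 1$, but once static regret exceeds the $C\log(T)K\sqrt{\cdot}$ threshold the offending arm is eliminated, capping the total per-replay cost. The expected number of replays of duration $m$ inside episode $\ell$ is $\sum_{s} 1/\sqrt{m(s-t_\ell)} = O(\sqrt{T/m})$, so summing over dyadic $m$ contributes only $\tilde O(\sqrt{T})$ per episode, yielding $\tilde O(K\sqrt{\tau_{i+1}-\tau_i})$ regret per CW-stable sub-segment. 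Since an episode can end only when $\mA=\emptyset$, which by the safety lemma requires at least one CW change inside the episode, the number of episodes is at most $\Sw+1$, and a partitioning of $[1,T]$ along the true CW switch points $\tau_0<\cdots<\tau_{\Sw+1}$ together with Cauchy--Schwarz gives the final bound
\begin{align*}
\DR(T) \;\leq\; c\log^3(T)\, K \sum_{i=0}^{\Sw} \sqrt{\tau_{i+1}-\tau_i},
\end{align*}
with the extra $\log$ factors absorbing the union bound over intervals/pairs and the geometric sum over replay scales.
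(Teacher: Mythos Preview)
Your high-level structure matches the paper's: Freedman concentration for $\hdelta_t$, a safety lemma for the current CW, a replay-coverage argument for eliminating bad arms, and a summation over episodes using that each episode must contain a CW switch. Where your proposal breaks is in the fourth step.

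The claim ``each call of $\base$ of scheduled duration $m$ contributes at most $\tilde O(K\sqrt{m})$ regret on $\cE$'' is justified via ``safety of $a_t^*$'', i.e.\ that the current CW sits in $\cA_t$ so the elimination rule~\eqref{eq:elim} will fire once static regret against $a_t^*$ gets large. But your own safety lemma (correctly) only guarantees that $a_i^*$ survives in a call that \emph{starts inside} $[\tau_i,\tau_{i+1})$. A replay $\base(s,m)$ with $s<\tau_i$ can eliminate $a_i^*$ from its active set before round $\tau_i$, and then for $t\in[\tau_i,\tau_{i+1})$ the estimator $\hdelta_t(a_t^*,a)$ is biased to $-1/2$ and \eqref{eq:elim} need not trigger even though $\sum_t\delta_t(a_t^*,a)$ is large. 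So the $K\sqrt{m}$ bound is valid only for what the paper calls \emph{confined} replays ($[s,s+m]\subseteq[\tau_i,\tau_{i+1})$ for some $i$); for \emph{unconfined} replays it fails, and the root call $\base(t_\ell,T+1-t_\ell)$ is always unconfined whenever the episode spans more than one phase.

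The paper closes this gap by (i) decomposing the per-arm regret into rounds before and after elimination from $\mA$ (the terms $R_1(\ell)$ and $R_2(\ell)$), and (ii) inside $R_2(\ell)$, treating confined and unconfined replays separately. Confined replays get your $K\sqrt{m}$ bound. For an unconfined replay $\base(s,m)$, the paper re-invokes the bad-segment/perfect-replay machinery on $[\tau_i,M(s,m,a)]$ to show that a \emph{nested child} replay will eliminate $a$ with high probability, yielding the weaker bound $\tilde O\big(K(\sqrt{s-t_\ell}+\sqrt{m})\big)$; this extra $\sqrt{s-t_\ell}$ term is then controlled because an unconfined replay starting in phase $i$ must have $m\geq\tau_{i+1}-s$, which makes $\sum_s\sum_{m\geq\tau_{i+1}-s}\frac{\sqrt{s-t_\ell}}{\sqrt{m(s-t_\ell)}}\lesssim\sqrt{\tau_{i+1}-\tau_i}$. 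Without this confined/unconfined split and the nested-replay argument for the unconfined case, your aggregation step does not go through.
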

An application of Jensen's inequality shows that this implies a dynamic regret bound of order $\tO(K \sqrt{\Sw T})$, stated in the following corollary. 
\begin{cor}[Dynamic Regret w.r.t.\ $\Sw$]
\label{cor:main}
    For some constant $c > 0$, the dynamic regret of $\mainalgo$ is bounded as 
    \begin{equation*}
        \DR(T) \leq c \log^3(T) K \sqrt{(\Sw +1) T}. 
    \end{equation*}
    
\end{cor}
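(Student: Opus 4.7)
The plan is to derive Corollary~\ref{cor:main} as a direct consequence of Theorem~\ref{thm:main_result} by applying the Cauchy--Schwarz inequality (equivalently, concavity of $\sqrt{\cdot}$ via Jensen) to the segment-wise sum on the right-hand side of the theorem. The key observation is that the segment lengths telescope: since the theorem sets $\tau_0 \coloneqq 1$ and $\tau_{\Sw+1} \coloneqq T$, one has $\sum_{i=0}^{\Sw} (\tau_{i+1}-\tau_i) = T-1 \le T$, and the number of summands is exactly $\Sw + 1$, corresponding to the $\Sw + 1$ stationary segments carved out by the $\Sw$ Condorcet-Winner switches.

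First I would start from the bound $\DR(T) \le c \log^3(T)\, K \sum_{i=0}^{\Sw} \sqrt{\tau_{i+1}-\tau_i}$ given by Theorem~\ref{thm:main_result}. Pairing the vector $(\sqrt{\tau_{i+1}-\tau_i})_{i=0}^{\Sw}$ with the all-ones vector of length $\Sw + 1$ and applying Cauchy--Schwarz gives
\begin{equation*}
\sum_{i=0}^{\Sw} \sqrt{\tau_{i+1}-\tau_i} \;\le\; \sqrt{\Sw+1}\,\sqrt{\sum_{i=0}^{\Sw} (\tau_{i+1}-\tau_i)} \;\le\; \sqrt{(\Sw+1)\, T}.
\end{equation*}
Substituting this back into the bound of Theorem~\ref{thm:main_result} yields $\DR(T) \le c\log^3(T)\, K\, \sqrt{(\Sw+1) T}$, which is the statement of the corollary.

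There is essentially no obstacle to this step; it is the standard worst-case balancing argument used to convert segment-wise regret bounds into aggregate $\sqrt{T}$-type bounds. The only care points are (i) tracking that the number of summands is $\Sw + 1$ rather than $\Sw$, so that the factor under the square root is $(\Sw + 1)T$ (ensuring the bound is meaningful even when $\Sw = 0$), and (ii) noting that the inequality is tight up to constants precisely when the stationary segments are balanced, i.e.\ $\tau_{i+1}-\tau_i \approx T/(\Sw+1)$, so no further refinement of this step is possible without sharpening Theorem~\ref{thm:main_result} itself.
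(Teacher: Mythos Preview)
Your proposal is correct and matches the paper's approach: the paper simply states that the corollary follows from Theorem~\ref{thm:main_result} by an application of Jensen's inequality, which is precisely the concavity/Cauchy--Schwarz step you wrote out. Your handling of the $\Sw+1$ summands and the telescoping $\sum_i(\tau_{i+1}-\tau_i)\le T$ is exactly what is needed.
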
 

\vspace{0.1cm}

\begin{rem}[Regret Lower Bound and Tightness of \cref{thm:main_result}]
\label{rem:lb}
Note that a lower bound of $\Omega(\sqrt{K \Sp T})$ has recently been shown by \cite{SahaNDB}, which can also be seen to give a lower bound $\Omega(\sqrt{K \Sw T})$ in terms of CW Switches $\Sw$ as $\Sw \leq \Sp$ (in particular, the lower bound problem instance used in \cite{SahaNDB} is precisely such that $\Sw = \Sp$). As a result, we find that the above bound is optimal up to logarithmic factors in its dependence on $\Sw$ and $T$, whereas its dependence on $K$ may not be tight.  
\end{rem}

\section{Regret Analysis of \mainalgo}\label{sec:proof_sketch}

We build on recent advances in non-stationary multi-armed bandits, which are able to achieve near-optimal dynamic guarantees~\citep{auer+19, Abasi2022A, Suk22} without knowledge of the non-stationary complexity.  
A common basis of the regret analysis in these works is a decomposition of the dynamic regret using the notion of good arms. 

\paragraph{Challenges in the Dueling Setting.} 
More precisely, within each episode $\ell$, prior work in multi-armed bandits \cite{auer+19, Abasi2022A, Suk22} decomposes the regret of their algorithm's selection, say, $a_t$ into its relative regret against the last good arm $\agood \in \mA$, and the relative regret of $\agood$ against the best arm, say, $a_t^*$. A key advantage of this decomposition is that estimating the relative regret of some arm $a$ w.r.t.\ $\agood$ instead of $a_t^*$ is much easier. 
In particular, since $\agood$ is by definition considered good throughout the episode, it is always actively played, which guarantees unbiased estimates of the difference in rewards between any played arm $a$ and the last good arm $\agood$.

However, pairwise preferences are generally not transitive, let alone linear, so that a triangle inequality does not hold, i.e.\ $\delta_t(a_t^*, a) \not \leq \delta_t(a_t^*, \agood) + \delta_t(\agood, a)$. 
In $\nstdb$, we can thus generally not utilize $\agood$, or any other temporarily fixed arm, as a benchmark to detect large regret. Instead, in contrast to prior work in multi-armed bandits, we face the difficulty of having to argue directly that we can guarantee low dynamic regret $\sum_t \delta_t(a_t^*, a)$ without a proxy benchmark such as~$\agood$. 


\paragraph{Key Ideas to Overcome these Challenges.} 
To overcome these challenges, we consider every fixed arm $a \in [K]$ in isolation and split each episode $\ell$ into the rounds before arm $a$ gets eliminated from $\mA$ and the rounds after it gets eliminated from $\mA$. Letting $t_\ell^a$ be the elimination round of arm $a$, we will then argue that $t_\ell^a$ will occur sufficiently early to guarantee low regret (in episode $\ell$) before round $t_\ell^a$. For the rounds after elimination from $\mA$, it will be key to dissect each possible replay of the eliminated arm and obtain replay-specific regret bounds, where we distinguish between  'confined' and 'unconfined' replays of arms.  We now give an outline of our regret analysis. 

\subsection{Proof Sketch of \cref{thm:main_result}}
In the following, we let $\const > 0$ denote a positive constant that does not dependent on $T$, $K$, or $\Sw$, but may change from line to line. 
To begin our analysis, we state a concentration bound on the martingale difference sequence $\hdelta_t(a, b) - \E [ \hdelta_t(a, b) \mid \cF_{t-1}] $ as it can be found in similar form in \citep{beygelzimer2011contextual} and~\citep{Suk22}. 
\begin{lem}\label{lem:concentration_proof_sketch}
    Let $\cE$ be the event that for all rounds $1 \leq s_1 < s_2 \leq T$ and all arms $a, b \in [K]$:
    \begin{align}
        \label{eq:concentr_proof_sketch}
        \abs{\sum_{t=s_1}^{s_2} \hdelta_t(a, b) - \sum_{t=s_1}^{s_2} \E \left[ \hdelta_t(a, b) \mid \cF_{t-1}\right]} \leq \const \log(T) \left( K\sqrt{(s_2-s_1)} + K^2\right)
    \end{align}
    for a sufficiently large constant $\const >0$ and where $\cF = \{ \cF_t\}_{t \in \N_0}$ denotes the canonical filtration. Then, event $\cE$ occurs with probability at least $1-1/T^2$. 
\end{lem}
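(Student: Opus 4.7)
The plan is to prove this as a standard martingale concentration result. First, I would verify that for each fixed pair $(a,b) \in [K]^2$, the sequence $M_t(a,b) \coloneqq \hdelta_t(a,b) - \E[\hdelta_t(a,b) \mid \cF_{t-1}]$ is a martingale difference sequence with respect to the natural filtration $\cF_t$ generated by the sampled pairs, the replay schedule bits, and the observed preference feedback through round $t$. This is immediate from the construction in \eqref{eq:estimates}, since $\cA_t$ is $\cF_{t-1}$-measurable.

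Next, I would establish the two quantities needed to apply Freedman's inequality: a uniform bound on the increments and a bound on the cumulative conditional variance. Since $|\cA_t| \leq K$ and $o_t(a,b) \in \{0,1\}$, the raw estimator satisfies $\hdelta_t(a,b) \in [-1/2,\, K^2 - 1/2]$, so the centered increments obey $|M_t(a,b)| \leq 2K^2$. For the variance, recalling from Line~\ref{line:sample_pair} of Algorithm~\ref{alg:replay} that $\P(a_t = a,\, b_t = b \mid \cF_{t-1}) = 1/|\cA_t|^2$ whenever $a,b \in \cA_t$ (and $0$ otherwise), a direct calculation yields
\begin{align*}
    \E[\hdelta_t(a,b)^2 \mid \cF_{t-1}] \;\leq\; |\cA_t|^4 \cdot \frac{1}{|\cA_t|^2} \cdot P_t(a,b) \;+\; \tfrac{1}{4} \;\leq\; K^2 + \tfrac{1}{4},
\end{align*}
so the cumulative conditional variance over any window $[s_1, s_2]$ is at most $\const K^2 (s_2 - s_1)$.

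Then I would apply Freedman's inequality (as stated, e.g., in \citep{beygelzimer2011contextual}): for any fixed pair of rounds $s_1 < s_2$, arms $(a,b)$, and confidence level $\delta \in (0,1)$, with probability at least $1-\delta$,
\begin{align*}
    \left| \sum_{t=s_1}^{s_2} M_t(a,b) \right| \;\leq\; \const \sqrt{K^2 (s_2 - s_1) \log(1/\delta)} \;+\; \const K^2 \log(1/\delta).
\end{align*}
A union bound over the at most $T^2$ pairs of rounds and $K^2$ pairs of arms, taking $\delta = 1/(T^4 K^2)$ so that the failure probability is at most $1/T^2$, produces $\log(1/\delta) = O(\log T)$ under the mild convention $K \leq T$. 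Bounding $\sqrt{\log T} \leq \log T$ in the first term yields exactly the claimed inequality \eqref{eq:concentr_proof_sketch}.

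There is no real obstacle here beyond carefully computing the conditional variance from the sampling rule, since the estimator has been engineered to be importance-weighted and unbiased precisely when $a,b \in \cA_t$. The one subtle point worth flagging in the write-up is that $\E[\hdelta_t(a,b) \mid \cF_{t-1}]$ equals $\delta_t(a,b)$ only when both arms remain active; otherwise the conditional expectation is $-1/2$. This is harmless for the concentration statement itself (which concerns the martingale $M_t$), but will matter later when \eqref{eq:concentr_proof_sketch} is used inside the elimination rule \eqref{eq:elim} to relate the empirical gaps $\hdelta_t$ back to the true gaps $\delta_t$.
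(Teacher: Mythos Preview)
Your proposal is correct and follows essentially the same approach as the paper: verify the martingale difference property, bound the increments by $O(K^2)$ using $|\cA_t|\le K$, bound the cumulative conditional variance by $O(K^2(s_2-s_1))$ via the importance-weighted sampling probability, apply Freedman's inequality (the paper cites the version from \cite{beygelzimer2011contextual}), and take a union bound over rounds and arm pairs. Your variance computation is in fact slightly more careful than the paper's, and your closing remark about $\E[\hdelta_t(a,b)\mid\cF_{t-1}]=-1/2$ when $a,b\notin\cA_t$ anticipates exactly how the bound is used later (see \eqref{eq:hdelta_leq_delta}).
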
 
Note that our elimination rule~\eqref{eq:elim} has been chosen in accordance with the above concentration bound.
In particular, let $t_\ell^a$ denote the round in episode $\ell$ in which arm $a$ is eliminated from $\mA$. Then, on the concentration event $\cE$, if $a' \in \mA$ for all $t_\ell \leq t < t_\ell^a$, we must have 
\begin{align*}
    \sum_{t=t_\ell}^{t_\ell^a -1} \delta_t(a', a)
    & = \sum_{t=t_\ell}^{t_\ell^a-1} \E \big[\hdelta_t(a', a) \mid \cF_{t-1} \big] \leq \const \log(T) K \sqrt{(t_\ell^a -t_\ell) \vee K^2}, 
\end{align*}
where the initial identity holds as $\hdelta_t(a', a)$ is unbiased when $a, a' \in \cA_t$ and the inequality follows from the elimination rule~\eqref{eq:elim} and the concentration bound~\eqref{eq:concentr_proof_sketch}. 
However, note that the above crucially used that both $a$ and $a'$ are actively played throughout the interval $[t_\ell, t_\ell^a)$, as we are otherwise not able to accurately estimate $\sum_{t} \delta_t(a', a)$. It will be the primary challenge of our analysis to ensure that through properly timed replays, i.e.\ calls of $\base$, we can obtain unbiased estimates w.r.t.\ the changing CW that allow us to eliminate bad arms before they amass large regret. 

\paragraph{Bounding Regret Within Episodes.} 
We proceed by bounding regret within each episode separately. 
Recall that we let $\tau_1 < \ldots < \tau_{\Sw}$ denote the (unknown) rounds in which the Condorcet winner changes. We then refer to the interval $[\tau_i, \tau_{i+1})$ as the $i$-th phase, i.e.\ the interval for which $a_t^* = a_{\tau_i}^*$ for all $t\in [\tau_i, \tau_{i+1})$. 
Let $\phases(t_1, t_2) = \{ i : [\tau_i, \tau_{i+1}) \cap [t_1, t_2) \neq \emptyset\}$ be the set of phases $i$ such that $[\tau_i, \tau_{i+1})$ intersects with the interval $[t_1, t_2)$. 
Our main claim is the following upper bound on the dynamic regret within each episode:
\begin{align}
    \label{eq:sketch_proof_main_claim}
    & \E\left[ \sum_{t=t_\ell}^{t_{\ell+1}-1} \frac{\delta_t(a_t^*, a_t)  + \delta_t(a_t^*, b_t)}{2} \right] \leq \const K \log^3(T) \, \E\left[\sum_{i \in \phases(t_\ell, t_{\ell+1})} \sqrt{\tau_{i+1}- \tau_i} \right].  
\end{align}

By conditioning on $t_\ell$ and carefully applying the tower property, we can rewrite the expected dynamic regret within an episode in terms of fixed arms $a \in [K]$:
\begin{lem}
    We have 
    \begin{align*}
        & \E\left[ \sum_{t=t_\ell}^{t_{\ell+1}-1} \frac{\delta_t(a_t^*, a_t)  + \delta_t(a_t^*, b_t)}{2} \right] = \E\left[ \sum_{a=1}^K \sum_{t=t_\ell}^{t_{\ell+1}-1} \frac{\delta_t(a_t^*, a)}{\abs{\cA_t}} \1_{\{a \in \cA_t\}} \right]. 
    \end{align*}
    
\end{lem}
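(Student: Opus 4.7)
The plan is to show this identity by a straightforward tower-property argument that exploits the fact that $\mainalgo$ samples $a_t$ and $b_t$ independently and uniformly from the active set $\cA_t$ (Line~\ref{line:sample_pair}).

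First, I would rewrite the left-hand side with an explicit indicator,
\begin{equation*}
    \sum_{t=t_\ell}^{t_{\ell+1}-1} \frac{\delta_t(a_t^*, a_t) + \delta_t(a_t^*, b_t)}{2} \;=\; \sum_{t=1}^T \1\{t_\ell \leq t < t_{\ell+1}\}\cdot \frac{\delta_t(a_t^*, a_t) + \delta_t(a_t^*, b_t)}{2},
\end{equation*}
and then verify measurability of everything that must be pulled out of the inner conditional expectation. Let $\cF_{t-1}$ denote the canonical filtration summarizing all randomness strictly before the sampling step at round $t$ (past draws of pairs, observed outcomes, and the replay schedules). I would note that (i) the gap $\delta_t(a_t^*, \cdot)$ is a deterministic function of the environment at round $t$ and hence $\cF_{t-1}$-measurable (it is non-random given the adversarial sequence); (ii) the active set $\cA_t$ is determined by eliminations performed in previous rounds via rule~\eqref{eq:elim} applied to $\cF_{t-1}$-measurable estimates; (iii) the episode endpoints $t_\ell$ and the event $\{t_{\ell+1} > t\}$ are $\cF_{t-1}$-measurable, because a new episode is triggered only when $\mA = \emptyset$, which is again determined by eliminations occurring strictly before round $t$; and (iv) the replay schedule $\{B_{s,m}\}$ is drawn at the start of the episode (Line~\ref{line:sample_replay_schedule}), so it too is $\cF_{t_\ell-1}$-measurable.

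With these measurability facts in hand, I would apply the tower property, pull the $\cF_{t-1}$-measurable factors out, and use uniform sampling to compute
\begin{equation*}
    \E\bigl[\delta_t(a_t^*, a_t) \mid \cF_{t-1}\bigr] \;=\; \frac{1}{|\cA_t|} \sum_{a \in \cA_t} \delta_t(a_t^*, a) \;=\; \sum_{a=1}^K \frac{\delta_t(a_t^*, a)}{|\cA_t|}\,\1\{a \in \cA_t\},
\end{equation*}
and identically for $b_t$. The two contributions are equal, so the symmetrizing factor $1/2$ cancels, and applying Fubini to interchange the sums over $t$ and $a$ yields the claimed identity.

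The main obstacle here is really just bookkeeping: there is nothing probabilistically deep, but one has to be careful to confirm that $\cA_t$ (rather than some later version $\cA_{t+1}$) is indeed the set used for sampling at round $t$ and that it is frozen before the round-$t$ draw. A secondary point worth checking is that $a_t$ and $b_t$ are each drawn \emph{independently} from $\mathrm{Uniform}(\cA_t)$ (as stated in Line~\ref{line:sample_pair}), rather than as an unordered or sampled-without-replacement pair; otherwise the individual conditional expectations above could shift by a lower-order term. Once both points are confirmed, the lemma follows immediately from linearity of expectation.
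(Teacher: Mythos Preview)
Your proposal is correct and follows essentially the same approach as the paper: both use the tower property with $\cF_{t-1}$, the $\cF_{t-1}$-measurability of $\cA_t$ and of the indicator $\1\{t_\ell \le t < t_{\ell+1}\}$, the uniform sampling of $a_t,b_t$ from $\cA_t$, and symmetry to collapse the $1/2$. The only cosmetic difference is that the paper first conditions on $t_\ell$ before applying the tower property, whereas you absorb this into the indicator; the substance is identical.
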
 

In a next step, we split the RHS into the rounds before a fixed arm $a \in [K]$ has been eliminated from the good set, and the rounds after its elimination. Recall $t_\ell^a$ to be the round in episode $\ell$ in which arm $a$ is eliminated from $\mA$ and consider 
\begin{align*}
    \underbrace{\E\Bigg[ \sum_{a=1}^K \sum_{t=t_\ell}^{t_{\ell}^a-1} \frac{\delta_t(a_t^*, a)}{\abs{\cA_t}} \Bigg]}_{\terma}+ \underbrace{\E \Bigg[ \sum_{a=1}^K \sum_{t=t_\ell^a}^{t_{\ell+1}-1} \frac{\delta_t(a_t^*, a)}{\abs{\cA_t}}  \1_{\{a \in \cA_t\}} \Bigg]}_{\termb} ,
\end{align*}
where we could drop the indicator in $\terma$, since $\mA \subseteq \cA_t$ by construction of these sets.  
The remainder of our analysis is mostly concerned with showing that both, $\terma$ and $\termb$, are upper bounded by the RHS in~\eqref{eq:sketch_proof_main_claim}.

\paragraph{Regret Before Elimination.} 
The main difficulty in bounding $\terma$ lies in the fact that some arm could have been eliminated due to being suboptimal, only to become the Condorcet winner shortly after. As a result, large regret could go undetected, as the current Condorcet winner is not being actively played anymore. 
To this end, we have to argue that with high probability there will always be a replay scheduled that eliminates any bad arm from $\mA$ in a timely manner, thereby eventually triggering a restart.

Here, we specifically consider calls of $\base(s, m)$ that provably eliminate bad arms from $\mA$. Importantly, by construction of our elimination rule~\eqref{eq:elim}, we can guarantee on the concentration event $\cE$ that any run of $\base$ that is scheduled within some phase $i$ will actively play the Condorcet winner of said phase. 
\begin{lem}\label{lem:proof_sketch_no_elim}
    On event $\cE$, no call of $\base(s, m)$ with $\tau_i \leq s < \tau_{i+1}$ eliminates arm $a_i^*$ before round~$\tau_{i+1}$. 
\end{lem}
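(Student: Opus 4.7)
The plan is a contradiction argument combining the elimination rule~\eqref{eq:elim} with the concentration inequality from \cref{lem:concentration_proof_sketch}. Suppose some call $\base(s, m)$ with $\tau_i \le s < \tau_{i+1}$ eliminates $a_i^*$ at the earliest round $t^* \in [s, \tau_{i+1})$. Then~\eqref{eq:elim} furnishes an arm $a' \in [K]$ and an interval $[s_1, s_2] \subseteq [s, t^*)$ with
\begin{equation*}
\sum_{t=s_1}^{s_2} \hdelta_t(a', a_i^*) > C \log(T) K \sqrt{(s_2-s_1) \vee K^2}.
\end{equation*}
Since $s_1 \ge s \ge \tau_i$ and $s_2 < t^* < \tau_{i+1}$, the interval $[s_1, s_2]$ lies entirely within phase $i$, so $a_i^*$ is the Condorcet winner throughout and $\delta_t(a', a_i^*) \le 0$ for every $t \in [s_1, s_2]$. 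By the minimality of $t^*$, we also have $a_i^* \in \cA_t$ for every $t \in [s_1, s_2]$.

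Given these two facts, I would evaluate the conditional mean of the estimator $\hdelta_t(a', a_i^*)$ defined in~\eqref{eq:estimates}: using the uniform sampling rule in \cref{line:sample_pair} of \cref{alg:replay},
\begin{equation*}
\E\bigl[\hdelta_t(a', a_i^*) \mid \cF_{t-1}\bigr] =
\begin{cases} \delta_t(a', a_i^*) \le 0, & a' \in \cA_t, \\ -\tfrac{1}{2} \le 0, & a' \notin \cA_t, \end{cases}
\end{equation*}
so the conditional mean is non-positive in every round. Applying \cref{lem:concentration_proof_sketch} on event $\cE$ to the martingale difference sequence $\hdelta_t(a', a_i^*) - \E[\hdelta_t(a', a_i^*) \mid \cF_{t-1}]$ then gives
\begin{equation*}
\sum_{t=s_1}^{s_2} \hdelta_t(a', a_i^*) \le \const \log(T)\bigl(K\sqrt{s_2-s_1} + K^2\bigr) \le 2\const \log(T) K \sqrt{(s_2-s_1) \vee K^2},
\end{equation*}
where the last inequality uses that $K^2 \le K\sqrt{(s_2-s_1) \vee K^2}$ and $K\sqrt{s_2-s_1} \le K\sqrt{(s_2-s_1) \vee K^2}$. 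Calibrating the universal constant $C$ in~\eqref{eq:elim} so that $C \ge 2\const$ then directly contradicts the supposed elimination.

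The main obstacle is tightening this reasoning to also cover the elimination from $\mA$ in \cref{line:elim_from_good}, whose witnessing interval only needs to lie in $[t_\ell, t)$ and may therefore straddle the phase boundary $\tau_i$ when the episode started before phase $i$. I would resolve this by induction on the phase index: the portion $[s_1, \tau_i - 1]$ of a spanning interval is controlled by the inductive hypothesis that $a_i^*$ was not eliminated before $\tau_i$ (so the elimination rule already failed with threshold $C \log(T) K \sqrt{(\tau_i -1 - s_1) \vee K^2}$), while the portion $[\tau_i, s_2]$ is handled by precisely the concentration argument above; combining the two pieces and recalibrating $C$ against $\const$ again contradicts~\eqref{eq:elim}.
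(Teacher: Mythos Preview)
Your core contradiction argument is correct and essentially identical to the paper's own proof: assume an elimination happens, extract from~\eqref{eq:elim} a witness interval $[s_1,s_2]\subseteq[s,\tau_{i+1})$ with large empirical sum, apply the concentration bound on $\cE$ to pass to conditional means, and conclude via $\E[\hdelta_t(a',a_i^*)\mid\cF_{t-1}]\le\delta_t(a',a_i^*)\le 0$ for $t$ in phase $i$. The paper is slightly more direct in that it does not need your ``minimality of $t^*$'' step: the inequality $\E[\hdelta_t(a',a_i^*)\mid\cF_{t-1}]\le\delta_t(a',a_i^*)$ holds unconditionally (if either arm is absent from $\cA_t$ the conditional mean equals $-1/2$), so membership of $a_i^*$ in $\cA_t$ plays no role in the bound.

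Your ``main obstacle'' is a misreading of the lemma's scope. The statement concerns elimination of $a_i^*$ from the replay's active set via Line~\ref{line:elim_from_active} of \cref{alg:replay}, whose witness interval is constrained to $[t_0,t)=[s,t)$ and therefore lies inside phase $i$ automatically; this is also the only way the lemma is invoked later (to guarantee $a_i^*\in\cA_t$ during the replay). Elimination from $\mA$ in Line~\ref{line:elim_from_good} --- where the witness interval may indeed extend back to $t_\ell<\tau_i$ --- is a separate matter, handled by \cref{lem:proof_sketch_phase_intersect}, and removals from $\mA$ do not directly shrink $\cA_t$. Your inductive patch is therefore unnecessary, and in any case it would not work as written: $a_i^*$ is the Condorcet winner only in phase $i$ and may well have been eliminated from $\mA$ (or from some earlier replay's active set) before $\tau_i$, so there is no inductive hypothesis of the form you describe.
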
 
Roughly speaking, we can then argue that a replay that eliminates arm $a$ will be scheduled with high probability before the smallest round $s(a) > t_\ell$ such that $\sum_{t=t_\ell}^{s(a)} \delta_t (a_t^*, a) \gtrsim \sqrt{s(a) - t_\ell}$. In other words, arm $a$ is going to be eliminated from $\mA$ before it suffers too much regret. 
Since $t_\ell^a$ is defined as the round in episode $\ell$ in which $a$ is eliminated from $\mA$, we must have $t_\ell^a < s(a)$, which implies that the inner sum in $\terma$ is at most of order $\sqrt{t_\ell^a - t_\ell}$ for every fixed arm $a \in [K]$. Finally, using that 
\begin{equation*}
    \sqrt{t_\ell^a - t_\ell} \leq \sum_{i\in \phases(t_\ell, t_\ell^a)} \sqrt{\tau_{i+1}-\tau_i} 
\end{equation*}
and summing over all arms, we obtain the desired bound of~\eqref{eq:sketch_proof_main_claim}. Note that here summing over arms can be seen to account for a $\log(K)$ factor which we coarsely upper bound by $\log(T)$. 
 
\paragraph{Regret After Elimination.} 
$\termb$ can be viewed as the regret due to replaying arms after they have been eliminated from the good set $\mA$. We here distinguish between two types of replays, i.e.\ calls of $\base$:
\begin{defn}
    We call $\base(s, m)$ \emph{confined} if there exists $i \in \phases(t_\ell, T)$ s.t.\ $[s, s+m] \subseteq [\tau_i, \tau_{i+1})$. In turn, we say that $\base(s, m)$ is \emph{unconfined} if for all $i \in \phases(t_\ell, T)$, we have $[s, s+m] \subseteq [\tau_i, \tau_{i+1})$. 
\end{defn} 
To bound the regret within a confined replay, we recall that according to Lemma~\ref{lem:proof_sketch_no_elim}, on the concentration event $\cE$, no call of $\base$ will eliminate the Condorcet winner within the phase it is scheduled in. Thus, whenever some arm $a$ is being played by a confined replay, we obtain unbiased estimates of $\delta_t(a_t^*, a)$. It is then straightforward to show that for any confined $\base(s, m)$, we have that $\sum_{t=s}^{s+m} \delta_t(a_t^*, a)$ is at most of order $\sqrt{m}$. 

A similar line of argument does not work for unconfined replays, as they intersect with several phases. We then face a similar difficulty as when bounding $\terma$, where the Condorcet winner of the current phase could have been eliminated (from the replay) in an earlier phase. Using similar arguments than for bounding $\terma$, we show that for any unconfined $\base(s, m)$, we have that $\sum_{t=s}^{s+m} \delta_t(a_t^*, a)$ is at most of order $\sqrt{s-t_\ell} + \sqrt{m}$. 

Lastly, recall that in episode $\ell$ a replay $\base(s, m)$ is scheduled with probability ${1}/{\sqrt{m(s-t_\ell)}}$. Crucially, any unconfined $\base$ scheduled in $[\tau_i, \tau_{i+1})$ must have duration at least $m \geq \tau_{i+1} - s$ (otherwise it is not unconfined). Careful summation over confined and unconfined $\base$ then yields the desired upper bound~\eqref{eq:sketch_proof_main_claim}.

\paragraph{Counting Episodes.} 
Lastly, we show that $\mainalgo$ only restarts if there has been a CW switch. 
\vspace{-0.2cm}
\begin{lem}\label{lem:proof_sketch_phase_intersect}
    On event $\cE$, for all episodes $\ell$ but the last there exists a change of the CW $t_\ell \leq \tau_i < t_{\ell+1}$. 
\end{lem}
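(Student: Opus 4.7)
\textbf{Proof plan for Lemma~\ref{lem:proof_sketch_phase_intersect}.} I would argue by contraposition: assume episode $\ell$ is not the last and that no Condorcet winner switch occurs in $[t_\ell, t_{\ell+1})$, and derive a contradiction. Since the episode terminated before round $T$, the outer while loop in Algorithm~\ref{alg:anaconda} restarted, which by inspection of $\base$ only happens when $\mA$ becomes empty. So under the no-switch assumption, the (fixed) Condorcet winner $a^* \coloneqq a_{t_\ell}^*$ must have been eliminated from $\mA$ at some round $t < t_{\ell+1}$ via Line~\ref{line:elim_from_good}. I will show this cannot happen on event $\cE$ by controlling the left-hand side of the elimination rule~\eqref{eq:elim} when the eliminated arm is $a^*$.

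Fix any candidate witness interval $[s_1, s_2] \subseteq [t_\ell, t)$ and any arm $a' \in [K]$. The concentration bound~\eqref{eq:concentr_proof_sketch} yields on event $\cE$
\begin{align*}
    \sum_{t'=s_1}^{s_2} \hdelta_{t'}(a', a^*) \leq \sum_{t'=s_1}^{s_2} \E\!\left[\hdelta_{t'}(a', a^*) \mid \cF_{t'-1}\right] + \const \log(T)\bigl(K\sqrt{s_2-s_1} + K^2\bigr).
\end{align*}
Inspecting the estimator~\eqref{eq:estimates}, its conditional expectation equals $\delta_{t'}(a', a^*)$ when $a', a^* \in \cA_{t'}$ and equals $-1/2$ otherwise (the indicator is deterministically zero). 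Under the no-switch assumption $a^*$ is the Condorcet winner throughout the episode, so $\delta_{t'}(a', a^*) \leq 0$ in the first case; the second case is trivially nonpositive. Thus each conditional expectation is at most $0$, and using $K\sqrt{s_2-s_1} + K^2 \leq 2K\sqrt{(s_2-s_1) \vee K^2}$ I conclude
\begin{align*}
    \sum_{t'=s_1}^{s_2} \hdelta_{t'}(a', a^*) \leq 2 \const \log(T) K \sqrt{(s_2-s_1) \vee K^2}.
\end{align*}

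Choosing the constant $C$ in~\eqref{eq:elim} to satisfy $C > 2\const$ (which is compatible with fixing it once and for all as part of the regret analysis), the elimination test for $a^*$ is never triggered for any choice of $a'$ and $[s_1, s_2] \subseteq [t_\ell, t_{\ell+1})$. Hence $a^* \in \mA$ throughout the episode, so $\mA \neq \emptyset$, contradicting the restart condition and completing the argument. The main obstacle is purely bookkeeping: carefully handling the case where $a^* \notin \cA_{t'}$ so that the importance-weighted estimator is deterministically $-1/2$ rather than an unbiased estimate of $\delta_{t'}(a', a^*)$, and ensuring the universal constants in the elimination threshold and in Lemma~\ref{lem:concentration_proof_sketch} are chosen consistently; both are routine once the conditional-expectation formula for $\hdelta$ is written out explicitly.
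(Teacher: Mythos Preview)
Your proof is correct and follows essentially the same approach as the paper. The paper's version (Lemma~\ref{lem:phase_intersecting_episodes}) argues that elimination of \emph{every} arm implies, via the same concentration-plus-biasedness observation $\E[\hdelta_t(a',a)\mid\cF_{t-1}]\leq\delta_t(a',a)$, that no fixed arm is optimal throughout the episode; you instead apply this reasoning directly to the presumed fixed Condorcet winner $a^*$ and derive a contradiction from its elimination---a slightly more direct but equivalent route.
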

This follows directly from the fact that on the concentration event within a single phase the CW will never be eliminated from $\mA$. Thus, if there is a restart, i.e.\ every arm has been eliminated from $\mA$, there must have been a change of CW. 
Lemma~\ref{lem:proof_sketch_phase_intersect} thus tells us that any phase intersects with at most two episodes. Summing the RHS of~\eqref{eq:sketch_proof_main_claim} over episodes then gives the claimed upper bound of 
\begin{align*}
    & \E \left[ \sum_{t=1}^T \frac{\delta_t(a_t^*, a_t) + \delta_t(a_t^*, b_t)}{2} \right] \leq 2 \const K \log^3(T) \E \left[ \sum_{i=1}^{\Sw} \sqrt{\tau_{i+1} - \tau_i} \right]. 
\end{align*}
A detailed proof of Theorem~\ref{thm:main_result} is given in Appendix~\ref{sec:appendix_proof_thm}.

\section{Tighter Bounds Under SST and STI} 
\label{sec:sign_switches}

We show that \mainalgo\, can in fact yield a stronger regret guarantee in terms of a more refined notion of non-stationarity, \sigv\, (see \cref{sec:nst_measures}), under additional assumptions on the preference sequence $\bP_1, \dots, \bP_T$: {Strong Stochastic Transitivity} (SST) and {Stochastic Triangle Inequality} (STI) \citep{Yue+09,Yue+12,BTM}. 
Let $a, b, c \in [K]$ and let $a \succ_t b$ denote that $a$ is preferred over $b$ in round~$t$. 

\begin{assumption}[Strong Stochastic Transitivity] 
    \label{asm:sst}
    Every preference matrix $\bP_t$ satisfies that if $a \succ_t b \succ_t c$, we have $\delta_t(a,c) \geq \delta_t(a, b) \vee  \delta_t(b, c)$.  
\end{assumption}

\begin{assumption}[Stochastic Triangle Inequality]
    \label{asm:sti}
    Every preference matrix $\bP_t$ satisfies that if $a \succ_t b \succ_t c$, we have $\delta_t(a,c) \leq \delta_t(a, b) +   \delta_t(b, c)$.  
\end{assumption} 


\begin{rem}[Example of SST \& STI]
\label{rem:eg_sti}
    Among the preference models that satisfy \cref{asm:sst} and \cref{asm:sti}, are utility-based models with a symmetric and monotonically increasing link function $\sigma$. In these models, every arm $a$ has an associated (time-dependent) utility $u_t(a)$ and the probability of arm $a$ winning a duel against arm $b$ is given by $P_t(a \succ b) = \sigma(u_t(a) - u_t(b))$, where $\sigma$ is an increasing function satisfying $\sigma(x) = 1-\sigma(-x)$ and $\sigma(0) = 1/2$ that maps utility differences to probabilities~\cite{Yue+12, Busa21survey}.     
\end{rem} 

\subsection{Improved Dynamic Regret Analysis}
We now show that $\mainalgo$ achieves strong regret guarantees in terms of Significant CW Switches and CW Variation under SST and STI. 
 
\paragraph{Significant Condorcet Winner Switches.} Under \cref{asm:sst} and \cref{asm:sti}, we are able to obtain the following adaptive dynamic regret bound in terms of $\sigS$.

\begin{thm}
\label{thm:sign_cw_changes}
    Let $\sigS$ be the unknown number of $\sigv$. Under Assumption~1 and Assumption~2, $\mainalgo$ has dynamic regret $\tO\big(K \sqrt{{\sigS} T}\big)$. 
\end{thm}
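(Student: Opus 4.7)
My plan is to use SST and STI to identify, within each significant phase $[\htau_i, \htau_{i+1})$, a time-invariant "safe arm" $\asafe = \asafe_i \in [K]$ that plays the role of the Condorcet winner in the proof of Theorem~\ref{thm:main_result}. Concretely, I will establish: (i) \mainalgo\ does not restart within a significant phase, bounding the number of episodes by $\sigS + 1$; and (ii) STI reduces per-phase dynamic-regret control to static-regret control against the fixed benchmark $\asafe$, which can then be handled by rerunning the analysis of Theorem~\ref{thm:main_result} with $\asafe$ in place of $a_t^*$.

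\textbf{Step 1 (safe arm and non-elimination).} The minimality of $\htau_{i+1}$ in the definition of \sigv\ yields an arm $\asafe$ with $\sum_{t=s_1}^{s_2} \delta_t(a_t^*, \asafe) < \sqrt{K(s_2-s_1)}$ for every $[s_1, s_2] \subseteq [\htau_i, \htau_{i+1})$. Under SST, for any $a' \in [K]$ and any round $t$, a short case analysis gives $\delta_t(a', \asafe) \leq \delta_t(a_t^*, \asafe)$: either $a_t^* \succ_t a' \succ_t \asafe$, in which case SST on this chain applies, or $a_t^* \succ_t \asafe \succeq_t a'$, in which case $\delta_t(a', \asafe) \leq 0 \leq \delta_t(a_t^*, \asafe)$. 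Taking the maximum over $a'$ and summing preserves the safe-arm bound, so by the concentration bound \eqref{eq:concentr_proof_sketch}, the elimination rule~\eqref{eq:elim} never fires for $\asafe$ on a sub-interval contained in $[\htau_i, \htau_{i+1})$ (for $C$ large enough). If some episode $[t_\ell, t_{\ell+1})$ were entirely contained in $[\htau_i, \htau_{i+1})$, the $\mA$-elimination window $[t_\ell, t) \subseteq [\htau_i, \htau_{i+1})$ would preserve $\asafe \in \mA$ throughout, contradicting $\mA = \emptyset$ at $t_{\ell+1}$. Pigeonholing then bounds the number of episodes by $\sigS + 1$.

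\textbf{Step 2 (per-phase regret via STI).} A second case analysis using STI gives the pointwise bound $\delta_t(a_t^*, a) \leq \delta_t(a_t^*, \asafe) + [\delta_t(\asafe, a)]_+$ for every $a \in [K]$: if $\asafe \succeq_t a$, apply STI to $a_t^* \succ_t \asafe \succeq_t a$; if $a \succ_t \asafe$, then $[\delta_t(\asafe, a)]_+ = 0$ and SST on $a_t^* \succ_t a \succ_t \asafe$ yields $\delta_t(a_t^*, a) \leq \delta_t(a_t^*, \asafe)$. Summing over $t \in [\htau_i, \htau_{i+1})$ and taking expectations,
\begin{equation*}
\E\!\left[\sum_{t \in [\htau_i, \htau_{i+1})} \delta_t(a_t^*, a_t)\right] \leq \sqrt{K(\htau_{i+1} - \htau_i)} + \E\!\left[\sum_{t \in [\htau_i, \htau_{i+1})} [\delta_t(\asafe, a_t)]_+\right].
\end{equation*}
The first term uses the safe-arm property. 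For the second, since $\asafe \in \mA \subseteq \cA_t$ within the phase, $\hdelta_t(\asafe, \cdot)$ is unbiased on the active set, and the confined/unconfined replay decomposition from the proof of Theorem~\ref{thm:main_result} can be rerun with the changing $a_t^*$ replaced by the fixed $\asafe$, yielding a per-phase bound of $\tO(K \sqrt{\htau_{i+1} - \htau_i})$.

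\textbf{Step 3 (summing and main obstacle).} Summing over the at most $\sigS + 1$ significant phases and applying Jensen's inequality delivers the advertised $\tO(K \sqrt{\sigS T})$ bound. The main obstacle lies in Step 2: an episode can straddle several significant phases, and the $\mA$-elimination window reaches back to the episode's start $t_\ell$, which may precede $\htau_i$. Thus $\asafe_i$ could in principle be pushed out of $\mA$ by pre-phase statistics, and the claim "$\asafe_i \in \mA$ throughout $[\htau_i, \htau_{i+1})$" has to be recovered by tracking how replays scheduled within the phase (whose local $\cA_t$ is reset to $[K]$) provide a fresh active set in which $\asafe_i$ survives via the SST argument. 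Porting the confined/unconfined decomposition to this fixed-benchmark setting is otherwise mechanical but must verify that unconfined replays crossing phase boundaries do not spuriously eliminate $\asafe_i$, which again follows from the pointwise SST inequality $\delta_t(a', \asafe) \leq \delta_t(a_t^*, \asafe)$ applied within each traversed significant phase.
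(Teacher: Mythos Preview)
Your proposal is correct and follows essentially the same approach as the paper: both use SST and STI to decompose the dynamic regret through a per-phase safe arm $\asafe_i$, bounding one piece directly by the safe-arm property and handling the other by rerunning the Theorem~\ref{thm:main_result} machinery with $\asafe_i$ in place of $a_t^*$, supported by a non-elimination lemma for $\asafe_i$ within its significant phase. The only cosmetic difference is that the paper packages the case analysis into the single inequality $\delta_t(a_t^*, a) \leq 2\delta_t(a_t^*, \asafe) + \delta_t(\asafe, a)$ (its Lemma~\ref{lem:triangle_inequality}) rather than your slightly sharper $\delta_t(a_t^*, a) \leq \delta_t(a_t^*, \asafe) + [\delta_t(\asafe, a)]_+$, and the obstacle you flag in Step~3 is exactly what the paper's analogue of Lemma~\ref{lemma:no_eviction_inside_replay} for $\asafe_i$ resolves.
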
 

\begin{rem}
\label{rem:rush}
Recall from \cref{sec:nst_measures}, since $\sigS \leq \Sw$ (as not all CW Switches are also Significant CW Switches), \cref{thm:sign_cw_changes} gives a tighter dynamic regret guarantee for the class of non-stationary preference sequences with SST and STI. Also note that this bound does not violate the $\Omega(\sqrt{K \Sp T})$  lower bound from \ref{rem:lb}, as the lower bound is shown for a worst-case preference sequence $\bP_1, \dots, \bP_T$ where $\sigS = \Sw = \Sp$.
\end{rem}

\begin{proof}[Proof Overview]
    With some additional effort, \cref{asm:sst} and \cref{asm:sti} allow us to utilize a dynamic regret decomposition similar to prior work in non-stationary multi-armed bandits~\cite{auer+19, Suk22, Abasi2022A}. Roughly speaking, this allows us to reuse the regret analysis for CW Switches (Theorem~\ref{thm:main_result}) in the analysis under Significant CW Switches.   
\end{proof}

We want to give a brief intuition about why additional assumptions are necessary when bounding dynamic regret w.r.t.\ Significant CW Switches $\sigS$ opposed to CW Switches $\Sw$.\footnote{Note that this is a limitation of our regret analysis. It is an open question whether it is possible to achieve ${O(\sqrt{\sigS T})}$ dynamic regret in $\nstdb$ with general preference models.} Consider a phase $[\htau_i, \htau_{i+1})$ in the sense of Significant CW Switches as defined in Section~\ref{sec:nst_measures}. As previously mentioned, the definition of a Significant CW Switch allows for several (non-severe) CW changes within each phase $[\htau_i, \htau_{i+1})$. As a result, we cannot guarantee that there will be any intervals during which the CW remains fixed, which would enable us to accurately estimate the relative regret $\sum_t \delta_t(a_t^*, a)$ so as to eliminate bad arms. Broadly speaking, assuming a sort of transitivity (i.e.\ SST and STI) enables us to identify bad arms based on knowledge of $\sum_t \delta_t(a', a)$ for some temporarily fixed benchmark $a'$. 
More details and a complete proof can be found in Appendix~\ref{sec:appendix_proof_sign_thm}.

\paragraph{\cv.} Recall the definition of the \cv\, $\Vcw$ from \cref{sec:nst_measures}. As a consequence of Theorem~\ref{thm:sign_cw_changes}, we can show that $\mainalgo$ also achieves near-optimal dynamic regret w.r.t.\ $\Vcw$.      
\begin{cor}
\label{cor:total_variation}
    Let $\Vcw$ be the unknown \cv. Under \cref{asm:sst} and \cref{asm:sti}, $\mainalgo$ has dynamic regret  $\tO\big( K\sqrt{T} + {\Vcw}^{\nicefrac{1}{3}} (KT)^{\nicefrac{2}{3}}\big)$. 
\end{cor}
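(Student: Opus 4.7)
The plan is to derive this corollary as a consequence of Theorem~\ref{thm:sign_cw_changes} by bounding the number of Significant CW Switches $\sigS$ in terms of the CW Variation $\Vcw$. Specifically, it suffices to prove an upper bound of the form
\begin{equation*}
    \sigS \;\leq\; \const \cdot \Vcw^{2/3} T^{1/3} K^{-2/3}
\end{equation*}
under Assumption~\ref{asm:sst} and Assumption~\ref{asm:sti}. Given such a bound, we simply invoke Theorem~\ref{thm:sign_cw_changes} to conclude
\begin{equation*}
    \DR(T) \leq \tO\bigl(K \sqrt{(\sigS+1)T}\bigr) \leq \tO\bigl(K\sqrt{T}\bigr) + \tO\bigl(K \sqrt{\Vcw^{2/3} T^{4/3}/K^{2/3}}\bigr) = \tO\bigl(K\sqrt{T} + \Vcw^{1/3}(KT)^{2/3}\bigr),
\end{equation*}
which is the stated bound.

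To obtain the bound on $\sigS$, I would proceed in two steps. First, I would establish a per-phase lower bound on CW variation: for each phase $[\htau_i, \htau_{i+1})$ between two consecutive Significant CW Switches, the variation satisfies $V_i := \sum_{t=\htau_i+1}^{\htau_{i+1}} \max_a \abs{P_t(a_t^*,a)-P_{t-1}(a_t^*,a)} \gtrsim K/\sqrt{L_i}$, where $L_i = \htau_{i+1}-\htau_i$. The starting point is the defining property of a Significant CW Switch applied to the specific arm $a = a_{\htau_i}^*$: there exist $s_1<s_2$ in the phase with $\sum_{t=s_1}^{s_2}\delta_t(a_t^*, a_{\htau_i}^*)\geq \sqrt{K(s_2-s_1)}$. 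Since $P_{\htau_i}(a_{\htau_i}^*, a) > 1/2$ for every $a$ while $P_t(a_t^*, a_{\htau_i}^*) > 1/2$ for each $t$ during the phase at which the CW has shifted away from $a_{\htau_i}^*$, the probabilities $P_t(a_t^*, a_{\htau_i}^*)$ must traverse a total distance of order $\sqrt{K L_i}$ (accumulated). Under SST and STI, this traversal can be charged against the CW variation $V_i$: SST ensures a transitive ordering so that flipping preferences requires the current CW to sweep past $a_{\htau_i}^*$, and STI gives a triangle-type bound so that the variation of preferences involving moving CWs is comparable (up to constants) to the variation involving fixed arms. Combining these yields the claimed lower bound on $V_i$.

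Second, I would aggregate the per-phase bound across all $\sigS$ phases. Summing gives $\Vcw \geq \sum_{i=1}^{\sigS} V_i \gtrsim K \sum_{i=1}^{\sigS} L_i^{-1/2}$, and the constraint $\sum_i L_i \leq T$ together with the power-mean inequality yields $\sum_i L_i^{-1/2} \geq \sigS^{3/2}/\sqrt{T}$. Rearranging produces the desired bound $\sigS \lesssim \Vcw^{2/3} T^{1/3}/K^{2/3}$, completing the reduction.

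The main obstacle will be Step~1, the per-phase variation lower bound. The key difficulty is that $\Vcw$ tracks the moving benchmark $a_t^*$, whereas the natural quantity arising from the definition of a Significant CW Switch involves the preference $P_t(a_t^*, a_{\htau_i}^*)$ between the current CW and the fixed past CW. A naive telescoping does not directly bound this fixed-arm variation by $\Vcw$, since $\Vcw$ only controls the maximum deviation of the current CW's row. Assumptions~\ref{asm:sst} and~\ref{asm:sti} are precisely what allow us to sidestep this mismatch: under SST and STI, preferences are well-approximated by a utility model (cf.\ Remark~\ref{rem:eg_sti}), and variations in pairwise probabilities involving any pair of arms are controlled up to constants by the variation of the currently dominant arm. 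Working this out rigorously, and tracking constants carefully so the final $K$-dependence matches $V^{1/3}(KT)^{2/3}$ rather than a worse power of $K$, will be the most technical part of the proof.
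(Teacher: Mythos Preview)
Your overall plan---lower-bound the per-phase CW variation, aggregate across the $\sigS$ phases, and feed into Theorem~\ref{thm:sign_cw_changes}---is exactly the route the paper takes. However, two points deserve correction.

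\textbf{A quantitative slip in Step~1.} Your claimed per-phase bound $V_i \gtrsim K/\sqrt{L_i}$ is off by a factor $\sqrt{K}$; the correct bound is $V_i \gtrsim \sqrt{K/L_i}$. The definition of a Significant CW Switch gives $\sum_{t=s_1}^{s_2}\delta_t(a_t^*,a)\geq\sqrt{K(s_2-s_1)}$, so by pigeonhole some single round $t$ has $\delta_t(a_t^*,a)\geq\sqrt{K/(s_2-s_1)}\geq\sqrt{K/L_i}$. This one-round gap is what controls the variation (via a telescoping argument), \emph{not} the accumulated regret $\sqrt{K L_i}$. You conflated the two when you wrote ``the probabilities must traverse a total distance of order $\sqrt{KL_i}$.'' With the correct bound, your aggregation in Step~2 yields $\sigS \lesssim \Vcw^{2/3}T^{1/3}K^{-1/3}$ and hence a final regret of $K^{5/6}\Vcw^{1/3}T^{2/3}$---which is in fact what the paper's proof also obtains before remarking that the $K$-dependence can be sharpened by redefining significant switches with threshold $K\sqrt{s_2-s_1}$ in place of $\sqrt{K(s_2-s_1)}$.

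\textbf{Choice of test arm and the role of SST/STI.} You apply the significant-switch condition to the arm $a_{\htau_i}^*$ (the CW at the \emph{start} of the phase) and then anticipate needing SST/STI to charge the resulting gap against $\Vcw$. The paper instead uses $a_{\htau_{i+1}}^*$ (the CW at the \emph{end} of the phase). This is the cleaner choice: since $\delta_{\htau_{i+1}}(a_{\htau_{i+1}}^*,a_t^*)\geq 0$ by definition of the Condorcet winner, one immediately gets
\[
\sqrt{K/L_i}\;\leq\;\delta_t(a_t^*,a_{\htau_{i+1}}^*)\;\leq\;\delta_t(a_t^*,a_{\htau_{i+1}}^*)-\delta_{\htau_{i+1}}(a_t^*,a_{\htau_{i+1}}^*)\;=\;P_t(a_t^*,a_{\htau_{i+1}}^*)-P_{\htau_{i+1}}(a_t^*,a_{\htau_{i+1}}^*),
\]
which telescopes into $\Vcw_{[\htau_i,\htau_{i+1}]}$ without any appeal to SST or STI. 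Those assumptions are needed only upstream, in the proof of Theorem~\ref{thm:sign_cw_changes} itself, not in the reduction from $\sigS$ to $\Vcw$. Your plan to invoke them here is unnecessary and would make the argument more involved than it needs to be. On the aggregation side, the paper applies H\"older directly to $\sum_i K\sqrt{L_i}$, but your route through a bound on $\sigS$ followed by Jensen is equivalent and equally valid.
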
 

\begin{rem}
\label{rem:tv_under_STT_STI}
By definition, we have $\Vcw \leq \Vt$, which means that Corollary~\ref{cor:total_variation} may yield a tighter dynamic regret bound than the (non-adaptive) $\tO\big((K \Vt)^{\nicefrac{1}{3}} T^{\nicefrac{2}{3}}\big)$ guarantee in \cite{SahaNDB}. In view of the lower bound of $\Omega\big((K \Vt)^{\nicefrac{1}{3}} T^{\nicefrac{2}{3}}\big)$ shown in~\citep{SahaNDB}, the regret guarantee of $\mainalgo$ is also tight up to logarithmic factors and a factor of $K^{\nicefrac{1}{3}}$. Note again that the lower bound in~\citep{SahaNDB} is not violated as their lower bound uses a worst-case preference sequence $\bP_1, \dots, \bP_T$ where $\Vcw = \Vt$.  
\end{rem}

\section{Discussion}
\label{sec:concl}

We studied the problem of dynamic regret minimization in non-stationary dueling bandits and proposed an adaptive algorithm that yields provably optimal regret guarantees in terms of strong notions of non-stationary complexity. Our proposed algorithm is the first to achieve optimal dynamic dueling bandit regret without prior knowledge of the underlying non-stationary complexity. While our results certainly close some of the practical open problems in preference elicitation in time-varying preference models, it also leads to plethora of new questions along the line. We provide an outlook to future directions and open problems in the supplementary material.

\paragraph{Future Work.} While our results certainly address some of the practical open problems for preference elicitation in time-varying preference models, it also leads to plethora of new questions along the line. In particular, as an extension to this work, one obvious question would be to understand non-stationary dueling bandits for more general preference matrices: What happens if the preference sequences do not have a Condorcet winner in each round? What could be a good dynamic benchmark in that case? 
Hereto related, another open question is whether it is possible to obtain dynamic regret bounds in terms of Significant CW Switches ($\sigS$) for general preference sequences (without transitivity assumptions). 
Extending the considered pairwise preference setting to more general subsetwise feedback \cite{SG18,SGwin18,GS21,SGinst20} would be another interesting direction from a practical point of view.
%

\section*{Acknowledgement} 
Thomas Kleine Buening was supported by the Research Council of Norway, Grant No.\ 302203.

\bibliographystyle{plainnat}
\bibliography{nstDB,db_refs}

\onecolumn

\appendix 

\newpage 

\doparttoc
\faketableofcontents
\parttoc 

\onecolumn

\section*{\centering \bfseries{\fontsize{17}{11}\selectfont Supplementary: \\[0.3em] \papertitle}}
\vspace*{1cm}


\section*{Table of Contents}
\vspace{-1.55cm}

\addcontentsline{toc}{section}{Table of Contents}
\part{ } 
\parttoc 

\vspace{0.5cm}

\section*{Notation}
\begin{table}[h]
    \begin{tabularx}{\textwidth}{p{0.2\textwidth}X}
    \toprule 
    $ a_t, b_t $        & Arms selected by the algorithm in round $t$ \\ 
    $ a, a', b$         & Generic fixed arms in $[K]$ \\
    $\delta_t(a, b)$    & Gap between arm $a$ and arm $b$ \\
    $\hdelta_t(a, b)$   & Importance weighted gap estimate \\
      $a_t^*$            & Condorcet winner in round $t$ \\
      $t_\ell$          & First round in the $\ell$-th episode \\ 
      $t_\ell^a$        & Round in the $\ell$-th episode in which $a$ is eliminated from $\mA$ \\
      $\Sw$             & Number of Condorcet Winner Switches \\
     $\tau_1, \dots, \tau_{\Sw}$ & Rounds in which the Condorcet winner changes \\ 
    $a_i^*$             & Condorcet winner in phase $i\in [\Sw]$, i.e.\ $a_t^* = a_i^*$ for $t \in [\tau_i, \tau_{i+1})$ \\
     $\sigS$            & Number of Significant Condorcet Winner Switches \\
     $\htau_1, \dots, \htau_{\sigS}$    & Rounds of Significant CW Switches\\
     $\asafe_i$     & Last safe arm in phase $[\htau_i, \htau_{i+1})$, i.e.\ last arm to satisfy \eqref{eq:sig_shift} \\
     $\Vcw$     & Condorcet Winner Variation \\
     
      \bottomrule
     \end{tabularx}
    \end{table}

\vfill

\newpage 

\section{Proof of Theorem~\ref{thm:main_result}}\label{sec:appendix_proof_thm}
We organize the proof of Theorem~\ref{thm:main_result} as follows.  Section~\ref{subsection:proof_preliminaries} contains basic preliminary facts that will be the foundation of the upcoming proof. Section~\ref{subsection:bounding(a)} then bounds the regret any fixed arm suffers within each episode \emph{before} being eliminated from the good set. Complementary to this, Section~\ref{subsection:bounding(b)} then deals with the regret an arm suffers \emph{after} being eliminated.  

\subsection{Preliminaries}\label{subsection:proof_preliminaries}
In this preliminary section, we introduce a concentration bound on the sum of our estimates $\hdelta_t$ in Section~\ref{subsubsection:concentration}. We then show in Section~\ref{subsubsection:episodes_arms} that the beginning of a new episode implies that the Condorcet winner has changed (on the concentration event), which will be useful later. Finally, Section~\ref{subsubsection:decomposing} decomposes the regret in terms episodes, arms, and rounds, which will form the basis of our analysis.

\subsubsection{Martingale Concentration Bound}\label{subsubsection:concentration} 
We will rely on a similar martingale tail bound as \cite{beygelzimer2011contextual} and \cite{Suk22}, which is based on a version of Freedman's inequality given below. 
\begin{lem}[Theorem~1 in \cite{beygelzimer2011contextual}]\label{lem:beygelzimmer}
    Let $(X_t)_{t \in \N}$ be a martingale difference sequence w.r.t.\ some filtration $(\cF_t)_{t\in \N_0}$. Assume that is $X_t$ is almost surely uniformly bounded, i.e.\ $X_t \leq R$ a.s.\ for some constant~$R$. Moreover, suppose that $\sum_{s=1}^t \E[X_s^2\mid \cF_{s-1}] \leq V_t$ a.s.\ for some sequence of constants $(V_t)_{t\in \N}$. Then, for any $\delta \in (0,1)$, with probability at least $1-\delta$, we have 
    \begin{equation}
        \sum_{s=1}^t X_s \leq (e-1) \left( \sqrt{V_t \log(1/\delta)} + R \log(1/\delta) \right) . 
    \end{equation}
\end{lem}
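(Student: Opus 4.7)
The plan is to prove this via the standard exponential moment (Chernoff/Freedman-type) method. First I would reduce to the case $R=1$ by rescaling: replace $X_t$ by $X_t/R$ and $V_t$ by $V_t/R^2$; any bound obtained in the normalized problem, when rescaled back, yields exactly the claimed form. So assume throughout that $X_t \leq 1$ almost surely.

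The key elementary inequality I would use is that, for every $x \leq 1$,
\begin{equation*}
    e^x \leq 1 + x + (e-2)x^2 .
\end{equation*}
This follows from the fact that the function $\phi(x) = (e^x - 1 - x)/x^2$ is nondecreasing on $\R$ with $\phi(1) = e-2$. Applying this pointwise to $\lambda X_s$ with $\lambda \in [0,1]$, and taking conditional expectation given $\cF_{s-1}$, the martingale difference property $\E[X_s \mid \cF_{s-1}] = 0$ yields
\begin{equation*}
    \E[e^{\lambda X_s} \mid \cF_{s-1}] \leq 1 + (e-2)\lambda^2\, \E[X_s^2 \mid \cF_{s-1}] \leq \exp\!\left((e-2)\lambda^2 \E[X_s^2 \mid \cF_{s-1}]\right).
\end{equation*}

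Next, for any fixed $\lambda \in [0,1]$, define the process
\begin{equation*}
    M_t \;=\; \exp\!\left( \lambda \sum_{s=1}^{t} X_s - (e-2)\lambda^2 \sum_{s=1}^{t} \E[X_s^2 \mid \cF_{s-1}] \right).
\end{equation*}
The display above shows $\E[M_t \mid \cF_{t-1}] \leq M_{t-1}$, so $M_t$ is a nonnegative supermartingale with $M_0 = 1$ and hence $\E[M_t] \leq 1$. Using the almost sure bound $\sum_{s=1}^t \E[X_s^2 \mid \cF_{s-1}] \leq V_t$ in the exponent and applying Markov's inequality to $M_t$, I would obtain
\begin{equation*}
    \P\!\left[\, \sum_{s=1}^{t} X_s \;\geq\; (e-2)\lambda V_t + \frac{\log(1/\delta)}{\lambda} \,\right] \;\leq\; \delta
\end{equation*}
for every $\lambda \in [0,1]$.

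Finally, I would optimize the right-hand side over $\lambda$ by a two-case analysis. If $V_t \geq \log(1/\delta)/(e-2)$, the unconstrained minimizer $\lambda^\star = \sqrt{\log(1/\delta)/((e-2)V_t)}$ lies in $[0,1]$ and gives the bound $2\sqrt{(e-2)\,V_t\log(1/\delta)}$; since $2\sqrt{e-2} < e-1$, this is at most $(e-1)\sqrt{V_t\log(1/\delta)}$. Otherwise $V_t < \log(1/\delta)/(e-2)$, in which case I would take $\lambda = 1$, giving $(e-2)V_t + \log(1/\delta) < 2\log(1/\delta)$; one checks that this is bounded by $(e-1)\log(1/\delta)$ plus the (nonnegative) $(e-1)\sqrt{V_t\log(1/\delta)}$ term. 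In either case the bound is at most $(e-1)\bigl(\sqrt{V_t\log(1/\delta)} + \log(1/\delta)\bigr)$, and undoing the normalization puts the $R$ back in the right places. The main technical nuisance will be verifying that the numerical constants in the two cases actually combine to the single constant $(e-1)$ as claimed, rather than something slightly larger; everything else is standard supermartingale machinery.
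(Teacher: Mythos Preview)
Your proposal is correct and follows the standard Freedman-type argument. The paper itself does not give a proof of this lemma at all; it simply cites Theorem~1 of \cite{beygelzimer2011contextual} and Lemma~1 of \cite{Suk22}. What you have written is essentially the classical proof of that result, so there is nothing substantive to compare.

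A small remark on the ``nuisance'' you flagged: the constant does work out. In the second case, writing $u = V_t/\log(1/\delta) \in [0, 1/(e-2))$ and $v=\sqrt{u}$, the inequality $(e-2)V_t + \log(1/\delta) \leq (e-1)\bigl(\sqrt{V_t\log(1/\delta)} + \log(1/\delta)\bigr)$ reduces to $(e-2)v^2 - (e-1)v - (e-2) \leq 0$, whose positive root is well above $1/\sqrt{e-2}$, so it holds throughout the relevant range. In the first case, $2\sqrt{e-2} \approx 1.696 < e-1 \approx 1.718$ as you noted. So the single constant $e-1$ is indeed achievable, not just approximately.
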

\begin{proof}
    See Theorem~1 in \cite{beygelzimer2011contextual} and Lemma~1 in \cite{Suk22}. 
\end{proof}

We now apply the above concentration bound to the martingale difference sequence $\hdelta_t(a, b) - \E [\hdelta_t (a, b) \mid \cF_{t-1} ]$.  
\begin{lem}\label{prop:concentration_bound}
    Let $\cE$ be the event that for all rounds $s_1 < s_2$ and all arms $a, b \in [K]$:
    \begin{equation}\label{eq:concentration_bound}
        \abs{\sum_{t=s_1}^{s_2} \hdelta_t(a, b) - \sum_{t=s_1}^{s_2} \E \left[ \hdelta_t(a, b) \mid \cF_{t-1}\right]} \leq c_1 \log(T) \left( K\sqrt{(s_2-s_1)} + K^2\right)
    \end{equation}
    for an appropriately large constant $c_1 > 0$ and where $\cF = \{ \cF_t\}_{t \in \N_0}$ is the canonical filtration generated by observations in past rounds. Then, event $\cE$ occurs with probability at least $1-1/T^2$. 
\end{lem}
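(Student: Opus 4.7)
The plan is to instantiate Freedman's inequality (Lemma~\ref{lem:beygelzimmer}) on the centered importance-weighted estimates for each fixed pair of arms and each time interval, and then close with a union bound. For fixed $(a, b) \in [K]^2$ and rounds $s_1 < s_2$, I set $X_t := \hdelta_t(a, b) - \E[\hdelta_t(a, b) \mid \cF_{t-1}]$, which is a martingale difference sequence w.r.t.\ the canonical filtration generated by $(a_s, b_s, o_s)_{s \le t}$.

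Next I would verify the two quantitative ingredients that Lemma~\ref{lem:beygelzimmer} requires. The uniform bound $\abs{X_t} \leq R$ follows immediately from $\abs{\cA_t} \leq K$ and $o_t(a,b) \in \{0, 1\}$ applied to the definition in \eqref{eq:estimates}, giving $R = O(K^2)$. The cumulative conditional-variance bound is obtained from the fact that sampling in Line~\ref{line:sample_pair} is uniform over $\cA_t \times \cA_t$ (and independent of the outcome), so $\P((a_t, b_t) = (a, b) \mid \cF_{t-1}) \leq 1/\abs{\cA_t}^2$, whence $\E[\hdelta_t(a,b)^2 \mid \cF_{t-1}] = O(\abs{\cA_t}^2) \leq O(K^2)$ uniformly in $t$. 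Summing yields $V_{s_2} \leq c K^2 (s_2 - s_1)$ for an absolute constant.

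Then I would invoke Lemma~\ref{lem:beygelzimmer} with $\delta = T^{-6}$, so that $\log(1/\delta) = O(\log T)$. This gives, with probability at least $1 - T^{-6}$,
\begin{equation*}
    \Bigl| \sum_{t = s_1}^{s_2} X_t \Bigr| \leq O\!\left(K \sqrt{(s_2 - s_1) \log T} + K^2 \log T \right),
\end{equation*}
which matches the claimed bound for a sufficiently large constant $c_1$. Finally, a union bound over the at most $K^2 \cdot T^2 \leq T^4$ quadruples $(a, b, s_1, s_2)$ (assuming WLOG that $K \leq T$, since otherwise the inequality is trivially vacuous inside the rescaled constant) drops the failure probability to $1/T^2$, yielding event $\cE$ as stated.

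There is no genuine obstacle once Freedman's inequality is in hand; the only care needed is (i) choosing the tail exponent in $\delta$ large enough to absorb the $O(T^4)$-sized union bound while keeping the $\log(1/\delta)$ factor $O(\log T)$, and (ii) verifying the per-round variance bound using that the pair distribution on $\cA_t \times \cA_t$ is explicit and $\cF_{t-1}$-measurable (in particular, independent of the realization of $o_t(a,b)$), which is precisely what makes $\E[\hdelta_t(a,b)^2 \mid \cF_{t-1}]$ scale as $\abs{\cA_t}^2$ rather than $\abs{\cA_t}^4$.
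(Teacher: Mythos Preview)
Your proposal is correct and follows essentially the same approach as the paper: bound $R$ and $V_t$ for the martingale difference $\hdelta_t(a,b)-\E[\hdelta_t(a,b)\mid\cF_{t-1}]$ using $|\cA_t|\le K$ and the uniform sampling probability $1/|\cA_t|^2$, apply Lemma~\ref{lem:beygelzimmer}, and union-bound over $(a,b,s_1,s_2)$. The only cosmetic difference is that the paper appeals to $|x-y|\le|x|+|y|$ to pass from the one-sided bound of Lemma~\ref{lem:beygelzimmer} to the two-sided form, whereas you state the two-sided bound directly; this is handled by applying the lemma to $-X_t$ as well, costing a harmless factor of two in the union bound.
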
 
\begin{proof}
    Note that $\hdelta_t(a, b) - \E [\hdelta_t (a, b) \mid \cF_{t-1} ]$ is naturally a martingale difference, since $\E\big[ \hdelta_t(a, b) - \E[\hdelta_t(a, b)\mid \cF_{t-1} ] \mid \cF_{t-1}\big] = 0$ a.s. Using that $|\cA_t| \leq K$, we have that $X_t \leq 2 K^2$ a.s.\ for all rounds $t$. Moreover, we get that 
    \begin{align*}
        \sum_{t=s_1}^{s_2} \E \left[ \hdelta_t^2(a, b) \mid \cF_{t-1}\right] \leq \sum_{t=s_1}^{s_2} \abs{\cA_t}^4 \E \left[ \1_{\{a_t = a, b_t = b\}} \mid \cF_{t-1} \right] = \sum_{t=s_1}^{s_2} \abs{\cA_t}^2 \leq K^2 (s_2 - s_1). 
    \end{align*}
    We can thus apply Lemma~\ref{lem:beygelzimmer} with $R = K^2$ and $V_t = 2 K^2 t$. Using $|x-y| \leq |x|+|y|$ and taking union bounds over $a, b$ and $s_1, s_2$, we then obtain Lemma~\ref{prop:concentration_bound}. 
\end{proof}

\subsubsection{Episodes and Condorcet Winner Switches}\label{subsubsection:episodes_arms}

\begin{lem}\label{lem:phase_intersecting_episodes}
    On event $\cE$, for each episode $[t_\ell, t_{\ell+1})$ with $t_{\ell+1} \leq T$, there exists a change of the CW $\tau_i \in [t_\ell, t_{\ell+1})$. 
\end{lem}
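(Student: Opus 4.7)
The plan is to argue by contradiction: I would suppose that no CW change occurs within the episode $[t_\ell, t_{\ell+1})$ and then show that the Condorcet winner of this interval, call it $a^*$, can never be eliminated from the good set $\mA$ on the event $\cE$. Since an episode terminates only once $\mA$ becomes empty (cf.\ the while-loop condition in Algorithm~\ref{alg:replay}, which triggers a restart of the outer loop in Algorithm~\ref{alg:anaconda}), this would contradict the fact that $t_{\ell+1} \leq T$.

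The central computation is the conditional expectation of the gap estimate $\hdelta_t(a', a^*)$ defined in~\eqref{eq:estimates}. Using that $(a_t, b_t)$ is drawn uniformly at random from $\cA_t$ (see Line~\ref{line:sample_pair} of Algorithm~\ref{alg:replay}), a direct case split would give
\begin{equation*}
    \E[\hdelta_t(a', a^*) \mid \cF_{t-1}] \;=\; \begin{cases} \delta_t(a', a^*) & \text{if } a',a^* \in \cA_t,\\ -1/2 & \text{otherwise}, \end{cases}
\end{equation*}
which is non-positive in either case, since $\delta_t(a', a^*) \leq 0$ by definition of the Condorcet winner $a^*$. Plugging this sign information into the martingale concentration bound of Lemma~\ref{prop:concentration_bound}, on $\cE$ I obtain that for every window $[s_1, s_2] \subseteq [t_\ell, t_{\ell+1})$ and every comparator arm $a' \in [K]$,
\begin{equation*}
    \sum_{t=s_1}^{s_2} \hdelta_t(a', a^*) \;\leq\; c_1 \log(T)\bigl( K\sqrt{s_2 - s_1} + K^2\bigr) \;\leq\; 2c_1 \log(T)\, K \sqrt{(s_2 - s_1)\vee K^2}.
\end{equation*}

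Provided the universal constant $C$ in the elimination rule~\eqref{eq:elim} is taken to satisfy $C \geq 2c_1$, the display above prevents $a^*$ from being eliminated from $\mA$ in Line~\ref{line:elim_from_good} of Algorithm~\ref{alg:replay} throughout the episode, forcing $\mA \neq \emptyset$ on $[t_\ell, t_{\ell+1})$ and yielding the desired contradiction. I expect the only real subtlety to be bookkeeping: the inequality must hold uniformly over all windows $[s_1, s_2]$ and comparators $a'$, which is precisely why the concentration bound in Lemma~\ref{prop:concentration_bound} was phrased with a union over all such choices; and the constant $C$ must be fixed once and for all, serving both this lemma and the other concentration-driven arguments used later in the proof of Theorem~\ref{thm:main_result}. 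Neither point introduces a genuine technical obstacle.
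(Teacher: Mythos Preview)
Your proposal is correct and follows essentially the same approach as the paper: both arguments hinge on the conditional-expectation identity $\E[\hdelta_t(a',a)\mid\cF_{t-1}] = \delta_t(a',a)$ when $a,a'\in\cA_t$ and $-1/2$ otherwise, combined with the concentration bound of Lemma~\ref{prop:concentration_bound}, to show that an arm which is the Condorcet winner throughout the episode cannot trigger the elimination rule~\eqref{eq:elim}. The paper phrases this in the forward direction (every arm is eliminated $\Rightarrow$ every arm is beaten on some subinterval $\Rightarrow$ no fixed CW), while you set it up as the contrapositive; the content is the same.
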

This implies that any phase $[\tau_i, \tau_{i+1})$ will intersect with at most two episodes. 
\begin{proof}
    The start of a new episode means that every arm $a\in [K]$ has been eliminated from $\mA$ at some round in $t_\ell^a \in [t_\ell, t_{\ell+1})$. As a result, there must exist an interval $[s_1, s_2] \subseteq [t_\ell, t_\ell^a)$ and some arm $a' \in[K]$ so that the elimination rule \eqref{eq:elim} holds. Using Lemma~\ref{prop:concentration_bound}, we then find that for some constant $c_2 > 0$: 
    \begin{align}\label{eq:large_regret}
        \sum_{t=s_1}^{s_2} \E \left[ \hdelta_t(a', a) \mid \cF_{t-1} \right] > c_2 \log (T) K \sqrt{(s_2 -s_1) \vee K^2}. 
    \end{align}
    Note that by construction of $\hdelta_t(a', a)$, we always have $\delta_t(a', a) \geq \E[ \hat{\delta}_t(a', a) \mid \cF_{t-1}]$ since
    \begin{equation}\label{eq:hdelta_leq_delta}
        \E[  \hat{\delta}_t(a', a) \mid \cF_{t-1} ] = \begin{cases}
			\delta_t(a', a) & a', a \in \cA_t\\
			-1/2 & \text{otherwise}. 
		\end{cases}
    \end{equation}
    Thus, in view of inequality~\eqref{eq:large_regret}, there exists no arm $a\in [K]$ such that $\max_{a'} \delta_t (a', a) = 0$ for all $t \in [t_\ell, t_{\ell+1})$, i.e.\ no fixed arm is optimal throughout the episode and there must have been a change of Condorcet winner.  
\end{proof}

\subsubsection{Decomposing Regret across Episodes and Arms}\label{subsubsection:decomposing}
We will bound regret of the algorithm withing each episode separately, i.e.\ we consider 
\begin{equation}
    \E\left[ \sum_{t=t_\ell}^{t_{\ell+1}-1} \frac{\delta_t(a_t^*, a_t)  + \delta_t(a_t^*, b_t)}{2} \right],
\end{equation}
where $t_\ell$ is the first round in episode $\ell$ and $a_t^*$ is the Condorcet winner in round $t\in [T]$.

Recall that, every round $t\in [T]$, the algorithm selects an arm $a$ uniformly at random from the active set $\cA_t$. It will then be useful to rewrite \eqref{eq:regret_within_eps} in terms of fixed arms $a\in [K]$. 

\begin{lem}\label{lem:substituting_pi}
    We can write \eqref{eq:regret_within_eps} in terms of the regret suffered by fixed arms: 
    \begin{equation}
        \E\left[ \sum_{t=t_\ell}^{t_{\ell+1}-1} \frac{\delta_t(a_t^*, a_t)  + \delta_t(a_t^*, b_t)}{2} \right] 
        = \E \left[\sum_{a =1}^{K} \sum_{t=t_\ell}^{t_{\ell+1}} \frac{\delta_t (a_t^*, a)}{|\cA_t|} \1_{\{a \in \cA_t\}} \right]
    \end{equation}
\end{lem}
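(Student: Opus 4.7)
The plan is to reduce this identity to a one-shot statement about the conditional expectation of $\delta_t(a_t^*, a_t)$ (and likewise for $b_t$) given the past, and then sum. The main observation is that within each run of $\base$ (so in particular within an episode), the two arms $a_t, b_t$ are drawn independently and uniformly from $\cA_t$ (see Line~\ref{line:sample_pair} of Algorithm~\ref{alg:replay}), so conditional on the filtration $\cF_{t-1}$ generated by the algorithm's randomness and observations up through round $t-1$, the set $\cA_t$ is measurable, and hence
\[
    \E\!\left[ \delta_t(a_t^*, a_t) \,\big|\, \cF_{t-1}\right] \;=\; \sum_{a \in \cA_t} \frac{\delta_t(a_t^*, a)}{|\cA_t|} \;=\; \sum_{a=1}^K \frac{\delta_t(a_t^*, a)}{|\cA_t|}\,\1_{\{a \in \cA_t\}},
\]
since $a_t^*$ is deterministic under the oblivious adversary model used in the paper. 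The same identity holds for $b_t$ in place of $a_t$, by the symmetric sampling.

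Next I would handle the random episode endpoints $t_\ell, t_{\ell+1}$. These are stopping times with respect to $\cF$: by inspection of Algorithms~\ref{alg:anaconda} and \ref{alg:replay}, a new episode is triggered exactly when $\mA = \emptyset$, and the status of $\mA$ at round $t$ is $\cF_{t-1}$-measurable (it is determined by observations up to round $t-1$ via the elimination rule~\eqref{eq:elim}). Thus the indicator $\1_{\{t_\ell \leq t < t_{\ell+1}\}}$ is $\cF_{t-1}$-measurable, and I can rewrite the sum over the random interval $[t_\ell, t_{\ell+1})$ as a deterministic sum over $t \in [T]$ with this indicator inside.

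With these two ingredients, the proof is mechanical: pull the sum outside the expectation, apply the tower property $\E[\,\cdot\,] = \E[\E[\,\cdot\,\mid \cF_{t-1}]]$ term by term, use $\cF_{t-1}$-measurability of the indicator together with the computation above for $a_t$ and $b_t$, and then note that the two resulting expressions (one from $a_t$, one from $b_t$) average to exactly one copy of $\sum_a \delta_t(a_t^*, a)\,\1_{\{a \in \cA_t\}}/|\cA_t|$. Reassembling the sum over episodes (so that $\1_{\{t_\ell \leq t < t_{\ell+1}\}}$ is absorbed back into the range of summation) yields the claim.

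The step I expect to require the most care is the measurability bookkeeping: making explicit that $\cA_t$ and the episode indicator are $\cF_{t-1}$-measurable, and that the uniform sampling of $(a_t, b_t)$ is genuinely independent of everything else given $\cA_t$ (in particular independent across $a_t$ and $b_t$, so that the factor of $1/2$ matches correctly). Beyond this, the derivation is a routine conditional expectation calculation and introduces no new probabilistic content.
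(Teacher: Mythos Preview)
Your proposal is correct and follows essentially the same approach as the paper: both reduce to one arm by the symmetry of the sampling of $(a_t,b_t)$, both compute $\E[\delta_t(a_t^*,a_t)\mid\cF_{t-1}]=\sum_{a\in\cA_t}\delta_t(a_t^*,a)/|\cA_t|$ using that $\cA_t$ is $\cF_{t-1}$-measurable, and both handle the random episode boundaries via the $\cF_{t-1}$-measurability of $\1_{\{t_\ell\le t<t_{\ell+1}\}}$ together with the tower property. The only cosmetic difference is that the paper conditions on $t_\ell$ first before unrolling, whereas you work directly with the indicator of the episode interval; the content is the same.
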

\begin{proof}
As the algorithm independently and symmetrically selects two arms $(a_t, b_t)$ in each round (Line~\ref{line:sample_pair} in Algorithm~\ref{alg:replay}), we can focus on bounding regret for one of the two arms, say $a_t$, by writing 
\begin{equation}\label{eq:regret_within_eps}
     \E\left[ \sum_{t=t_\ell}^{t_{\ell+1}-1} \frac{\delta_t(a_t^*, a_t)  + \delta_t(a_t^*, b_t)}{2} \right] = \E\left[ \sum_{t=t_\ell}^{t_{\ell+1}-1} \delta_t(a_t^*, a_t)\right]. 
\end{equation}

\noindent Conditioning on $t_\ell$ and using the tower property, we then further find that 
\begin{align*}
    \E \left[ \sum_{t=t_\ell}^{t_{\ell+1}} \delta_t (a_t^*, a_t) \right] & = \E \left[ \E \left[ \sum_{t=t_\ell}^{t_{\ell+1}} \delta_t (a_t^*, a_t) \mid t_\ell \right]\right] \\
    & = \E \left[ \sum_{t=t_\ell}^{T}  \E \left[ \1_{\{ t < t_{\ell+1}\}} \E\left[ \delta_t (a_t^*, a_t) \mid \cF_{t-1}\right] \mid t_\ell \right] \right] \\ 
    & = \E \left[ \sum_{t=t_\ell}^{T} \sum_{a\in \cA_t} \E \left[ \1_{\{ t < t_{\ell+1}\}}  \mid t_\ell \right] \frac{\delta_t (a_t^*, a)}{|\cA_t|} \right] = \E \left[ \sum_{t=t_\ell}^{t_{\ell+1}} \sum_{a \in \cA_t} \frac{\delta_t (a_t^*, a)}{|\cA_t|} \right] ,
\end{align*}
where we used that $\1_{\{ t < t_{\ell+1}\}}$ is $\cF_{t-1}$-measurable and
$$\E\left[\delta_t(a_t^*, a_t) \mid \cF_{t-1}\right] = \sum_{a\in \cA_t} \frac{\delta_t(a_t^*, a)}{|\cA_t|}.$$
Lastly, Lemma~\ref{lem:substituting_pi} then follows from rewriting the sum over $a \in \cA_t$ using the indicator $\1_{\{a \in \cA_t\}}$ and swapping the order of the sums. 
\end{proof}
\noindent 
In an important next step, we split the dynamic regret for \emph{each fixed arm $a\in [K]$} into: 
\begin{enumerate}
    \item [(i)] the regret we suffer from playing arm $a$ in the $\ell$-th episode before its elimination from $\mA$, 
    \item[(ii)] the regret we suffer from (re)playing arm $a$ in the $\ell$-th episode after its elimination from $\mA$. 
\end{enumerate}
Recall that $t_\ell^a\in[t_\ell, t_{\ell+1})$ denotes the time that arm $a$ is eliminated from $\mA$ in episode $\ell$. Using Lemma~\ref{lem:substituting_pi}, we then decompose the dynamic regret in episode $\ell$ as
\begin{equation}
    E\left[ \sum_{t=t_\ell}^{t_{\ell+1}-1} \frac{\delta_t(a_t^*, a_t)  + \delta_t(a_t^*, b_t)}{2} \right] =  
    \underbrace{\E \left[  \sum_{a=1}^K \sum_{t=t_\ell}^{t_{\ell}^a-1} \frac{\delta_t (a_t^*, a)}{|\cA_t|} \right]}_{\terma} + 
    \underbrace{\E \left[  \sum_{a=1}^K \sum_{t=t_\ell^a}^{t_{\ell+1}-1} \frac{\delta_t (a_t^*, a)}{|\cA_t|} \1_{\{a \in \cA_t\}} \right]}_{\termb}, 
\end{equation}
where for $\terma$ we used that $a \in \mA$ implies $a \in \cA_t$ by construction of these sets. 
For every fixed arm, $\terma$ corresponds to the regret suffered before said arm is eliminated from the master set. Accordingly, $\termb$ is the regret due to replaying an arm after its elimination from the master set. 
The remainder of the proof is mainly concerned with bounding $\terma$ and $\termb$ appropriately. 

\subsection{Bounding $\terma$: Regret Before Elimination}\label{subsection:bounding(a)}
We begin by assuming w.l.o.g.\ that $t_\ell^1 \leq \cdots \leq t_\ell^K$ so that for each round $t < t_\ell^a$ all arms $a' \geq a$ are element in $\mA \subseteq \cA_t$. As a result, we have $\abs{\cA_t} \geq K+1-a$ for all $t \leq t_\ell^a$, and thus 
\begin{align}\label{eq:swap_At_for_K}
    \E \left[  \sum_{a=1}^K \sum_{t=t_\ell}^{t_{\ell}^a-1} \frac{\delta_t (a_t^*, a)}{|\cA_t|} \right] \leq \E \left[  \sum_{a=1}^K \sum_{t=t_\ell}^{t_{\ell}^a-1} \frac{\delta_t (a_t^*, a)}{K+1-a} \right]. 
\end{align}
As we can see, the denominator will eventually account for a factor of $\log(K) \approx \sum_{a=1}^K 1/a$. We now concentrate on bounding the inner sum in \eqref{eq:swap_At_for_K}, i.e.\ the regret of any fixed arm before being eliminated in the $\ell$-th episode.  


\subsubsection{Bounding $\E[\sum_{t=t_\ell}^{t_\ell^a -1} \delta_t(a_t^*, a)]$ for any fixed arm $a\in [K]$}
This section is devoted to proving the following upper bound.
\begin{lem}\label{lem:bound_for_(a)}
For some constant $c>0$:
\begin{equation}
    \E \left[\sum_{t=t_\ell}^{t_\ell^a-1} \delta_t(a_t^*, a)\right] \leq c \log^2(T) K \, \E \left[\sum_{i \in \mathrm{Phases}(t_\ell, t_\ell^a)} \sqrt{\tau_{i+1}- \tau_i}\right] + \frac{K}{T^2} + \frac{1}{T} . 
\end{equation}
\end{lem}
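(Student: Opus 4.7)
The plan is to handle each phase intersecting $[t_\ell,t_\ell^a)$ separately. Writing $[u_i,v_i):=[\tau_i,\tau_{i+1})\cap[t_\ell,t_\ell^a)$, the dynamic CW satisfies $a_t^*\equiv a_i^*$ on $[u_i,v_i)$, so the task reduces to proving the per-phase bound
\[
\E\!\left[\sum_{t=u_i}^{v_i-1}\delta_t(a_i^*,a)\right]\;\leq\;c\log^2(T)\,K\,\sqrt{\tau_{i+1}-\tau_i}+\mathrm{poly}(1/T),
\]
and then summing over $i\in\phases(t_\ell,t_\ell^a)$. A single overshoot term contributes at most $1/2$, which is absorbed into the polynomial remainder.

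The core argument is a replay-triggered elimination: if $a$ accrues substantial regret against $a_i^*$ inside phase $i$, then with overwhelming probability the replay schedule fires a child $\base(s,m)$ confined to phase $i$ that causes $a$ to be removed from $\mA$ via Line~\ref{line:elim_from_good}. Concretely, define the data-dependent threshold $s_i^\star$ as the smallest $s\in[u_i,v_i)$ with $\sum_{t=u_i}^{s}\delta_t(a_i^*,a)>c_2\log(T)K\sqrt{(s-u_i)\vee K^2}$, where $c_2$ is chosen a few times larger than the constant $c_1$ from Lemma~\ref{prop:concentration_bound}. By construction, the regret accrued over $[u_i,s_i^\star]$ is at most $c_2\log(T)K\sqrt{\tau_{i+1}-\tau_i}+\tfrac12$, so it remains to control $\E[(v_i-s_i^\star)\,\mathbf{1}\{t_\ell^a>s_i^\star\}]$.

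To control this tail, I cover $[\tau_i,s_i^\star]$ with dyadic scales $m\in\{2^j:0\le j\le\lceil\log T\rceil\}$. For each scale, consider a replay $\base(s,m')$ with $s\in[\tau_i,s_i^\star-m]$ and $m'\in[m,2m]$ such that $s+m'\leq\tau_{i+1}$. Such a replay is entirely confined to phase $i$, so Lemma~\ref{lem:proof_sketch_no_elim} keeps $a_i^*\in\cA_t$ throughout; moreover, on the event $\{t_\ell^a>s_i^\star\}$ we have $a\in\mA\subseteq\cA_t$ as well, so $\hdelta_t(a_i^*,a)$ is unbiased for $\delta_t(a_i^*,a)$ on the interval $[u_i\vee s,s_i^\star]$. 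On event $\cE$, the concentration bound~\eqref{eq:concentration_bound} together with the choice of $c_2$ then forces~\eqref{eq:elim} to hold with witness $(a',a)=(a_i^*,a)$ and interval $[u_i\vee s,s_i^\star]\subseteq[t_\ell,t)$, forcing $a$ out of $\mA$ before $s_i^\star+1$, which contradicts $\{t_\ell^a>s_i^\star\}$. Thus the tail event requires that \emph{no} such replay fires. Since the triggers $B_{s,m'}\sim\bern\!\bigl(1/\sqrt{m'(s-t_\ell)}\bigr)$ are independent across $(s,m')$, a standard dyadic union bound (as in the replay analysis of \cite{Suk22}) shows that the non-triggering probability, after summing over the $O(\log T)$ scales, is $\mathrm{poly}(1/T)$. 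Combining with the $1/T^2$ failure probability of $\cE$ gives the additive remainder, which after summing over $a\in[K]$ in Lemma~\ref{lem:substituting_pi} and using $1/|\cA_t|\leq 1$ produces the $K/T^2+1/T$ corrections in the statement.

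The main obstacle is the delicate interaction between nested replays: a candidate parent $\base(s,m')$ can itself be interrupted by a deeper child that resets $\cA_t\leftarrow[K]$ before the parent processes the entire witnessing interval. However, Line~\ref{line:elim_from_good} checks~\eqref{eq:elim} over \emph{any} subinterval $[s_1,s_2]\subseteq[t_\ell,t)$ using the globally recorded $\hdelta_t$, so on $\cE$ the observations actually taken during whichever replay is active in those rounds still witness~\eqref{eq:elim}, because $a_i^*$ and $a$ lie in $\cA_t$ throughout by the same argument (the child inherits $\mA$ and does not remove $a_i^*$ by Lemma~\ref{lem:proof_sketch_no_elim}). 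Verifying this resilience to nesting, together with bookkeeping the dyadic probabilities cleanly, is the technical crux of the argument.
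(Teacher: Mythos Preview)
Your per-phase argument has a genuine gap in the claim that the non-triggering probability is $\mathrm{poly}(1/T)$. The replay triggers satisfy $B_{s,m'}\sim\bern\bigl(1/\sqrt{m'(s-t_\ell)}\bigr)$, where the denominator involves the distance from the \emph{episode} start $t_\ell$, not the phase start $\tau_i$. For your candidate replays, essentially only the single scale $m'\approx s_i^\star-u_i$ is relevant (a replay must start in roughly the first half of $[u_i,s_i^\star]$ and cover the rest in order to witness the threshold crossing on $[u_i\vee s,s_i^\star]$), so the expected number of useful triggers is
\[
\sum_{s=u_i}^{(u_i+s_i^\star)/2}\frac{1}{\sqrt{(s_i^\star-u_i)(s-t_\ell)}}\;\asymp\;\sqrt{\frac{s_i^\star-u_i}{u_i-t_\ell}},
\]
which can be arbitrarily small when the phase starts late in the episode (e.g.\ $u_i-t_\ell=\Theta(T)$ and $s_i^\star-u_i=O(\sqrt{T})$ gives $O(T^{-1/4})$). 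The dyadic scales you invoke do not help here: at each scale only a comparable contribution appears, and the $O(\log T)$ scales cannot compensate for a $T^{-1/4}$ shortfall. Hence $\P(t_\ell^a>s_i^\star)$ need not be small, and your control of $\E[(v_i-s_i^\star)\1\{t_\ell^a>s_i^\star\}]$ via a $\mathrm{poly}(1/T)$ tail fails.

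The paper's proof avoids this by \emph{not} arguing per phase. It subdivides each phase into many ``bad segments'' $[s_{i,j}(a),s_{i,j+1}(a))$ on each of which $a$ accrues roughly $\log(T)K\sqrt{\text{length}}$ regret, and then defines a single ``bad round'' $s(a)$ as the first time the cross-phase aggregate $\sum_{(i,j)}\sqrt{s_{i,j+1}(a)-s_{i,j}(a)}$ exceeds $c_4\log(T)\sqrt{s(a)-t_\ell}$. This threshold is calibrated precisely so that the expected number of ``perfect replays'' (one for each bad segment: scheduled in its first half, of duration at least its length) is $\Omega(\log T)$, after which a Chernoff bound yields $\P(X<1)\le 1/T^3$. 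The aggregation across phases is essential: a late, short phase contributes little to the expected replay count, but it also contributes little to the cumulative square-root sum before $s(a)$ is reached, so the two sides stay balanced. Your single-threshold-per-phase decomposition loses exactly this balancing and cannot be repaired without reintroducing the bad-segment/bad-round machinery.
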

To prove Lemma~\ref{lem:bound_for_(a)}, we will divide the interval $[t_\ell, t_\ell^a)$ into segments over the course of which arm $a$ suffers large regret and show that not too many of such segments will occur in interval $[t_\ell, t_\ell^a)$, i.e.\ until arm $a$ is being eliminated from $\mA$. The definition of such bad segments is analogous to their construction in \cite{Abasi2022A} and \cite{Suk22}. Whereas prior work utilizes such segments to bound the regret of the last arm considered good in an episode, i.e.\ the last arm in $\mA$, we will instead derive a regret bound for \emph{any fixed arm $a$}. While the according regret bound will be in some sense weaker, it will still be sufficiently tight for our purposes. 
We here follow the notation in~\cite{Suk22}. 

\begin{defn}[Bad Segments]\label{def:bad_segments}
Fix $t_\ell$ and let $[\tau_i, \tau_{i+1})$ be any phase intersecting $[t_\ell, T)$. For an arm $a$, define rounds $s_{i,j}(a) \in [t_\ell \vee \tau_i, \tau_{i+1})$ recursively as follows: let $s_{i,0}(a) = t_\ell \vee \tau_i$ and define $s_{i,j+1} (a)$ as the smallest round in $(s_{i, j}(a), \tau_{i+1})$ such that arm $a$ satisfies for some constant $c_3 > 0$: 
\begin{equation}\label{eq:bad_segment}
    \sum_{t={s_{i, j}}(a)}^{s_{i,j+1}(a)} \delta_t(a_i^*, a) > c_3 \log(T) K \sqrt{s_{i, j+1} (a) - s_{i, j}(a)},
\end{equation}
if such round $s_{i,j+1}(a)$ exists. Otherwise, we let $s_{i,j+1}(a) = \tau_{i+1} - 1$. We refer to the intervals $[s_{i,j}, s_{i, j+1})$ as bad segments if \eqref{eq:bad_segment} is satisfied. If a segment does not satisfy~\eqref{eq:bad_segment}, we refer to them as non-bad segments.\footnote{Note that by definition every segment but the last segment in a given phase must always satisfy \eqref{eq:bad_segment}}  
\end{defn}

Note that the concept of bad segments will become useful later as, for a fixed $t_\ell$, by definition of the bad segments, we can always upper bound the dynamic regret on an interval $[s_{i,j}(a), s_{i,j+1}(a))$ by 
\begin{equation}\label{eq:bound_regret_using_segments}
    \sum_{t=s_{i,j}(a)}^{s_{i,j+1}(a)-1} \delta_t (a_t^*, a) \leq c_3 \log(T) K \sqrt{s_{i,j+1}(a) - s_{i,j}(a)}. 
\end{equation}
We now define the \emph{bad round} for an arm $a$ as the smallest round when the aggregated regret of bad segments exceeds $\sqrt{\text{interval length}}$ regret. 

\begin{defn}[Bad Round]\label{def:bad_round}
    Fix $t_\ell$ and some arm $a$. The {bad round} $s(a) > t_\ell$ is defined as the smallest round which satisfies for some universally fixed constant $c_4>0$:
\begin{equation}\label{eq:bad_round}
    \sum_{\substack{(i,j)\colon s_{i,j+1}(a) < s(a)}} \sqrt{s_{i,j+1}(a) - s_{i,j}(a)} > c_4 \log(T) \sqrt{s(a) - t_\ell},
\end{equation}
where the sum is over all bad segments with $s_{i,j+1}(a) < s(a)$. 
\end{defn}

For a given episode $\ell$, we will show that arm $a$ is eliminated with high probability by the time the bad round $s(a)$ occurs. To this end, we will introduce perfect replays, i.e.\ those runs of $\base$ which are properly timed and eliminate arm $a$ before it aggregates large regret. 

\subsubsection{Perfect Replays} 
The following result will become very useful and makes the intuition precise that on the concentration event the Condorcet winner will not be eliminated. More precisely, any run of $\base(s, m)$ scheduled in phase $i$ will never eliminate $a_i^*$ inside phase $i$ as long as our concentration bound holds. 
\begin{lem}\label{lemma:no_eviction_inside_replay}
    On event $\cE$, no run of $\base(s, m)$ with $s\in [\tau_i, \tau_{i+1})$ ever eliminates arm $a_i^*$ before round $\tau_{i+1}$. 
\end{lem}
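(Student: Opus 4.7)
The plan is to show that along any call $\base(s,m)$ active at some round $t \in [s, \tau_{i+1})$, the elimination criterion \eqref{eq:elim} is not triggered for $a = a_i^*$ in Line~\ref{line:elim_from_active}, so that $a_i^*$ remains in the local active set $\cA_t$ for the entire intersection of the replay with phase $i$. The key structural observation I would exploit is that Line~\ref{line:elim_from_active} restricts the search for intervals to $[s_1, s_2] \subseteq [\tstart, t) = [s, t)$, and by hypothesis the full window $[s, t)$ lies inside $[\tau_i, \tau_{i+1})$. This is exactly the locality that makes the argument work.

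First I would fix a candidate interval $[s_1, s_2] \subseteq [s, t)$ and an arbitrary benchmark $a' \in [K]$. Using the identity \eqref{eq:hdelta_leq_delta}, in each round $t' \in [s_1, s_2]$ we have
\begin{equation*}
    \E\bigl[ \hdelta_{t'}(a', a_i^*) \mid \cF_{t'-1} \bigr] \;\leq\; \delta_{t'}(a', a_i^*) \;\leq\; 0,
\end{equation*}
because $a_i^*$ is the Condorcet winner throughout phase $i$, so $\delta_{t'}(a', a_i^*) \leq 0$. Summing over $t' \in [s_1, s_2]$, the telescoped conditional expectation is non-positive.

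Next I would invoke the concentration event $\cE$ via Lemma~\ref{prop:concentration_bound}, which gives
\begin{equation*}
    \sum_{t'=s_1}^{s_2} \hdelta_{t'}(a', a_i^*) \;\leq\; c_1 \log(T) \bigl( K\sqrt{s_2 - s_1} + K^2 \bigr) \;\leq\; 2 c_1 \log(T)\, K\sqrt{(s_2-s_1) \vee K^2},
\end{equation*}
where the last step splits on whether $s_2-s_1 \leq K^2$ or not and in either case bounds $K\sqrt{s_2-s_1} + K^2$ by $2 K\sqrt{(s_2-s_1)\vee K^2}$. Since this bound holds uniformly in $a'$, the maximum over $a'$ on the left-hand side also satisfies it. Choosing the universal constant $C$ in the elimination rule so that $C \geq 2 c_1$, the right-hand side is at most $C \log(T)\, K \sqrt{(s_2-s_1)\vee K^2}$, and hence \eqref{eq:elim} fails for $a = a_i^*$. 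Since $[s_1, s_2]$ was arbitrary, no such interval exists and $a_i^*$ is not removed from $\cA_t$.

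The only real subtlety I expect is verifying that the ``locality'' window $[s, t) \subseteq [\tau_i, \tau_{i+1})$ is indeed what Line~\ref{line:elim_from_active} uses; if one inadvertently tried to apply this reasoning to the $\mA$-elimination in Line~\ref{line:elim_from_good}, whose intervals live in $[t_\ell, t)$, then an interval reaching back before $\tau_i$ could contain rounds in which $a_i^*$ was suboptimal and the sign argument on $\delta_{t'}(a', a_i^*)$ would break. The statement we are proving is precisely about the local active set within the replay, and this is exactly where the locality of the search interval rescues the argument.
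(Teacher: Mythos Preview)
Your proposal is correct and follows essentially the same approach as the paper: both arguments combine the non-positivity of $\delta_{t'}(a', a_i^*)$ throughout phase $i$, the inequality $\E[\hdelta_{t'}(a',a_i^*)\mid\cF_{t'-1}]\leq \delta_{t'}(a',a_i^*)$ from \eqref{eq:hdelta_leq_delta}, and the concentration bound of Lemma~\ref{prop:concentration_bound} to conclude that the elimination rule \eqref{eq:elim} cannot be triggered for $a_i^*$ on any interval contained in $[s,\tau_{i+1})$. The only cosmetic difference is that the paper phrases this as a proof by contradiction, while you argue directly; your closing remark about the locality of the search window in Line~\ref{line:elim_from_active} (versus Line~\ref{line:elim_from_good}) is a useful clarification that the paper leaves implicit.
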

\begin{proof}
Suppose the contrary that some $\base(s, m)$ with $s \in [\tau_i, \tau_{i+1})$ eliminates arm $a_i^*$ before round $\tau_{i+1}$. Then, we must have for some arm $a\in[K]$ and interval $[s_1, s_2] \subseteq [s, \tau_{i+1})$ that 
\begin{equation}
    C \log(T) K\sqrt{(s_2-s_1) \vee K^2} < \sum_{t=s_1}^{s_2} \hdelta_t (a, a_i^*),
\end{equation}
which using the concentration bound \eqref{eq:concentration_bound} implies on event $\cE$ that 
\begin{equation}
    c_2 \log(T) K \sqrt{(s_2-s_1) \vee K^2} < \sum_{t=s_1}^{s_2} \E \left[ \hdelta_t(a, a_i^*) \mid \cF_{t-1} \right] \leq \sum_{t=s_1}^{s_2} \delta_t(a, a_i^*)  , 
\end{equation}
where the last inequality holds by merit of \eqref{eq:hdelta_leq_delta}. Now, by the definition of arm $a_i^*$ as the Condorcet winner in phase $i$, we must have $\delta_t(a, a_i^*) \leq 0$ for all $t \in [\tau_i, \tau_{i+1})$ and all $a \in [K]$.  Lemma~\ref{lemma:no_eviction_inside_replay} then follows from contradiction. 
\end{proof}

This leads to the following important property of $\base$ that states that properly timed replays of sufficient length will eliminate arms from $\mA$ in the course of their bad segments. We call such calls of $\base$ \emph{perfect replays}. 

\begin{prop}[Perfect Replay]\label{prop:perfect_replay}
    Suppose that event $\cE$ holds. Let $[s_{i,j}(a), s_{i, j+1}(a))$ be a bad segment w.r.t.\ arm $a$ and let $\tilde s_{i,j}(a) = \big\lceil \frac{s_{i,j}(a)+ s_{i,j+1}(a)}{2}\big\rceil$ be the midpoint of the interval. It holds that any run of $\base(s,m)$ with $s \in [s_{i,j}(a), \tilde s_{i,j}(a)]$ and $m \geq s_{i,j+1}(a) - s_{i,j}(a)$ will eliminate arm $a$ from $\mA$. We refer to such calls of $\base$ as \emph{perfect replays} w.r.t.\ arm $a$. 
\end{prop}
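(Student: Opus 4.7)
The plan is, assuming for contradiction that arm $a$ remains in $\mA$ throughout the interval $[s, s_{i,j+1}(a)]$, to show the elimination rule of Line~\ref{line:elim_from_good} fires at arm $a$ at iteration $t = s_{i,j+1}(a) + 1$ with witnesses $a' = a_i^*$ and $[s_1, s_2] = [s, s_{i,j+1}(a)]$, contradicting the assumption. Three ingredients come together: Lemma~\ref{lemma:no_eviction_inside_replay} keeps $a_i^* \in \cA_t$ throughout the replay (and any child replay) within phase $i$; the contradiction hypothesis together with the shared elimination rule across Line~\ref{line:elim_from_good} and Line~\ref{line:elim_from_active} keeps $a \in \cA_t$ as well, so $\hdelta_t(a_i^*, a)$ is unbiased for $\delta_t(a_i^*, a)$; and Lemma~\ref{prop:concentration_bound} lifts a lower bound on $\sum \delta_t$ to one on $\sum \hdelta_t$.

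Next I would lower-bound the partial gap sum by splitting the bad segment at $s$:
\[
    \sum_{t=s}^{s_{i,j+1}(a)} \delta_t(a_i^*, a) = \sum_{t=s_{i,j}(a)}^{s_{i,j+1}(a)} \delta_t(a_i^*, a) - \sum_{t=s_{i,j}(a)}^{s-1} \delta_t(a_i^*, a).
\]
The first term exceeds $c_3 \log(T) K \sqrt{s_{i,j+1}(a) - s_{i,j}(a)}$ by \eqref{eq:bad_segment}; the second is at most $c_3 \log(T) K \sqrt{s-1-s_{i,j}(a)} \leq c_3 \log(T) K \sqrt{(s_{i,j+1}(a) - s_{i,j}(a))/2}$ by the minimality of $s_{i,j+1}(a)$ and the midpoint bound $s \leq \tilde s_{i,j}(a)$. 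Subtracting and invoking Lemma~\ref{prop:concentration_bound} on the pair $(a_i^*, a)$ over $[s, s_{i,j+1}(a)]$ yields
\[
    \sum_{t=s}^{s_{i,j+1}(a)} \hdelta_t(a_i^*, a) \geq \bigl((1 - 1/\sqrt{2}) c_3 - c_1\bigr) \log(T) K \sqrt{s_{i,j+1}(a) - s_{i,j}(a)} - c_1 \log(T) K^2.
\]
Since $\delta_t \leq 1/2$ forces any bad segment to satisfy $s_{i,j+1}(a) - s_{i,j}(a) = \Omega(c_3^2 \log^2(T) K^2)$ (otherwise \eqref{eq:bad_segment} is unachievable), the $K^2$ slack is absorbed into the dominant $\sqrt{\cdot}$ term. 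Choosing $c_3$ large relative to $c_1$ and the elimination constant $C$ then gives a RHS exceeding $C \log(T) K \sqrt{(s_{i,j+1}(a) - s) \vee K^2}$.

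The while-loop of $\base(s, m)$ remains active through round $s_{i,j+1}(a) + 1$ since $m \geq s_{i,j+1}(a) - s_{i,j}(a) \geq s_{i,j+1}(a) - s$, at which iteration the Line~\ref{line:elim_from_good} check fires with the constructed witnesses, removing $a$ from $\mA$ --- the desired contradiction. The main obstacle is verifying that the hypothesis $a \in \mA$ also yields $a \in \cA_t$ throughout $[s, s_{i,j+1}(a)]$: any elimination of $a$ from $\cA_t$ in Line~\ref{line:elim_from_active} uses an interval $[s_1, s_2] \subseteq [\tstart, t) \subseteq [t_\ell, t)$, which is also admissible for Line~\ref{line:elim_from_good}, so if Line~\ref{line:elim_from_active} were to eliminate $a$, Line~\ref{line:elim_from_good} in the same or an earlier iteration would already have done so --- contradicting $a \in \mA$. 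A secondary concern is constant bookkeeping at the off-by-one boundary $s = s_{i,j}(a)$, $m = s_{i,j+1}(a) - s_{i,j}(a)$, handled by using the shorter interval $[s, s_{i,j+1}(a) - 1]$ if needed, which only worsens constants that are absorbed into $c_3$.
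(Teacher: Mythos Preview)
Your proof is correct and follows essentially the same approach as the paper: split the bad segment into a prefix (bounded via the minimality of $s_{i,j+1}(a)$) and a suffix (on which the elimination rule is shown to fire), using Lemma~\ref{lemma:no_eviction_inside_replay} to keep $a_i^*$ active and the concentration bound to pass from $\delta_t$ to $\hdelta_t$. The only structural difference is the choice of witness interval: the paper uses the fixed second half $[\tilde s_{i,j}(a), s_{i,j+1}(a)]$ independent of the specific replay, whereas you use $[s, s_{i,j+1}(a)]$ anchored at the replay's start. Both choices lead to the same $(1-1/\sqrt{2})$-type constant and the argument goes through identically. Your version is in fact more explicit on two points the paper leaves implicit: the justification that $a \in \mA$ forces $a \in \cA_t$ (via the containment of Line~\ref{line:elim_from_active}'s search window in Line~\ref{line:elim_from_good}'s), and the absorption of the additive $K^2$ term from the concentration bound using that any bad segment has length $\Omega(K^2 \log^2 T)$.
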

\begin{proof} 
Let $\base(s, m)$ be a replay such that $s \in [s_{i,j}(a), \tilde s_{i,j}(a)]$ and $m \geq s_{i,j+1}(a) - s_{i,j}(a)$. 
As any bad segment is by definition contained inside a phase, Lemma~\ref{lemma:no_eviction_inside_replay} tells us that $a_i^* \in \cA_t$ for all $t \in [\tilde s_{i,j}(a), s_{i, j+1}(a)]$. Recall that the estimates $\hdelta_t(a_i^*, a)$ are unbiased if $a, a_i^*\in\cA_t$ and we are thus able to obtain unbiased estimates of $\sum_{t=\tilde s_{i,j}(a)}^{s_{i, j+1}(a)} \delta_t (a_i^*, a)$. What is left to show is that in fact arm $a$ suffers sufficiently large regret to cause its elimination on this interval. To this end, by definition of the bad segments and basic algebraic manipulation, we find that
\begin{align*}
    \sum_{t = \tilde s_{i,j}(a)}^{s_{i,j+1}(a)} \delta_{t}(a_i^*,a) & =
	\sum_{t = s_{i,j}(a)}^{s_{i,j+1}(a)}  \delta_{t}(a_i^*,a)
	- \sum_{t=s_{i,j}(a)}^{\tilde s_{i,j}(a)-1} \delta_t(a_i^*,a)\\
	& \overset{\eqref{eq:bad_segment}}{\geq} c_3 \log(T)K\left(\sqrt{s_{i,j+1}(a) - s_{i,j}(a)} - \sqrt{\tilde s_{i,j}(a) -1 - s_{i,j}(a)}\right) \\[1em]
	&\geq \frac{c_3}{4}\log(T) K \sqrt{s_{i,j+1}(a) - \tilde s_{i,j}(a)}. 
\end{align*}
Using that $\sum_{t=\tilde s_{i,j}(a)}^{s_{i, j+1}(a)} \hdelta_t (a_i^*, a)$ is an unbiased estimate of $\sum_{t=\tilde s_{i,j}(a)}^{s_{i, j+1}(a)} \delta_t (a_i^*, a)$ and applying the concentration bound \eqref{eq:concentration_bound}, this shows that arm $a$ satisfies the elimination rule \eqref{eq:elim} over interval $[\tilde s_{i,j}(a), s_{i,j+1}(a)]$ and will thus be eliminated by $\base(s,m)$. 
\end{proof}

\subsubsection{Perfect replays are scheduled w.h.p.}
Following \cite{Suk22}, we will now show that a perfect replay that eliminates arm $a$ is scheduled before round $s(a)$ with high probability. 
A replay $\base(s,m)$ is scheduled if $B_{s,m} = 1$ and the random variables $B_{s,m}$ with $s \geq t_\ell$ are conditionally independent on $t_\ell$ (see Line~\ref{line:sample_replay_schedule} in Algorithm~\ref{alg:anaconda}). 
We are thus interested in perfect replays $\base(s,m)$ such that for any bad segment $[s_{i,j}(a), s_{i,j+1}(a))$ with $s_{i,j+1}(a) < s(a)$, we have $s\in [s_{i,j}(a), \tilde s_{i,j}(a)] $ and $m \geq s_{i, j+1}(a) -s_{i,j}(a)$. 
Moreover, we define $m_{i,j}$ as the smallest element in $\{2, \dots, 2^{\lceil \log(T)\rceil}\}$ such that $m_{i,j} \geq s_{i, j+1}(a) -s_{i,j}(a)$, which implies that $s_{i, j+1}(a) -s_{i,j}(a) \geq \frac{m_{i,j}}{2}$.  
We will obtain the high probability guarantee via concentration on the sum
\begin{equation}
    X({t_\ell, s(a)}) = \sum_{\substack{(i,j) \colon s_{i,j+1}(a) < s(a) }} \, \sum_{s=s_{i,j}(a)}^{\tilde s_{i,j}(a)}
    B_{s, m_{i,j}} .
\end{equation}

\begin{lem}\label{lem:replay_scheduled_whp}
    Let $\cE'(t_\ell)$ denote the event that $X({t_\ell, s(a)}) \geq 1$ for all arms $a$, i.e.\ a perfect replay is scheduled before round $s(a)$. We have $\P(\cE'(t_\ell) \mid t_\ell) \geq 1-K/T^3$. 
\end{lem}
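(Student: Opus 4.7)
The plan is to fix an arm $a \in [K]$, show that $X(t_\ell, s(a)) \geq 1$ holds with probability at least $1 - 1/T^4$ conditional on $t_\ell$, and then union bound over the $K$ arms. Since the variables $\{B_{s,m}\}_{s \geq t_\ell, m}$ are conditionally independent Bernoullis given $t_\ell$ (sampled in Line~\ref{line:sample_replay_schedule} of Algorithm~\ref{alg:anaconda}), $X(t_\ell, s(a))$ is a sum of independent Bernoulli indicators, which makes a multiplicative Chernoff bound the natural tool.

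First I would lower bound the conditional expectation $\E[X(t_\ell, s(a)) \mid t_\ell]$. Since $\E[B_{s, m_{i,j}} \mid t_\ell] = 1/\sqrt{m_{i,j}(s - t_\ell)}$ and every $s$ in the inner sum satisfies $s \leq \tilde s_{i,j}(a) < s(a)$, each term is at least $1/\sqrt{m_{i,j}(s(a)-t_\ell)}$. The length of the interval $[s_{i,j}(a), \tilde s_{i,j}(a)]$ is at least $(s_{i,j+1}(a) - s_{i,j}(a))/2 \geq m_{i,j}/4$ by the definition of $m_{i,j}$ as the smallest dyadic integer upper bound for $s_{i,j+1}(a) - s_{i,j}(a)$. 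Combining these two observations and using $m_{i,j} \geq s_{i,j+1}(a) - s_{i,j}(a)$ yields
\begin{equation*}
    \E[X(t_\ell, s(a)) \mid t_\ell] \geq \sum_{(i,j)\colon s_{i,j+1}(a) < s(a)} \frac{m_{i,j}/4}{\sqrt{m_{i,j}(s(a)-t_\ell)}} \geq \frac{1}{4\sqrt{s(a)-t_\ell}} \sum_{(i,j)\colon s_{i,j+1}(a) < s(a)} \sqrt{s_{i,j+1}(a)-s_{i,j}(a)}.
\end{equation*}
Plugging in the definition of the bad round, inequality~\eqref{eq:bad_round}, the right-hand side is at least $(c_4/4)\log(T)$.

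Next I would apply the multiplicative Chernoff bound for sums of independent $\{0,1\}$-variables: if $\mu \coloneqq \E[X(t_\ell, s(a)) \mid t_\ell]$, then $\P(X(t_\ell, s(a)) = 0 \mid t_\ell) \leq e^{-\mu}$. By the lower bound just derived, $e^{-\mu} \leq T^{-c_4/4}$, so choosing $c_4$ large enough (e.g.\ $c_4 \geq 16$) gives $\P(X(t_\ell, s(a)) = 0 \mid t_\ell) \leq 1/T^4$. A union bound over the $K$ arms then yields $\P(\cE'(t_\ell) \mid t_\ell) \geq 1 - K/T^4 \geq 1 - K/T^3$.

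The only subtlety I anticipate is verifying that the bad segments $\{[s_{i,j}(a), s_{i,j+1}(a))\}_{(i,j)}$ and their midpoints $\tilde s_{i,j}(a)$ are deterministic once $t_\ell$ is fixed, so that conditioning on $t_\ell$ really does freeze the index set of the double sum defining $X(t_\ell, s(a))$ and keep the $B_{s, m_{i,j}}$'s independent. Since the bad segments are defined purely in terms of the (adversarially chosen) preference sequence and $t_\ell$, this is immediate. The rest of the argument is a routine combination of the definition of the bad round with an exponential tail bound.
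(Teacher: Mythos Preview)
Your proposal is correct and follows essentially the same approach as the paper: lower-bound $\E[X(t_\ell,s(a))\mid t_\ell]$ by $(c_4/4)\log T$ using the definition of the bad round~\eqref{eq:bad_round}, apply a Chernoff-type tail bound, and union-bound over arms. The only cosmetic difference is that you use the cleaner bound $\P(X=0\mid t_\ell)\le e^{-\mu}$ (valid for sums of independent Bernoullis via $\prod(1-p_i)\le e^{-\sum p_i}$), whereas the paper invokes the half-mean Chernoff form $\P(X\le\mu/2)\le e^{-\mu/8}$; both give the claimed $K/T^3$ after choosing $c_4$ large enough. Your explicit remark that the bad segments (and hence $s(a)$ and the index set of the sum) are deterministic once $t_\ell$ is fixed is a useful clarification that the paper leaves implicit.
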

\begin{proof}
    Recalling that $B_{s,m} \mid t_\ell \sim \text{Bernoulli}\left(\frac{1}{\sqrt{m (s-t_{\ell})}}\right)$, we find that 
    \begin{align*}
        \E[X({t_\ell, s(a)}) \mid t_\ell] 
        & \geq \frac{1}{\sqrt{2}}\sum_{\substack{(i,j) \colon \\ s_{i,j+1}(a) < s(a) }} \frac{\tilde s_{i,j}(a) - s_{i,j}(a)}{\sqrt{s_{i,j+1}(a) -s_{i,j}(a)}\sqrt{\rule{0pt}{1.8ex} s(a)-t_\ell}} \\ 
        & \geq \frac{1}{4} \sum_{\substack{(i,j) \colon \\ s_{i,j+1}(a) < s(a) }} \sqrt{\frac{s_{i,j+1}(a) - s_{i,j}(a)}{s(a) -t_\ell}} \ 
        \overset{\eqref{eq:bad_round}}{\geq} \frac{c_4}{4} \log(T) 
    \end{align*}
    For $c_4$ sufficiently large the standard Chernoff bound tells us that 
    \begin{align*}
        \P\left(X({t_\ell, s(a)}) \leq \frac{\E[X({t_\ell, s(a)}) \mid t_\ell]}{2} \mid t_\ell\right) \leq \exp\left(-\frac{\E[X({t_\ell, s(a)}) \mid t_\ell]}{8}\right) \leq \frac{1}{T^{3}}.  
    \end{align*}
    The desired bound then follows from taking a union bound over all arms in $[K]$. 
\end{proof}
Now, on event $\cE \cap \cE'(t_\ell)$, it must hold that $t_{\ell}^a < s(a)$ for all arms $a\in [K]$, since otherwise $a$ would have been eliminated by some perfect replay before round $t_\ell^a$ (by definition of event $\cE'(t_\ell)$). As the bad round $s(a)$ is defined as the \emph{smallest} round satisfying \eqref{eq:bad_round}, we then have
\begin{equation}\label{eq:aggregated_bad_seg_bound}
    \sum_{\substack{(i,j) \colon s_{i,j+1}(a) < t_\ell^a}} \sqrt{s_{i,j+1}(a) - s_{i,j} (a)} \leq c_4 \log(T) K \sqrt{t_{\ell}^a- t_\ell}.  
\end{equation}
Hence, in view of equation~\eqref{eq:bound_regret_using_segments}, over the bad segments, the regret of arm $a$ is at most of order $\log^2(T) \sqrt{t_\ell^a - t_\ell}$.  
Moreover, for every last segment in some phase $i$, $[s_{i,j}, s_{i,j+1}(a))$, as well as the final segment $[s_{i,j}(a), t_{\ell}^a)$, we know that the regret suffered from playing $a$ is upper bounded by $c_3 \log(T) \sqrt{\tau_{i+1}-\tau_i}$ by definition of non-bad segments (Definition~\ref{def:bad_segments}). Therefore, on event $\cE \cap \cE'(t_\ell)$, it follows from equation \eqref{eq:aggregated_bad_seg_bound} and the above that 
\begin{equation}
    \sum_{t=t_\ell}^{t_\ell^a-1} \delta_t(a_t^*, a) \leq c_5 K \log^2(T) \sum_{i \in \phases(t_\ell, t_\ell^a)} \sqrt{\tau_{i+1}- \tau_i},
\end{equation}
where we used that $\sqrt{t_\ell^a- t_\ell} \leq \sum_{i \in \phases(t_\ell, t_\ell^a)} \sqrt{\tau_{i+1}-\tau_i}$. Finally, we obtain Lemma~\ref{lem:bound_for_(a)} by taking expectation and using that $\cE \cap \cE'(t_\ell)$ holds with high probability, 
\begin{align*}
    \E \left[ \sum_{t=t_\ell}^{t_\ell^a-1} \delta_t(a_t^*, a) \right] 
    &\leq \E\left[ \Bigg[ \1_{\{\cE \cap \cE'(t_\ell)\}} \sum_{t=t_\ell}^{t_\ell^a-1} \delta_t(a_t^*, a)  \mid t_\ell \Bigg] \right] + T \big( \P(\cE^c) + \P(\cE'(t_\ell)^c \mid t_\ell) \big)  \\ 
    & \leq c_5 K \log^2(T) \,  \E \left[ \sum_{i \in \phases(t_\ell, t_\ell^a)} \sqrt{\tau_{i+1}- \tau_i} \right] + \frac{1}{T} + \frac{K}{T^2}.  
\end{align*}

\subsubsection{Summing Over Arms}
Note that $t_\ell^a \leq t_{\ell+1}$ for all $a \in [K]$ by definition of $t_\ell^a$. Then, summing over all arms, it follows from Lemma~\ref{lem:bound_for_(a)} and \eqref{eq:swap_At_for_K} that for some constant $c_6 >0$:
\begin{align}\label{eq:final_bound_(a)}
    \E \left[  \sum_{a=1}^K \sum_{t=t_\ell}^{t_{\ell}^a-1} \frac{\delta_t (a_t^*, a)}{|\cA_t|} \right] \leq c_6 K \log^3(T) \,   \E \left[ \sum_{i \in \phases(t_\ell, t_{\ell+1})} \sqrt{\tau_{i+1}- \tau_i} \right], 
\end{align}
where we loosely upper bound $\log(K)$ by $\log(T)$. 

\subsection{Bounding $\termb$: Regret After Elimination}\label{subsection:bounding(b)}

Before we can begin, we will have to lay some groundwork to simplify the analysis in later steps. 
Recall the definition of bad segments from Section~\ref{subsection:bounding(a)} and define for every phase $[\tau_{i}, \tau_{i+1})$ intersecting with $[t_\ell^a, t_{\ell+1})$, i.e.\ $i \in \phases(t_\ell^a, t_{\ell+1})$, the segments $[s_{i,j}(a), s_{i, j+1})$ as in Definition~\ref{def:bad_segments}. 

We will split the regret due to bad segments, i.e.\ those that satisfy \eqref{eq:bad_segment}, from the regret due to non-bad segments, i.e.\ the last segments in a phase that do no satisfy \eqref{eq:bad_segment}. For a fixed arm $a \in [K]$, we let $\badseg(a)$ denote the rounds $t \in [t_\ell, t_{\ell+1})$ such that $t\in [s_{i,j}(a), s_{i, j+1}(a))$ for any \emph{bad} segment $[s_{i,j}(a), s_{i,j+1}(a))$. 

By the definition of a non-bad segment (w.r.t.\ arm $a$), we know that that there is at most one such segment in every phase and that the regret of arm $a$ in each segment is upper bounded by $c_3 \log(T) \sqrt{\tau_{i+1}- \tau_i}$, where $[\tau_i, \tau_{i+1})$ is the phase that contains the segment. To take care of the denominator $|\cA_t|$, assume w.l.o.g.\ that there is a run of $\base(t_\ell^a, m)$ that remains active and uninterrupted until the final round $T$.\footnote{Note that this is w.l.o.g.\ when bounding $1/|\cA_t|$ as any interrupting call of $\base$ would only increase $|\cA_t|$ by resetting it to $[K]$.} We can then reorder arms $a\in [K]$ according to the round that they are being eliminated by $\base(t_\ell^a, m)$, which gives $|\cA_t| \geq K+1-a$ whenever $a \in \cA_t$. As before, this yields a factor of $\log(K)$ when summing over all arms.  
We then bound $\termb$ over non-bad segments as
\begin{equation}\label{eq:non_bad_segments_replays}
    \E \left[  \sum_{a=1}^K \sum_{t=t_\ell^a}^{t_{\ell+1}-1} \frac{\delta_t (a_t^*, a)}{|\cA_t|} \1_{\{a \in \cA_t, t \not \in \badseg(a)\}} \right] \leq c_3 K \log(K) \log(T) \E \left[\sum_{i\in \phases(t_\ell^a, t_{\ell+1})} \sqrt{\tau_{i+1} - \tau_i}\right].
\end{equation}

The more challenging task is now to bound $\termb$ for rounds in bad segments.  
Recall that, for a fixed arm $a\in [K]$, the sum in question relates to the expected regret suffered within an episode from replaying arm $a$ after it has been eliminated from $\mA$, i.e.\ after time $t_\ell^a$. We begin by a straightforward upper bound. To this end, for a given replay $\base(s, m)$, let $M(s, m, a)$ be the last round in $[s, s+m]$, where arm $a$ is active in $\base(s, m)$ and all of its children. Then, 
\begin{equation}\label{eq:upper_bound_sum_over_replays}
    \E \left[  \sum_{a=1}^K \sum_{t=t_\ell^a}^{t_{\ell+1}-1} \frac{\delta_t (a_t^*, a)}{|\cA_t|} \1_{\{a \in \cA_t, t \in \badseg(a)\}} \right] \leq \E \left[ \sum_{a=1}^K \sum_{s=t_\ell+1}^{t_{\ell+1}-1} \sum_{m} \1_{\{ B_{s, m} = 1\}} \sum_{t=s \vee t_\ell^a}^{M(s, m, a)} \frac{\delta_t (a_t^*, a)}{\abs{\cA_t}} \1_{\{ t \in \badseg(a)\}} \right],
\end{equation}
where the most inner sum on the right hand side is for $m \in \{2, \dots, 2^{\lceil \log(T)\rceil}\}$. We will keep the convention that whenever a sum over $m$ is not further specified, it will be over the above set. 
Note that \eqref{eq:upper_bound_sum_over_replays} is a loose upper bound. While of course only a single call of $\base$ can be active at any point in time, we here sum over every possible replay and ignore the potential nesting and interleaving of replays. In particular, this upper bound is justified as each $\delta_t(a_t^*, a)$ is non-negative by definition of the CW $a_t^*$. The looseness of \eqref{eq:upper_bound_sum_over_replays} will pose no obstacle, as the remainder of our upper bounds will be sufficiently tight as we will see.

Again, we first take care of the dependence on $K$ due to the denominator on the right hand side of \eqref{eq:upper_bound_sum_over_replays}. Note that for a fixed $\base(s,m)$ if $a_k$ is the $k$-th arm to be eliminated by $\base(s,m)$, then $\min_{t \in [s, M(s, m, a_k)]} \abs{\cA_t} \geq K+1-k$. Similarly to before, this will result in a multiplicative $\log(K)$ term when eventually switching the order of the sums and summing over all arms. For now, we therefore focus on the expression 
\begin{equation}\label{eq:bounding_b_2}
    \E \left[ \sum_{s=t_\ell+1}^{t_{\ell+1}-1} \sum_{m} \1_{\{ B_{s, m} = 1\}} \sum_{t=s \vee t_\ell^a}^{M(s, m, a)} \delta_t (a_t^*, a) \1_{\{ t \in \badseg(a)\}} \right]
\end{equation}
for any fixed arm $a\in [K]$. To deal with this quantity, it will be helpful to distinguish between two types of replays, i.e.\ calls of $\base$, which we refer to as confined and unconfined replays. 

\definecolor{NewGreen}{RGB}{0, 102, 0}
\definecolor{NewBlue}{RGB}{0, 0, 102}
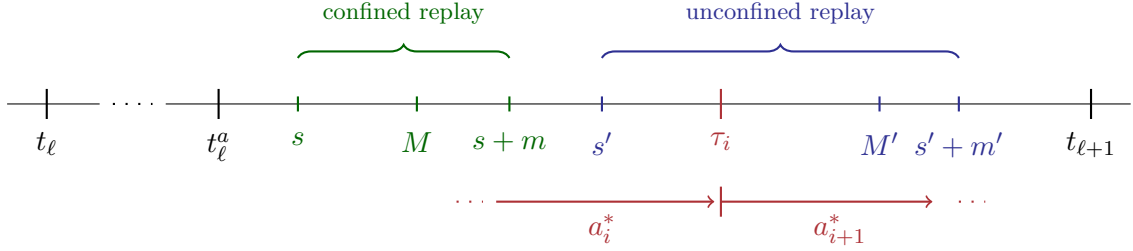
\begin{figure}[t]
    \centering
   \begin{tikzpicture}[x=50]
      \draw (-0.5,0) -- (0.2,0);
      \draw (0.7,0) -- (8,0);
       \draw[loosely dotted, thick] (0.3,0) -- (0.6,0);

       \draw (-0.2, 0) node[below=7pt] {$t_\ell$};
      \draw[thick] (-0.2,-0.2) -- (-0.2,0.2);
       
       \draw (1.1, 0) node[below=7pt] {$t_\ell^a$};       
       \draw[thick] (1.1,-0.2) -- (1.1,0.2);
       
        \draw (7.7, 0) node[below=7pt] {$t_{\ell+1}$};
      \draw[thick] (7.7,-0.2) -- (7.7,0.2);

       \draw (1.7, 0) node[below=7pt] {\color{NewGreen}{$s$}};       
       \draw[NewGreen, thick] (1.7,-0.1) -- (1.7,0.1);
       
       \draw (3.3, 0) node[below=7pt] {\color{NewGreen}$s+m$};       
       \draw[NewGreen, thick] (3.3,-0.1) -- (3.3,0.1);
       
        \draw [thick, NewGreen, decorate, decoration={brace,amplitude=5pt}, xshift=0pt,yshift=6pt](1.7,0.4) -- (3.3,0.4) node[NewGreen,midway, yshift=0.6cm] {\footnotesize{confined replay}};

        \draw (2.6, 0) node[below=7pt] {\color{NewGreen}$M$};       
       \draw[NewGreen, thick] (2.6,-0.1) -- (2.6,0.1);

        \draw (4, 0) node[below=7pt] {\color{Blue}$s'$};       
       \draw[Blue, thick] (4,-0.1) -- (4,0.1);
      
        \draw (6.7, 0) node[below=7pt] {\color{Blue}$s'+m'$};       
       \draw[Blue, thick] (6.7,-0.1) -- (6.7,0.1);
       
        \draw [thick, Blue, decorate, decoration={brace,amplitude=5pt}, xshift=0pt,yshift=6pt](4,0.4) -- (6.7,0.4) node[Blue,midway, yshift=0.6cm] {\footnotesize{unconfined replay}};
       
        \draw (6.1, 0) node[below=7pt] {\color{Blue}$M'$};       
       \draw[Blue, thick] (6.1,-0.1) -- (6.1,0.1);

        \draw (4.9, 0) node[below=7pt] {\color{Maroon}$\tau_i$};       
       \draw[Maroon, thick] (4.9,-0.2) -- (4.9,0.2);

        \draw[Maroon, loosely dotted, thick] (2.9,-1.3) -- (3.1,-1.3);
       \draw[Maroon, thick, ->] (3.2, -1.3) -- (4.85, -1.3);
        \draw (4.0, -1.3) node[below=1pt] {\color{Maroon}$a_i^*$};       
       
        \draw[Maroon, loosely dotted, thick] (6.7,-1.3) -- (6.9,-1.3);
       \draw[Maroon, thick, ->] (4.9, -1.3) -- (6.5, -1.3);
        \draw (5.8, -1.3) node[below=1pt] {\color{Maroon}$a_{i+1}^*$};       
      
       \draw[Maroon, thick] (4.9,-1.5) -- (4.9,-1.1);

   \end{tikzpicture}
    \caption{For some episode $[t_\ell, t_{\ell+1})$ and arm $a \in [K]$, an example of a \textcolor{NewGreen}{confined replay} and a \textcolor{Blue}{unconfined replay}, where \textcolor{NewGreen}{$M=M(s, m, a)$} and \textcolor{Blue}{$M'=M(s', m', a)$}. 
    When a replay $\textcolor{Blue}{\base(s', m')}$ intersects with more than one \textcolor{Maroon}{phase}, the CW in the next phase \textcolor{Maroon}{$[\tau_i, \tau_{i+1})$}, denoted \textcolor{Maroon}{$a_{i+1}^*$}, could be evicted before the beginning of that phase, i.e.\ in the interval $[s', \tau_i)$.}
    \label{fig:replays}
\end{figure}

\begin{defn}[Confined and Unconfined Replays]
    For a fixed $t_\ell$, we call $\base(s, m)$ \emph{confined} if there exists $i \in \phases (t_\ell, T)$ such that $[s, s+m] \subseteq [\tau_i, \tau_{i+1})$, i.e.\ the replay intersects with a single phase only. In turn, we say that $\base(s, m)$ is \emph{unconfined} if for all $i \in \phases (t_\ell, T)$, we have $[s, s+m] \not \subseteq [\tau_i, \tau_{i+1})$.
\end{defn}
An illustration of confined and unconfined replays is given in Figure~\ref{fig:replays}.


We proceed by upper bounding the inner sum $\sum_{t=s \vee t_\ell^a}^{M(s, m, a)} \delta_t (a_t^*, a) \1_{\{ t \in \badseg(a)\}}$ for confined and unconfined replays separately. The bound for confined replays comes with no major intricacies, whereas bounding the regret due to unconfined replays is slightly more involved.

\subsubsection{Bounding Regret for Confined Replays}
We begin by bounding, the inner sum $\sum_{t=s \vee t_\ell^a}^{M(s, m, a)}\delta_t(a_t^*, a)$ for any confined replay in terms of the replay duration~$m$. 
\begin{lem}\label{lem:confined_replay_bound}
On event $\cE$, for any fixed arm $a$ and confined replay $(s, m)$, it holds that
\begin{equation*}
    \sum_{t=s \vee t_\ell^a}^{M(s, m, a)}\delta_t(a_t^*, a) \leq c_2 \log(T) K \sqrt{m}. 
\end{equation*}
\end{lem}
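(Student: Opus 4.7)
Since $\base(s, m)$ is confined, by definition there is a phase index $i$ with $[s, s+m] \subseteq [\tau_i, \tau_{i+1})$, so the Condorcet winner satisfies $a_t^* = a_i^*$ throughout the replay window. My first step is to argue that both $a$ and $a_i^*$ belong to $\cA_t$ for every $t \in [s \vee t_\ell^a, M(s, m, a)]$: the inclusion $a \in \cA_t$ is immediate from the definition of $M(s, m, a)$ as the last round in which $a$ is active during the replay, while $a_i^* \in \cA_t$ follows from Lemma~\ref{lemma:no_eviction_inside_replay}, which guarantees on event $\cE$ that no call of $\base$ initiated inside the phase can eliminate $a_i^*$ before $\tau_{i+1}$ (this applies to $\base(s,m)$ itself and to any ancestor or child replay whose start time also lies in $[\tau_i, \tau_{i+1})$). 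By the definition of the importance-weighted estimator in \eqref{eq:estimates}, we then have $\E[\hdelta_t(a_i^*, a) \mid \cF_{t-1}] = \delta_t(a_i^*, a)$ for every such $t$, i.e., the pairwise gap estimates for $(a_i^*, a)$ are unbiased on the interval of interest.

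Next I would leverage the fact that $a$ has survived in $\cA_t$ up to round $M(s, m, a)$ inside the current call $\base(s,m)$: by the elimination step in Line~\ref{line:elim_from_active} of Algorithm~\ref{alg:replay}, this forces criterion~\eqref{eq:elim} to fail on every sub-interval $[s_1, s_2] \subseteq [s, M(s, m, a) - 1]$. Specialising to $s_1 = s \vee t_\ell^a$, $s_2 = M(s, m, a) - 1$, and picking $a' = a_i^*$ in the maximum,
\begin{equation*}
    \sum_{t = s \vee t_\ell^a}^{M(s, m, a) - 1} \hdelta_t(a_i^*, a) \;\le\; C \log(T)\, K \sqrt{(M(s,m,a) - s \vee t_\ell^a) \vee K^2} \;\le\; C \log(T)\, K \sqrt{m \vee K^2}.
\end{equation*}

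Finally, I would invoke the concentration bound from Lemma~\ref{prop:concentration_bound} on event $\cE$ to transfer this back to the true gaps: using unbiasedness together with \eqref{eq:concentration_bound},
\begin{equation*}
    \sum_{t = s \vee t_\ell^a}^{M(s,m,a)-1} \delta_t(a_i^*, a) \;=\; \sum_{t = s \vee t_\ell^a}^{M(s,m,a)-1} \E[\hdelta_t(a_i^*, a) \mid \cF_{t-1}] \;\le\; \sum_{t = s \vee t_\ell^a}^{M(s,m,a)-1} \hdelta_t(a_i^*, a) + c_1 \log(T)\,(K\sqrt{m} + K^2).
\end{equation*}
The lone terminal round $M(s, m, a)$ contributes at most $1/2$ to the left-hand side of the claim and is absorbed. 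Combining and using $K^2 \le K \sqrt{m \vee K^2}$ to swallow the additive $K^2$ terms into a single $\sqrt{m \vee K^2}$ factor (and, if desired, absorbing the $\max$ with $K^2$ into the constant by adjusting $c_2$) yields the stated bound $c_2 \log(T)\, K \sqrt{m}$.

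The main technical obstacle is the careful bookkeeping around $\cA_t$ being shared and rewritten across nested and interrupting calls of $\base$: one must check that $a \in \cA_{M(s, m, a)}$ really does imply that criterion \eqref{eq:elim} failed on all sub-intervals of $[s, M(s,m,a)-1]$ \emph{as evaluated inside the call $\base(s,m)$}, and that the final transition round at which $a$ is eliminated contributes only $O(1)$ regret. The rest of the argument is essentially a direct application of the unbiasedness guaranteed by Lemma~\ref{lemma:no_eviction_inside_replay} inside a confined phase, combined with the martingale concentration of Lemma~\ref{prop:concentration_bound}.
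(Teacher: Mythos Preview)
Your proposal is correct and follows essentially the same route as the paper's proof: identify that $a_t^* = a_i^*$ is constant on a confined replay, invoke Lemma~\ref{lemma:no_eviction_inside_replay} to guarantee $a_i^*$ stays in $\cA_t$ so that $\hdelta_t(a_i^*,a)$ is unbiased, and then combine the failure of the elimination criterion~\eqref{eq:elim} with the concentration bound~\eqref{eq:concentration_bound} to pass from $\hdelta$ to $\delta$. Your explicit handling of the terminal round $M(s,m,a)$ and of the $K^2$ additive term is slightly more careful than the paper's terse presentation, but the argument is the same.
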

\begin{proof}[Proof of Lemma~\ref{lem:confined_replay_bound}]
    Consider any confined replay $\base(s, m)$ with $[s, s+m] \subseteq [\tau_i, \tau_{i+1})$ for some phase $i$. 
    This implies that on interval $[s, s+m]$ the Condorcet winner remains the same, i.e.\ $a_t^* = a_i^*$ for all $t \in [s, s+m]$. Now, recall from Lemma~\ref{lemma:no_eviction_inside_replay} that, on event $\cE$, arm $a_i^*$ will not be eliminated inside of $[s, s+m]$ as it is a subset of phase $[\tau_i, \tau_{i+1})$. As a result, we must have $a, a_i^* \in \cA_t$ for all $t \in [s\vee t_\ell^a, M(s, m, a)]$ and our estimate $\hdelta_t(a_i^*, a)$ is thus unbiased. Since $M(s, m, a)$ is the last round that arm $a$ is retained by $\base(s, m)$ (and its children), it follows from the elimination rule~\eqref{eq:elim} and the concentration bound~\eqref{eq:concentration_bound} that 
    \begin{align*}
        \sum_{t=s \vee t_\ell^a}^{M(s, m, a)}\delta_t(a_i^*, a) \leq c_2 \log(T) K \sqrt{M(s, m, a) - s \vee t_\ell^a} \leq c_2 \log(T) K \sqrt{m},
    \end{align*} 
    where the last inequality uses that $M(s, m, a) \leq s+m$. 
\end{proof}

\subsubsection{Bounding Regret for Unconfined Replays}

\begin{lem}\label{lem:unconfined_replay_bound}
    On event $\cE \cap \cE''(t_\ell)$, for any fixed arm $a$ and unconfined replay $(s, m)$, it holds that 
    \begin{equation*}
        \sum_{t=s \vee t_\ell^a}^{M(s, m, a)} \delta_t(a_t^*, a) \1_{\{ t \in \badseg(a)\}} \leq c_5 \log^2(T) K \big( \sqrt{s- t_\ell} + 2 \sqrt{m} \big). 
    \end{equation*}
    Here, the event $\cE''(t_\ell)$ is a concentration event similar to that in Lemma~\ref{lem:replay_scheduled_whp} and will be defined in the following. 
\end{lem}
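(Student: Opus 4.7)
The plan is to mirror the perfect-replay argument used to bound $\terma$, but applied to the child replays of the unconfined parent $\base(s,m)$ and with the bad round measured relative to the episode start $t_\ell$ (this is what produces the $\sqrt{s-t_\ell}$ term, because child replays sampled at round $s' \geq s$ are scheduled with probability $1/\sqrt{m'(s'-t_\ell)}$, cf.\ Line~\ref{line:sample_replay_schedule}). First, I would partition the interval $[s, s+m]\cap[\tau_i,\tau_{i+1})$ for each $i \in \phases(s,s+m)$ into the bad/non-bad segments of Definition~\ref{def:bad_segments} with respect to arm $a$. By construction each bad segment $[s_{i,j}(a), s_{i,j+1}(a))$ lies inside a single phase, and its contribution to $\sum_t \delta_t(a_t^*,a)$ is at most $c_3 \log(T) K \sqrt{s_{i,j+1}(a)-s_{i,j}(a)}$ via~\eqref{eq:bound_regret_using_segments}.

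Next, define the \emph{bad round for the replay} $s^*(a) \in (s, s+m]$ as the smallest round satisfying
\[
    \sum_{(i,j):\,s_{i,j+1}(a)<s^*(a)} \sqrt{s_{i,j+1}(a)-s_{i,j}(a)} \;>\; c_4 \log(T) \sqrt{s^*(a) - t_\ell},
\]
and let $m_{i,j}$ be the smallest dyadic integer with $m_{i,j} \geq s_{i,j+1}(a)-s_{i,j}(a)$. Writing $Y$ for the number of $B_{s',m_{i,j}} = 1$ events with $s' \in [s_{i,j}(a), \tilde{s}_{i,j}(a)]$ across all bad segments ending before $s^*(a)$, an identical computation to Lemma~\ref{lem:replay_scheduled_whp}, using $s' - t_\ell \leq s^*(a) - t_\ell$ and $s_{i,j+1}(a) - s_{i,j}(a) \geq m_{i,j}/2$, yields $\E[Y\mid t_\ell] \geq (c_4/4)\log(T)$. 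A Chernoff bound followed by a union bound over arms $a$ and over the (at most $T^2$) choices of $(s,m)$ defines the event $\cE''(t_\ell)$, on which at least one such child replay is scheduled before $s^*(a)$ for every unconfined parent and every arm; $\cE''(t_\ell)$ holds with probability at least $1 - \mathrm{poly}(T)^{-1}$.

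On $\cE \cap \cE''(t_\ell)$, any such child replay lies entirely inside one phase (since bad segments do), so it is confined. By Proposition~\ref{prop:perfect_replay} it is a perfect replay and eliminates $a$ from $\cA_t$; by the parent's elimination check at Line~\ref{line:elim_from_active} (which re-scans all intervals in $[\tstart, t)$ using the accumulated $\hdelta$-sums that incorporate the child's observations), $a$ is also removed from the parent's active set upon resumption. Hence $M(s,m,a) < s^*(a)$, and by definition of $s^*(a)$ the completed bad segments within $[s\vee t_\ell^a, M(s,m,a)]$ contribute at most
\[
    c_3 c_4 \log^2(T) K \sqrt{s^*(a) - t_\ell} \;\leq\; c_3 c_4 \log^2(T) K \bigl(\sqrt{s-t_\ell}+\sqrt{m}\bigr),
\]
where the last step uses $\sqrt{s+m-t_\ell} \leq \sqrt{s-t_\ell}+\sqrt{m}$. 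The at-most-one bad segment straddling $M(s,m,a)$ is handled by its defining inequality \eqref{eq:bad_segment}, contributing a further $c_3 \log(T) K \sqrt{m}$. Combining yields the claim with $c_5$ absorbing the constants.

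The main obstacle is the interplay between the parent's and child's state: one must verify that an elimination triggered inside a confined grandchild of $\base(s,m)$ actually removes $a$ from the parent's active set when the parent resumes. This works because (i) the estimator $\hdelta_t(a_i^*, a)$ is unbiased during the confined child's execution since $a_i^*\in\cA_t$ by Lemma~\ref{lemma:no_eviction_inside_replay}, and (ii) the parent's elimination rule at Line~\ref{line:elim_from_active} scans all sub-intervals $[s_1,s_2]\subseteq[\tstart,t)$ using the same $\hdelta$-sums, so the witness interval from the child's execution is also a valid witness for the parent. A secondary subtlety is that $|\cA_t|$ varies during the child's run, but this only affects constants and not the shape of the bound, since $|\cA_t| \leq K$ throughout.
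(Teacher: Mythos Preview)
Your plan follows the same perfect-replay/bad-round strategy as the paper, but differs in one structural choice and contains one inaccuracy worth noting.

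The paper splits $[s\vee t_\ell^a, M(s,m,a)]$ at $\tau_i$, the first CW change after $s$. The initial portion $[s\vee t_\ell^a,\tau_i)$ lies in a single phase and is bounded exactly as in Lemma~\ref{lem:confined_replay_bound}, contributing one $c_2\log(T)K\sqrt{m}$ term. Only the tail $[\tau_i, M(s,m,a)]$ invokes the bad-round argument, with $s'(a)$ defined over bad segments in phases $k\ge i$. This split matters because the segments of Definition~\ref{def:bad_segments} in the phase containing $s$ are anchored at $t_\ell\vee\tau_{i-1}$, which can lie before $s$; perfect replays inside such segments need not be descendants of $\base(s,m)$. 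Restricting to phases $k\ge i$ ensures every candidate perfect replay starts at some $s'\ge\tau_i>s$ and is therefore a genuine child. Your phrasing ``partition $[s,s+m]\cap[\tau_i,\tau_{i+1})$'' suggests you intend to re-anchor at $s$, but you simultaneously cite Definition~\ref{def:bad_segments} verbatim, which is inconsistent; making the paper's split explicit removes the ambiguity. A side effect is that the paper's $s'(a)$ depends only on the phase index $i$, so its $\cE''(t_\ell)$ needs only a union bound over phases and arms rather than your $T^2$ choices of $(s,m)$.

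Second, your claim that each perfect child replay ``lies entirely inside one phase, so it is confined'' is not correct: $m_{i,j}$ may overshoot $\tau_{i+1}$. What Proposition~\ref{prop:perfect_replay} actually guarantees is that the witness interval $[\tilde s_{i,j}(a), s_{i,j+1}(a)]$ lies inside the phase, so elimination occurs by round $s_{i,j+1}(a)<\tau_{i+1}$; the replay itself need not be confined. This does not break your argument, since the witness interval still triggers \eqref{eq:elim} for the parent at Line~\ref{line:elim_from_active}, but the reasoning should be stated that way. Your explicit discussion of how the child's witness propagates to the parent's active set is a useful elaboration that the paper leaves implicit.
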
 
    
\begin{proof}[Proof of Lemma~\ref{lem:unconfined_replay_bound}]
    Consider any unconfined replay $\base(s, m)$ with $s \in [t_\ell, t_{\ell+1})$. Let $i$ be the phase so that $s \in [\tau_{i-1}, \tau_{i})$. We can then split the sum over $t\in [s\vee t_\ell^a, M(s,m, a)]$ into the rounds before the Condorcet winner changes for the first time within $[s, s+m]$ and the remaining rounds, i.e.\ 
    \begin{equation}\label{eq:regret_until_tau_and_until_M}
        \sum_{t=s \vee t_\ell^a}^{M(s, m, a)} \delta_t (a_t^*, a) = \sum_{t=s \vee t_\ell^a}^{\tau_{i}-1} \delta_t(a_i^*, a) + \sum_{t=\tau_{i}}^{M(s, m, a)} \delta_t (a_{t}^*, a) . 
    \end{equation}
    Note that the interval $[\tau_{i}, M(s, m, a)]$ can itself span over several phases. The first sum on the right hand side can be bounded as in Lemma~\ref{lem:confined_replay_bound}. Using Lemma~\ref{lemma:no_eviction_inside_replay}, the elimination rule, and the concentration bound, we get 
    \begin{equation*}
        \sum_{t=s\vee t_\ell^a}^{\tau_{i}-1} \delta_t(a_i^*, a) \leq c_2 \log(T) K \sqrt{m}.
    \end{equation*}
    The second sum cannot be bounded in a similar way, as we cannot guarantee that the Condorcet winner in some round $t \in [\tau_{i}, M(s, m, a)]$ has not been eliminated in prior rounds $[s\vee t_\ell^a, \tau_{i})$. For example in Figure~\ref{fig:replays}, the unconfined replay $\base(s', m')$ could have eliminated $a_{i+1}^*$ on interval $[s', \tau_i)$ before it became the Condorcet winner. We may therefore fail to detect that $a$ suffers large regret without additional replays. 
    
    To resolve this difficulty, we can reuse part of the arguments from Section~\ref{subsection:bounding(a)}. 
    Define the bad segments $[s_{k,j}(a), s_{k, j+1}(a))$ for $k \geq i$ as in Definition~\ref{def:bad_segments}. Similarly to before, we now define the bad round $s'(a)$ as the smallest round $s'(a) > \tau_i$ such that for the same constant $c_4>0$ as in \eqref{eq:bad_round}
    \begin{equation}\label{eq:bad_round_2}
        \sum_{\substack{(k,j)\colon s_{k,j+1}(a) < s'(a)}} \sqrt{s_{k,j+1}(a) - s_{k,j}(a)} > c_4 \log(T) \sqrt{s'(a) - t_\ell},
    \end{equation}
    where the sum is over all bad segments with $k \geq i$ and $s_{k, j+1}(a) < s'(a)$.

    Importantly, for this definition of $s'(a)$ and with the sum $X(t_\ell, s'(a))$ defined accordingly, the high probability guarantee of Lemma~\ref{lem:replay_scheduled_whp} still holds. 
    This implies that a perfect replay (see Proposition~\ref{prop:perfect_replay}) that eliminates arm $a$ (from the unconfined replay $\base(s, m)$) is scheduled w.h.p.\ before the bad round $s'(a)$ occurs. Let the corresponding event denote $\cE''(t_\ell)$ as in Lemma~\ref{lem:replay_scheduled_whp}.

    The round $M(s, m, a)$ was defined as the last round for which $a$ is retained in $\base(s, m)$ and all of its children. Hence, on event $\cE \cap \cE''(t_\ell)$, we must have $M(s, m, a) < s'(a)$ as otherwise $a$ would have been eliminated from $\base(s, m)$ (or one of its children) before round $M(s, m, a)$, a contradiction. By merit of \eqref{eq:bound_regret_using_segments}, this yields
    \begin{equation*}
        \sum_{\substack{(k,j)\colon s_{k,j+1}(a) < M(s, m, a)}} \sqrt{s_{k,j+1}(a) - s_{k,j}(a)} \leq c_4 \log(T) K \sqrt{M(s, m, a) - t_\ell}
    \end{equation*}
    The regret on the final segment $[s_{k,j} (a), M(s, m, a)]$ can trivially be bounded by $c_3 \log(T) K \sqrt{m}$, as it must be a non-bad segment and $M(s,m,a) - s_{k,j}(a) \leq m$. Finally, in view of \eqref{eq:bound_regret_using_segments}, it follows that
    \begin{align*}
        \sum_{t=s \vee t_\ell^a}^{M(s, m, a)} \delta_t(a_t^*, a) \1_{\{ t \in \badseg(a)\}} 
        & \leq c_5 \log^2(T) K (\sqrt{M(s,m,a)-t_\ell} + \sqrt{m}) \\
        & \leq c_5 \log^2(T) K (\sqrt{s - t_\ell} + 2 \sqrt{m}), 
    \end{align*}
    where the second inequality uses $\sqrt{M(s, m, a) - t_\ell} \leq \sqrt{s-t_\ell} + \sqrt{m}$, since $M(s, m, a) \leq s+m$ and $s \geq t_\ell$.

\end{proof}

\subsubsection{Combining Confined and Unconfined Replays}
We will now conclude the bound on $\termb$. To this end, recall that the replay schedule is chosen according to $B_{s,m} \mid t_\ell \sim\text{Bern}\big({1}/{\sqrt{m (s-t_{\ell})}}\big)$. Then, conditioning on $t_\ell$, we have 
\begin{align*}
    \E \left[ \sum_{s=t_\ell+1}^{t_{\ell+1}} \sum_m \1_{\{B_{s,m} \}}\right] = \E \left[ \sum_{s=t_\ell+1}^{T} \sum_m \E\left[\1_{\{B_{s,m} \}} \mid t_\ell \right] \, \E\left[\1_{\{s < t_{\ell+1}\}} \mid t_\ell \right]  \right] = \E \left[ \sum_{s=t_{\ell+1}}^{t_{\ell+1}-1} \frac{1}{\sqrt{m(s-t_\ell)}}\right]. 
\end{align*}
Moreover, note that we can rewrite a sum over $s\in [t_\ell+1, t_{\ell+1})$ as a double sum over $i \in \phases(t_\ell, t_{\ell+1})$ and $s \in [\tau_i \vee (t_\ell+1), \tau_{i+1}\wedge t_{\ell+1})$. For unconfined replays, we notice that when $\base(s, m)$ is scheduled with $s\in [\tau_i, \tau_{i+1})$, it must hold that $m \geq \tau_{i+1}-s$, as $\base(s, m)$ would otherwise not be unconfined. 

Now, combining Lemma~\ref{lem:confined_replay_bound} and Lemma~\ref{lem:unconfined_replay_bound}, we obtain
\begin{align*}
    & \E \left[ \1_{\{\cE \cap \cE''(t_\ell)\}} \sum_{a=1}^K \sum_{t=t_\ell^a}^{t_{\ell+1}-1} \frac{\delta_t (a_t^*, a)}{|\cA_t|} \1_{\{a \in \cA_t, t \in \badseg(a)\}} \right] \\[1em]
    & \leq \E \left[ \1_{\{\cE \cap \cE''(t_\ell)\}} \sum_{s=t_\ell+1}^{t_{\ell+1}-1} \sum_{m} \1_{\{ B_{s, m} = 1\}} \sum_{t=s \vee t_\ell^a}^{M(s, m, a)} \frac{\delta_t (a_t^*, a)}{|\cA_t|} \1_{\{ t \in \badseg(a)\}} \right] \\[1em]
    & \leq c_2 K \log(K) \log(T) \,\E \left[ \sum_{s=t_\ell}^{t_{\ell+1}-1} \sum_{m} \frac{\sqrt{m}}{\sqrt{m(s-t_\ell)}} \right] 
    \\ & \hspace{4cm} + c_5 K \log(K) \log^2(T) \, \E \left[ \sum_{i\in \phases(t_\ell, t_{\ell+1})} \sum_{s=\tau_i}^{\tau_{i+1}-1} \sum_{m \geq \tau_{i+1} - s} \frac{\sqrt{s-t_\ell} + 2\sqrt{m}}{\sqrt{m (s-t_\ell)}}  \right] \\[1em]
    & \leq c_2 K \log^3(T) \, \E\left[ \sqrt{t_{\ell+1}-t_\ell} \right] + c_5 K \log^3(T) \,  \E\left[\sum_{i\in \phases(t_\ell, t_{\ell+1})} \sum_{s=\tau_{i}}^{\tau_{i+1}-1} \frac{1}{\sqrt{\tau_{i+1}-s}} + 2 \sqrt{t_{\ell+1}-t_\ell} \right] \\[1em]
    & \leq c_7 K \log^3(T)\,  \E \left[ 2 \sum_{i \in \phases(t_\ell, t_{\ell+1})} \sqrt{\tau_{i+1} - \tau_i}  + \sum_{i \in \phases(t_\ell, t_{\ell+1})} \sqrt{\tau_{i+1} - \tau_i} + 4 \sum_{i \in \phases(t_\ell, t_{\ell+1})} \sqrt{\tau_{i+1} - \tau_i} \right] \\[1em]
    & \leq 7c_7 K \log^3(T) \, \E \left[ \sum_{i \in \phases(t_\ell, t_{\ell+1})} \sqrt{\tau_{i+1} - \tau_i} \right]
\end{align*}

We here repeatedly used that $\sum_{k=1}^n 1/{\sqrt{k}} \leq 2 \sqrt{n}$ in the third and fourth inequality. In particular, the fourth inequality holds as $\sqrt{t_{\ell+1}- t_\ell} \leq \sum_{i\in \phases(t_\ell, t_{\ell+1})} \sqrt{\tau_{i+1}-\tau_i}$ and  
\begin{align*}
    \sum_{s=\tau_i}^{\tau_{i+1}-1} \frac{1}{\sqrt{\tau_{i+1}-s}} = \sum_{s=1}^{\tau_{i+1}-\tau_i -1} \frac{1}{\sqrt{s}}\leq \sqrt{\tau_{i+1}- \tau_i}.
\end{align*}
Further note that, as explained before, the denominator $|\cA_t|$ can be seen to account for a factor of $\log(K)$, which we loosely upper bounded by $\log(T)$. Together with \eqref{eq:non_bad_segments_replays}, we then obtain for some constant $c_8 > 0$ the desired bound of 
\begin{align}\label{eq:final_bound_(b)}
    E \left[\sum_{a=1}^K \sum_{t=t_\ell^a}^{t_{\ell+1}-1} \frac{\delta_t (a_t^*, a)}{|\cA_t|} \1_{\{a \in \cA_t\}} \right] 
    & \leq c_8 K \log^3(T) \, \E \left[ \sum_{i \in \phases(t_\ell, t_{\ell+1})} \sqrt{\tau_{i+1} - \tau_i} \right] .
\end{align}

\subsection{Summing Over Episodes}\label{subsection:summing_over_episodes}
In Section~\ref{subsection:bounding(a)} and Section~\ref{subsection:bounding(b)}, we bounded the regret of arms within an episode before and after their elimination, respectively. Combining \eqref{eq:final_bound_(a)} and \eqref{eq:final_bound_(b)}, and summing over episodes, we then obtain 
\begin{equation*}
    \E \left[ \sum_{t=1}^T \frac{\delta_t(a_t^*, a_t) + \delta_t(a_t^*, b_t)}{2} \right] \leq c_9 K \log^3(T) \, \E \left[ \1_{\{\cE\}} \sum_{\ell= 1}^L \sum_{i\in \phases(t_\ell, t_{\ell+1})} \sqrt{\tau_{i+1}-\tau_i} \right] + \frac{1}{T} . 
\end{equation*}
Now, on the concentration event $\cE$, Lemma~\ref{lem:phase_intersecting_episodes} tells us that any phase $[\tau_{i}, \tau_{i+1})$ intersects with at most two episodes. Recall that $\tau_0 \coloneqq 1$ and $\tau_{\Sw +1} \coloneqq T$. It then follows from the above that 
\begin{equation*}
    \E \left[ \sum_{t=1}^T \frac{\delta_t(a_t^*, a_t) + \delta_t(a_t^*, b_t)}{2} \right] \leq 2 c_9 K \log^3(T) \sum_{i=0}^{\Sw} \sqrt{\tau_{i+1} - \tau_i} + \frac{1}{T} . 
\end{equation*}

\section{Missing Details from Section~\ref{sec:sign_switches}}\label{sec:appendix_proof_sign_thm}

\subsection{Significant CW Switches} 
Let us first recall the definition of Significant Condorcet Winner Switches from Section~\ref{sec:nst_measures}. 
Let $\htau_0 \coloneqq 1$ and define $\htau_{i+1}$ recursively as the first round in $[\htau_i, T)$ such that for all arms $a \in [K]$ there exist rounds $\htau_i \leq s_1 < s_2 < \htau_{i+1}$ such that 
\begin{align}\label{eq:sig_shift}
    \sum_{t=s_1}^{s_2} \delta_t(a_t^*, a) \geq  \sqrt{K(s_2-s_1)} . 
\end{align}
Let $\sigS$ denote the number of such {Significant CW Switches} $\htau_1, \dots, \htau_{\sigS}$. The key idea of \cite{Suk22} when developing this notion of non-stationarity (for multi-armed bandits) is that a restart in exploration is only warranted if there are no \emph{safe} arms left to play, i.e.\ there is no arm left that does not suffer regret~\eqref{eq:sig_shift} on some interval $[s_1, s_2]$. For every phase $[\htau_i, \htau_{i+1})$, we denote by $\asafe_i$ the last safe arm in phase $i$, i.e.\ the last arm to satisfy \eqref{eq:sig_shift} in phase $i$. Moreover. we define the sequence of safe arms as $\asafe_t = \asafe_i$ for $t \in [\htau_i, \htau_{i+1})$. 

Significant CW Switches are able to reconcile switch-based non-stationarity measures such as CW Switches $\Sw$ and variation-based non-stationarity measures such as the CW Variation $\Vcw$. More specifically, it naturally holds that $\sigS \leq \Sw$ and Corollary~\ref{cor:total_variation} shows that near-optimal dynamic regret w.r.t.\ $\sigS$ also implies near-optimal dynamic regret w.r.t.\ $\Vcw$.

\subsection{Proof of Theorem~\ref{thm:sign_cw_changes}}

For convenience, we recall the assumptions of Theorem~\ref{thm:sign_cw_changes}. 

\begin{assumptionp}{1}[Strong Stochastic Transitivity] 
    \label{asm:sst}
    Every preference matrix $\bP_t$ satisfies that if $a \succ_t b \succ_t c$, we have $\delta_t(a,c) \geq \delta_t(a, b) \vee  \delta_t(b, c)$.  
\end{assumptionp}

\begin{assumptionp}{2}[Stochastic Triangle Inequality]
    \label{asm:sti}
    Every preference matrix $\bP_t$ satisfies that if $a \succ_t b \succ_t c$, we have $\delta_t(a,c) \leq \delta_t(a, b) +   \delta_t(b, c)$.  
\end{assumptionp}

We see that together Assumption~\ref{asm:sst} and Assumption~\ref{asm:sti} imply a more general type of triangle inequality for any triplet $a, b, c \in [K]$ with $a \succ b$ and $a \succ c$. 
\begin{lem}\label{lem:triangle_inequality}
    Under Assumption~1 and Assumption~2, for any triplet $a, b, c \in [K]$ with $a \succ_t b$ and $a \succ_t c$, it holds that  
    $$\delta_t(a, c) \leq 2\delta_t (a, b) + \delta_t (b, c). $$
\end{lem}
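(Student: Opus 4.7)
The plan is a simple case analysis based on the ordering between $b$ and $c$ in round $t$, using SST in one case and STI in the other.

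First I would handle the easy case $b \succ_t c$. Then the triplet satisfies $a \succ_t b \succ_t c$, so Assumption~\ref{asm:sti} (STI) applied directly gives $\delta_t(a,c) \leq \delta_t(a,b) + \delta_t(b,c)$, which is strictly stronger than the desired inequality since $\delta_t(a,b) \geq 0$.

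Next I would handle the case $c \succ_t b$. Then $a \succ_t c \succ_t b$, so Assumption~\ref{asm:sst} (SST) yields $\delta_t(a,b) \geq \delta_t(a,c) \vee \delta_t(c,b)$. In particular $\delta_t(a,c) \leq \delta_t(a,b)$, and also $\delta_t(c,b) \leq \delta_t(a,b)$. Using the skew-symmetry relation $\delta_t(b,c) = -\delta_t(c,b)$, the latter gives $\delta_t(b,c) \geq -\delta_t(a,b)$. Combining these two bounds,
\begin{equation*}
2\delta_t(a,b) + \delta_t(b,c) \;\geq\; 2\delta_t(a,b) - \delta_t(a,b) \;=\; \delta_t(a,b) \;\geq\; \delta_t(a,c),
\end{equation*}
which is exactly the claimed inequality.

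There is no real obstacle here — the only subtlety is noticing that when $c \succ_t b$, the quantity $\delta_t(b,c)$ is negative, so one cannot apply STI to the triplet $a \succ_t c \succ_t b$ and expect to bound $\delta_t(a,c)$ by $\delta_t(a,b) + \delta_t(b,c)$. That is precisely why the factor of $2$ on $\delta_t(a,b)$ appears: it absorbs the possibly negative contribution $\delta_t(b,c) \geq -\delta_t(a,b)$ coming from SST. The two cases together complete the proof.
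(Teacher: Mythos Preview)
Your proof is correct and essentially identical to the paper's argument: the same case split on $b \succ_t c$ versus $c \succ_t b$, applying STI directly in the first case and SST in the second to obtain $\delta_t(a,c) \leq \delta_t(a,b)$ and $\delta_t(b,c) \geq -\delta_t(a,b)$, then combining.
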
 
\begin{proof}
    Suppose that $b \succ_t c$. Then, the claim follows directly from the stochastic triangle inequality, since $\delta_t(a, c) \leq \delta_t(a, b) + \delta_t (b, c)$. 
    Suppose that $c \succ_t b$. Leveraging strong stochastic transitivity of the triplet $a \succ_t c \succ_t b$, we have 
    $$\delta_t(a, b) \geq \delta_t(a, c) \vee \delta_t (c, b).$$
    This implies that $\delta_t(a, c) \leq \delta_t(a, b)$ as well as $\delta_t(c, b) \leq \delta_t (a, b)$. By definition of the gaps, this also yields $\abs{\delta_t(b, c)} \leq \delta_t (a, b)$, since $c \succ_t b$. Consequently, it holds that $\delta_t(a, c) \leq 2 \delta_t(a, b) + \delta_t(b, c)$. 
\end{proof} 

As briefly discussed in Section~\ref{sec:sign_switches}, these assumptions on the preference sequence $\bP_1, \dots, \bP_T$ allow us to decompose the dynamic regret so that we can compare against a temporarily fixed benchmark.

We can w.l.o.g.\ assume that $a_t^* \succ_t a_t$ and $a_t^* \succ_t \asafe_t$. To see that this assumption is valid, note that $a_t^*$ is the Condorcet winner in round $t$ and it is then easy to verify that Lemma~\ref{lem:triangle_inequality} also holds if $a_t^*$ equals one (or both) of $a_t$ and $\asafe_t$. Applying Lemma~\ref{lem:triangle_inequality} to $a_t^*$, $\asafe_t$ and $a_t$, we have 
\begin{align*}
    \delta_t (a_t^*, a_t) \leq 2 \delta_t( a_t^*, \asafe_t) + \delta_t(\asafe_t, a_t). 
\end{align*} 
Recalling equation \eqref{eq:regret_within_eps} from Section~\ref{sec:appendix_proof_thm}, we then get the following decomposition of the dynamic regret within each episode as
\begin{align*}
     \E\left[ \sum_{t=t_\ell}^{t_{\ell+1}-1} \frac{\delta_t(a_t^*, a_t)  + \delta_t(a_t^*, b_t)}{2} \right] \leq 2 \underbrace{\E\left[ \sum_{t=t_\ell}^{t_{\ell+1}-1} \delta_t(a_t^*, \asafe_t)\right]}_{\termone} + \underbrace{\E\left[ \sum_{t=t_\ell}^{t_{\ell+1}-1} \delta_t(\asafe_t, a_t)\right]}_{\termtwo} .
\end{align*}

\subsubsection{Bounding $\termone$} 
We can bound $\termone$ directly using the definition of Significant CW Switches. By definition of $\asafe_i$ as the last safe arm in phase $[\htau_i, \htau_{i+1})$, i.e.\ the last arm to satisfy \eqref{eq:sig_shift} for some interval $[s_1, s_2] \subseteq [\htau_i, \htau_{i+1})$, it holds that 
\begin{align*}
    \sum_{t=\htau_i}^{\htau_{i+1}} \delta_t(a_t^*, \asafe_i) \leq \sqrt{K (\htau_{i+1}- \htau_i)}. 
\end{align*}
We can then sum over all phases $i\in [\sigS]$ to obtain 
\begin{align*}
    \sum_{t=1}^T \delta_t(a_t^*, \asafe_t) \leq \sum_{i=1}^{\sigS} \sqrt{K (\htau_{i+1}- \htau_i)}. 
\end{align*}

\subsubsection{Bounding $\termtwo$} 
As briefly mentioned in the main text, the difficulty in bounding $\sum_{t=t_\ell}^{t_{\ell+1}-1}\delta_t(a_t^*, a_t)$ for Significant CW Switches is that the identity of the Condorcet winner, i.e.\ $a_t^*$, may change several times within each significant phase $i \in [\sigS]$. This makes accurately tracking $\delta_t(a_t^*, a)$ (nearly) impossible even across small intervals and the arguments that we used to prove Theorem~\ref{thm:main_result} fail. 

In contrast, when we consider the relative regret of $a_t$ against the last safe arm $\asafe_t$ (or sequence thereof), this difficulty can be resolved. Considering $\asafe_t$ (instead of $a_t^*$) as a benchmark ensures that within each phase $i \in [\sigS]$ the comparator arm is fixed, since $\asafe_t = \asafe_i$ for all $t \in [\htau_i , \htau_{i+1})$. Hence, the relative regret w.r.t.\ $\asafe_t$ can still be dealt with. In particular, the proof of Theorem~\ref{thm:main_result} from Section~\ref{sec:appendix_proof_thm} can be seen to hold with minor changes when swapping $a_t^*$ for $\asafe_t$ and considering significant phases $\htau_1, \dots, \htau_{\sigS}$.  
For completeness, we reformulate and prove two important lemmas from Section~\ref{sec:appendix_proof_thm} that relied on properties of $a_t^*$ and $\tau_1, \dots, \tau_{\Sw}$. We want to emphasise that we here again rely on Assumption~\ref{asm:sst} and Assumption~\ref{asm:sti}. 

The following lemma shows that the beginning of a new episode implies a Significant CW Switch, i.e.\ every arm suffers at least \eqref{eq:sig_shift} much regret over some interval within the episode. 

\begin{lem}[Lemma~\ref{lem:phase_intersecting_episodes} for $\sigS$]\label{lem:phase_intersecting_episodes_2}
    On event $\cE$, for each episode $[t_\ell, t_{\ell+1})$ with $t_{\ell+1} \leq T$, there exists a Significant CW Switch $\htau_i \in [t_\ell, t_{\ell+1})$. 
\end{lem}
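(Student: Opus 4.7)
The plan is to adapt the argument of Lemma~\ref{lem:phase_intersecting_episodes} and strengthen its conclusion from ``some Condorcet winner change occurs in $[t_\ell, t_{\ell+1})$'' to ``a Significant CW Switch occurs in $[t_\ell, t_{\ell+1})$.'' As in the original proof, the fact that a new episode begins at $t_{\ell+1} \leq T$ means every arm $a \in [K]$ has been eliminated from $\mA$ at some round $t_\ell^a \in [t_\ell, t_{\ell+1})$, so for each $a$ there exist a witness $a' \in [K]$ and an interval $[s_1(a), s_2(a)] \subseteq [t_\ell, t_\ell^a)$ for which the elimination rule \eqref{eq:elim} triggers. Combining this with the concentration event $\cE$ and with \eqref{eq:hdelta_leq_delta}, exactly as in \eqref{eq:large_regret}, yields
\begin{equation*}
    \sum_{t=s_1(a)}^{s_2(a)} \delta_t(a', a) \;\geq\; c_2 \log(T) K \sqrt{(s_2(a)-s_1(a)) \vee K^2}.
\end{equation*}

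The main technical step is to promote this bound, which is phrased against an arbitrary witness $a'$, into the corresponding bound against the current Condorcet winner $a_t^*$, which is what the definition of Significant CW Switches in \eqref{eq:sig_shift} requires. For this I will invoke Assumption~\ref{asm:sst} together with the fact that $a_t^*$ dominates every other arm at round $t$: a short case analysis (treating $a_t^* \in \{a, a'\}$, the chain $a_t^* \succ_t a' \succ_t a$, and the chain $a_t^* \succ_t a \succ_t a'$ separately) shows $\delta_t(a', a) \leq \delta_t(a_t^*, a)$ in each case. Summing over $t \in [s_1(a), s_2(a)]$ and noting that the right-hand side above dominates $\sqrt{K(s_2(a)-s_1(a))}$, this shows that $[s_1(a), s_2(a)] \subseteq [t_\ell, t_{\ell+1})$ is a valid witness for arm $a$ in the sense of~\eqref{eq:sig_shift}. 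I expect this transitivity step to be the main obstacle, since it is precisely where SST is essential: without such structural assumptions, the elimination rule only detects large relative regret $\delta_t(a', a)$ but provides no direct handle on $\delta_t(a_t^*, a)$, which is why Theorem~\ref{thm:sign_cw_changes} needs Assumption~\ref{asm:sst} in the first place.

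Finally, I close by contradiction. Suppose no $\htau_j$ lies in $[t_\ell, t_{\ell+1})$, so that $[t_\ell, t_{\ell+1}) \subseteq [\htau_i, \htau_{i+1})$ for some $i$. Since $s_2(a) < t_\ell^a \leq t_{\ell+1}-1$ for every arm, setting $s^\star := \max_{a \in [K]} s_2(a) + 1$ gives a round satisfying $\htau_i \leq t_\ell \leq s_1(a) < s_2(a) < s^\star \leq t_{\ell+1}-1 < \htau_{i+1}$ for all $a$. The previous step then guarantees that for every arm $a$ the interval $[s_1(a), s_2(a)]$ witnesses \eqref{eq:sig_shift} within $[\htau_i, s^\star)$, so $s^\star$ itself already satisfies the defining property of the next Significant CW Switch, contradicting the minimality of $\htau_{i+1}$. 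Hence at least one $\htau_i \in [t_\ell, t_{\ell+1})$ must exist, completing the proof.
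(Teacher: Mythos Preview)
Your proof is correct and follows essentially the same approach as the paper: both start from the fact that every arm is eliminated in the episode, pass from the elimination rule to a lower bound on $\sum_t \delta_t(a',a)$ via concentration and \eqref{eq:hdelta_leq_delta}, then upgrade this to $\sum_t \delta_t(a_t^*,a)$ and conclude that every arm satisfies \eqref{eq:sig_shift} on some subinterval of the episode. The only notable difference is that the paper invokes Lemma~\ref{lem:triangle_inequality} (which uses both SST and STI) for the inequality $\delta_t(a',a)\le \delta_t(a_t^*,a)$, whereas your case analysis shows this follows from SST alone, and you spell out the contradiction with the minimality of $\htau_{i+1}$ more explicitly than the paper does.
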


\begin{proof}
    The start of a new episode means that every arm $a\in [K]$ has been eliminated from $\mA$ at some round in $t_\ell^a \in [t_\ell, t_{\ell+1})$. As a result, there must exist an interval $[s_1, s_2] \subseteq [t_\ell, t_\ell^a)$ and some arm $a' \in[K]$ so that the elimination rule \eqref{eq:elim} holds. Using Lemma~\ref{prop:concentration_bound}, we then find that for some constant $c_2 > 0$: 
    \begin{align}\label{eq:large_regret_2}
        \sum_{t=s_1}^{s_2} \E \left[ \hdelta_t(a', a) \mid \cF_{t-1} \right] > c_2 \log (T) K \sqrt{(s_2 -s_1) \vee K^2}. 
    \end{align}
    Note that by construction of $\hdelta_t(a', a)$, we always have $\delta_t(a', a) \geq \E[ \hat{\delta}_t(a', a) \mid \cF_{t-1}]$ since
    \begin{equation}\label{eq:hdelta_leq_delta_2}
        \E[  \hat{\delta}_t(a', a) \mid \cF_{t-1} ] = \begin{cases}
			\delta_t(a', a) & a', a \in \cA_t\\
			-1/2 & \text{otherwise}. 
		\end{cases}
    \end{equation}
    Applying Lemma~\ref{lem:triangle_inequality} to the triplet $(a_t^*, a', a)$, we get that $\delta_t(a_t^*, a) \geq 2 \delta_t(a_t^*, a') + \delta_t(a', a) \geq \delta_t(a', a)$. Thus, \eqref{eq:large_regret_2} tells us that there exists no arm $a \in [K]$ such that for all $[s_1, s_2] \subseteq [t_\ell, t_{\ell+1})$
    \begin{align*}
        \sum_{t=s_1}^{s_2} \delta_t(a_t^*, a) < \sqrt{K(s_2-s_1)}. 
    \end{align*}
    In other words, there is no arm that remains safe to play throughout the episode and there must have been a Significant CW Switch $\htau_i \in [t_\ell, t_{\ell+1})$. 
\end{proof} 
The following lemma ensures that the last safe arm $\asafe_i$ within phase $i$ is not being eliminated before round $\htau_{i+1}$ by any replay $\base(s, m)$ that is scheduled in said phase. 

\begin{lem}[Lemma~\ref{lemma:no_eviction_inside_replay} for $\asafe_t$]\label{lem:no_eviction_inside_replay_2}
    On event $\cE$, no run of $\base(s, m)$ with $s\in [\htau_i, \htau_{i+1})$ ever eliminates arm $\asafe_i$ before round $\htau_{i+1}$. 
\end{lem}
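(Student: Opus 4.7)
The plan is to mimic the contradiction argument of Lemma~\ref{lemma:no_eviction_inside_replay}, but since $\asafe_i$ is not necessarily the Condorcet winner in phase $[\htau_i, \htau_{i+1})$, I cannot directly invoke $\delta_t(a', \asafe_i) \leq 0$. Instead, I will leverage Assumption~\ref{asm:sst} and Assumption~\ref{asm:sti} to reduce the gap against $\asafe_i$ to the gap against $a_t^*$, and then close the argument using the defining property of $\asafe_i$ as a safe arm.

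First, I would suppose for contradiction that some $\base(s,m)$ with $s \in [\htau_i, \htau_{i+1})$ eliminates $\asafe_i$ at some round before $\htau_{i+1}$. By the elimination rule~\eqref{eq:elim}, this gives an arm $a' \in [K]$ and an interval $[s_1, s_2] \subseteq [s, \htau_{i+1})$ with $\sum_{t=s_1}^{s_2} \hdelta_t(a', \asafe_i) > C\log(T) K \sqrt{(s_2-s_1)\vee K^2}$. On event $\cE$, the concentration bound from Lemma~\ref{prop:concentration_bound} together with~\eqref{eq:hdelta_leq_delta} then yields, for a smaller constant $c_2 > 0$,
\begin{equation*}
    \sum_{t=s_1}^{s_2} \delta_t(a', \asafe_i) \;>\; c_2 \log(T) K \sqrt{(s_2-s_1)\vee K^2}.
\end{equation*}

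The key step is then a pointwise comparison $\delta_t(a', \asafe_i) \leq \delta_t(a_t^*, \asafe_i)$ for every $t$, split by cases on the ordering of $a'$ and $\asafe_i$ under $\bP_t$. If $\asafe_i \succ_t a'$, the left-hand side is non-positive while the right-hand side is non-negative (since $a_t^*$ is the Condorcet winner), and the inequality is immediate. If $a' \succ_t \asafe_i$, then $a_t^* \succ_t a' \succ_t \asafe_i$ and SST (Assumption~\ref{asm:sst}) yields $\delta_t(a_t^*, \asafe_i) \geq \delta_t(a', \asafe_i)$ directly. The degenerate cases $a' = \asafe_i$ and $a' = a_t^*$ are trivial. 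Summing over $t \in [s_1, s_2]$ and invoking the defining property of the last safe arm $\asafe_i$ within $[\htau_i, \htau_{i+1})$ gives
\begin{equation*}
    \sum_{t=s_1}^{s_2} \delta_t(a', \asafe_i) \;\leq\; \sum_{t=s_1}^{s_2} \delta_t(a_t^*, \asafe_i) \;<\; \sqrt{K(s_2-s_1)}.
\end{equation*}
For $c_2$ sufficiently large, this contradicts the previous lower bound of $c_2 \log(T) K \sqrt{(s_2-s_1)\vee K^2}$, yielding the claim.

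The only subtle point I foresee is the SST case analysis: one must verify that $a_t^* \succ_t a'$ (so that the triplet $(a_t^*, a', \asafe_i)$ genuinely respects the chain required by Assumption~\ref{asm:sst}), which follows from $a_t^*$ being the Condorcet winner, and that the inequality $\delta_t(a', \asafe_i) \leq \delta_t(a_t^*, \asafe_i)$ is the correct direction to combine with the safety bound. Note that STI (Assumption~\ref{asm:sti}) is not strictly needed for this particular lemma, although it is used elsewhere in the analysis via Lemma~\ref{lem:triangle_inequality}. Everything else is essentially the same bookkeeping as in Lemma~\ref{lemma:no_eviction_inside_replay}.
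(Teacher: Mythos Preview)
Your proposal is correct and follows essentially the same contradiction argument as the paper: assume elimination, invoke the concentration bound to get a large lower bound on $\sum_{t=s_1}^{s_2}\delta_t(a',\asafe_i)$, and contradict it via the safe-arm property of $\asafe_i$. In fact, your explicit use of SST to reduce $\delta_t(a',\asafe_i)\le\delta_t(a_t^*,\asafe_i)$ fills in a step the paper's proof glosses over (the paper asserts the final bound ``by definition of $\asafe_i$'' without spelling out how to pass from an arbitrary comparator $a'$ to $a_t^*$), and your observation that STI is not needed here is also correct.
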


\begin{proof}
Suppose on the contrary that some $\base(s, m)$ with $s \in [\htau_i, \htau_{i+1})$ eliminates arm $\asafe_i$ before round $\htau_{i+1}$. Then, we must have for some arm $a\in[K]$ and interval $[s_1, s_2] \subseteq [s, \htau_{i+1})$ that 
\begin{equation}
    C \log(T) K\sqrt{(s_2-s_1) \vee K^2} < \sum_{t=s_1}^{s_2} \hdelta_t (a, \asafe_i),
\end{equation}
In view of the concentration bound \eqref{eq:concentration_bound}, this implies on event $\cE$ that 
\begin{equation}
    c_2 \log(T) K \sqrt{(s_2-s_1) \vee K^2} < \sum_{t=s_1}^{s_2} \E \left[ \hdelta_t(a, \asafe_i) \mid \cF_{t-1} \right] \leq \sum_{t=s_1}^{s_2} \delta_t(a, \asafe_i), 
\end{equation}
where the last inequality holds by merit of \eqref{eq:hdelta_leq_delta_2}. Now, by the definition of $\asafe_i$ as the last safe arm in phase $i$, it must hold that $\delta_t(a, \asafe_i) < \sqrt{K (s_2- s_1)}$ for all $t \in [\htau_i, \htau_{i+1})$ and all $a \in [K]$. This stands in contradiction to the above which proves Lemma~\ref{lem:no_eviction_inside_replay_2}.  
\end{proof} 

Now, following the same steps as in the proof of Theorem~\ref{thm:main_result} in Section~\ref{sec:appendix_proof_thm}, we obtain for some constant $\tilde c > 0$
\begin{align*}
    \termtwo \leq \tilde c K \log^3 (T) \E \left[\sum_{i\in \phases_{\sigS}(t_\ell, t_{\ell+1})} \sqrt{\htau_{i+1}- \htau_i} \right],     
\end{align*}
where $\htau_{\sigS +1} \coloneqq T$ and $\phases_{\sigS} (t_1, t_2) \coloneqq \{ i \in [\sigS] \colon [\htau_i, \htau_{i+1}) \cap [t_1, t_2) \neq \emptyset \}$. Lastly, in view of the modified Lemma~\ref{lem:phase_intersecting_episodes_2}, it follows that (cf.\ Section~\ref{subsection:summing_over_episodes}) 
 \begin{equation}\label{eq:sig_cws_regret_bound}
    \DR (T) = \E \left[ \sum_{t=1}^T \frac{\delta_t(a_t^*, a_t) + \delta_t(a_t^*, b_t)}{2} \right] \leq \tilde 2 c K \log^3(T) \sum_{i=0}^{\sigS} \sqrt{\htau_{i+1} - \htau_i}. 
\end{equation}
An application of Jensen's inequality shows that $\DR(T) \leq \tO(K \sqrt{\sigS T})$.

\subsection{Proof of Corollary~\ref{cor:total_variation}}
Recall the definition of the Condorcet Winner Variation from Section~\ref{sec:nst_measures}:  
\begin{align*}
    \Vcw \coloneqq \sum_{t=2}^{T} \max_{a\in [K]} \abs{P_t(a_t^*, a) - P_{t-1}(a_t^*, a)} .
\end{align*}
We define the CW Variation over phase $[\htau_{i}, \htau_{i+1})$ as $\Vcw_{[\htau_i, \htau_{i+1})} \coloneqq \sum_{t= \htau_{i}+1}^{\htau_{i+1}} \max_{a\in [K]} \abs{P_t(a_t^*, a) - P_{t-1}(a_t^*, a)}$. Note that in view of the bound in~\eqref{eq:sig_cws_regret_bound}, it suffices to show that $\sum_{i=0}^{\sigS} K \sqrt{\htau_{i+1}-\htau_i} \leq K \sqrt{T} + \Vcw^{1/3} (KT)^{2/3}$.

Consider a phase $[\htau_i, \htau_{i+1})$ with $0 \leq i < \sigS$. By definition of Significant CW Switches, every arm $a \in [K]$ must satisfy on some interval $[s_1, s_2] \subseteq[\htau_{i}, \htau_{i+1})$ that
\begin{align*}
    \sum_{t=s_1}^{s_2} \delta_t (a_t^*, a) \geq \sqrt{K (s_2-s_1)}. 
\end{align*}
In particular, this is also the case for the Condorcet winner $a_{\htau_{i+1}}^*$ in round $\htau_{i+1}$. Then, since $\sqrt{s_2-s_1} > \sum_{t=s_1}^{s_2} \frac{1}{\htau_{i+1}- \htau_i}$, there exists a round $t \in [s_1, s_2]$ such that $\delta_t(a_t^*, a_{\htau_{i+1}}^*) \geq \sqrt{\frac{K}{\htau_{i+1}-\htau_i}}$. We then have 
\begin{align*}
    \sqrt{\frac{K}{\htau_{i+1}-\htau_i}} & \leq \delta_t (a_t^*, a_{\htau_{i+1}}^*) \\
    & \leq \delta_t (a_t^*, a_{\htau_{i+1}}^*) + \delta_{\htau_{i+1}} (a_{\htau_{i+1}}^*, a_t^*) \\[0.5em]
    & \leq \delta_t (a_t^*, a_{\htau_{i+1}}^*) - \delta_{\htau_{i+1}} (a_t^*, a_{\htau_{i+1}}^*) \\[0.5em]
    & \leq \abs{\delta_t (a_t^*, a_{\htau_{i+1}}^*) - \delta_{\htau_{i+1}} (a_{t}^*, a_{\htau_{i+1}}^*)} \\[0.5em]
    & =  \abs{P_t (a_t^*, a_{\htau_{i+1}}^*) - P_{\htau_{i+1}} (a_t^*, a_{\htau_{i+1}}^*)} \\
    & \leq \sum_{s=t+1}^{\htau_{i+1}} \max_{a \in [K]} \abs{P_t(a_t^*, a) - P_{t-1} (a_t^*, a)} \leq \Vcw_{[\htau_i, \htau_{i+1}]},
\end{align*}
where we used that $\delta_{\htau_{i+1}}(a_{\htau_{i+1}}^*, a_t^*) \geq 0$ and $\delta_{\htau_{i+1}}(a_{\htau_{i+1}}^*, a_t^*) = - \delta_t (a_t^*, a_{\htau_{i+1}}^*)$ in the second and third inequality, respectively.  
We can now apply Hölder's inequality to obtain
\begin{align*}
    \sum_{i=0}^{\sigS} K \sqrt{\htau_{i+1}-\htau_i} & \leq K \sqrt{T} +  \sum_{i=0}^{\sigS - 1}   K \sqrt{\htau_{i+1}-\htau_i} \\ 
    & \leq K \sqrt{T} + \left( \sum_{i=0}^{\sigS} \sqrt{\frac{K}{\htau_{i+1}-\htau_i}} \right)^{1/3} \left( \sum_{i=0}^{\sigS} K^{5/4} (\htau_{i+1}-\htau_i) \right)^{2/3} \\ 
    & \leq K \sqrt{T} + \left( \sum_{i=0}^{\sigS} \Vcw_{[\htau_i, \htau_{i+1})} \right)^{1/3} K^{5/6} \, T^{2/3} \\[0.5em] 
    & = K \sqrt{T} + \Vcw^{1/3}  K^{5/6} \, T^{2/3}.
\end{align*}
The above dependence on $K$ can be improved to $K^{4/9}$ (which is even smaller than the $K^{2/3}$ dependence in Corollary~\ref{cor:total_variation}) by modifying the definition of Significant CW Switches so that $\htau_{i+1}$ is the first round in $[\htau_i, T)$ such that for all arms $a \in [K]$ there exist rounds $\htau_i \leq s_1 < s_2 < \htau_{i+1}$ with 
\begin{align*}
    \sum_{t=s_1}^{s_2} \delta_t(a_t^*, a) \geq K \sqrt{s_2 - s_1}. 
\end{align*}
It is straightforward to verify that Theorem~\ref{thm:sign_cw_changes} also holds true for this definition of Significant CW Switches. 


\section{More Related Work}
\label{app:rel_works}

Related to the non-stationary dueling bandit problem studied in this paper are adversarial dueling bandits~\cite{Ailon+14, Adv_DB, ADB, Sui+17}. Here, \cite{Ailon+14} was the first to study the dueling bandit problem in an adversarial setup and introduced a popular sparring idea, which has been picked up by many follow-up works~\cite{Adv_DB, dudik+15, ADB, SahaNDB}.  
The settings in \cite{Ailon+14} and \cite{Adv_DB} are restricted to utility-based preference models, where each arm is has an associated utility in each round. This entails a complete ordering over the arms in each round, which only covers a small subclass of $[K]\times [K]$ preference matrices. Moreover, \cite{Adv_DB} assume that the feedback includes not only the winner but also the difference in the utilities between the winning and losing arm, which is more similar to MAB feedback and than the $0/1$ one bit preference feedback considered by us. \cite{ADB} consider the dueling bandit setup for general adversarial preferences, but they measure performance in terms of (static) regret w.r.t.\ \emph{Borda-scores}. 
This measure of regret is very different from our preference-based regret objective. 
In general, the adversarial dueling bandit problem aims to minimize \emph{static regret} w.r.t.\ some fixed benchmark $a^*$, whereas we study \emph{dynamic regret} w.r.t.\ a time-varying benchmark $a_t^*$. As discussed in Section~\ref{sec:prob}, static regret can be an undesirable measure of performance when no single fixed arm represents a reasonably good benchmark over all rounds (see Example~\ref{eg:dyn_reg}).   

Another somewhat related line of work considers the sleeping dueling bandit problem, where the action space is non-stationary (as opposed to the preference sequence). The objective here is to be competitive w.r.t.\ the best active arm at each round. \cite{SDB} studies the setup for adversarial sleeping but assumes a fixed preference matrix across all rounds.



\end{document}